\documentclass[final]{article}

\usepackage[utf8]{inputenc}
\usepackage[T1]{fontenc}
\usepackage{kr}

\usepackage[center]{caption}
\usepackage[dvipsnames]{xcolor}
\usepackage{amsmath, amsfonts, amsthm, amssymb}
\usepackage{thmtools,xspace}

\usepackage[top=1in, bottom=1in, left=1in, right=1in]{geometry}
\usepackage[alphabetic,lite,backrefs]{amsrefs} 
\usepackage[backref=page, colorlinks=true, allcolors=blue]{hyperref}
\usepackage{cleveref}
\usepackage{comment}
\usepackage{tikz}
\usepackage{placeins}
\usepackage{multicol, wrapfig}
\usepackage{ulem}
\usetikzlibrary{graphs}
\usepackage{changepage}
\usepackage{mathrsfs}
\usepackage{tikz-cd}
\usepackage{mathtools} 
\usepackage{bm} 
\usepackage{comment} 
 \usetikzlibrary{positioning, shapes.geometric, arrows.meta, quotes}
\usepackage[shortlabels]{enumitem} 

\usepackage[colorinlistoftodos,prependcaption,textsize=scriptsize, obeyFinal]{todonotes}

\usepackage{dblfloatfix}
\usepackage{pifont}

\usepackage{booktabs}

\tikzstyle{arg}=[
    circle,
    minimum size =5mm,
    draw=black,
    thick,
    style={font=\small},
]
\tikzstyle{arg2}=[
    circle,
    minimum size =2mm,
    draw=black,
    thick,
    style={font=\footnotesize},
]

\tikzstyle{dot}=[
    circle,
    fill,
    minimum size =4mm,
    on grid,
]
\tikzstyle{dot2}=[
    circle,
    fill,
    style={font=\tiny},
    on grid,
    minimum size=0.1mm,
]
\tikzstyle{lab}=[
    style={font=\normalsize},
]

\tikzstyle{oval}=[
    ellipse, 
    text width = 2cm,
    minimum height=1cm,
    draw,
    align = center,
]

\tikzstyle{ov}=[    
    ellipse, 
    draw,
    align = center,
]

\newcommand\ForallWidget[4]{
    \node (center#1) [right of =#2]{};
    \node[ov] (x#1) [above =7.5mm of center#1] {$x_{#1}$};
    \node[ov] (notx#1) [below =7.5mm of center#1] {$\neg x_{#1}$};
    \node[lab] (allx) [above = 7.5mm of x#1] {$\forall x_{#1}$};

    \draw[->] (x#1) to (#3);
    \draw[->] (x#1) to (#4);
    \draw[->] (notx#1) to (#3);
    \draw[->] (notx#1) to (#4);
    \draw[<->] (x#1) to (notx#1);
}

\newcommand\ExistsWidget[4]{
    \node (center#1) [right  of =#2]{};
    \node[ov] (x#1) [above =7.5mm of center#1] {$x_{#1}$};
    \node[ov] (notx#1) [below =7.5mm of center#1] {$\neg x_{#1}$};
    \node[lab] (allx) [above = 7.5mm of x#1] {$\exists x_{#1}$};

    \draw[->] (x#1) to (#3);
    \draw[->] (x#1) to (#4);
    \draw[->] (notx#1) to (#3);
    \draw[->] (notx#1) to (#4);
    \draw[<->] (x#1) to (notx#1);
}

\newcommand\SwitchWidget[3]{
    \node[dot] (switch) [right of = #1]{};

    \draw[->] (switch) to (#2);
    \draw[->] (switch) to (#3);
}
\newcommand\SwitchWidgetWithLabel[4]{
    \node[arg, fill= gray!30, shape=rectangle, rounded corners ] (switch) [right of = #1]{$b_{#4}$};

    \draw[->] (switch) to (#2);
    \draw[->] (switch) to (#3);
}

\newcommand\PhiWidget[3]{
    \node[arg] (phi) [right of = #1]{$\phi$};

    \draw[->] (phi) to (#2);
    \draw[->] (phi) to (#3);
}

\makeatletter
\providecommand\@dotsep{5}
\renewcommand{\listoftodos}[1][\@todonotes@todolistname]{%
  \@starttoc{tdo}{#1}}
\makeatother

\theoremstyle{plain}
	\newtheorem{theorem}{Theorem}[section]
	\newtheorem{lemma}[theorem]{Lemma}
    \newtheorem{corollary}[theorem]{Corollary}
    \newtheorem{proposition}[theorem]{Proposition}

\theoremstyle{definition}
    \newtheorem{defn}[theorem]{Definition}
    \newtheorem*{defn*}{Definition}
    \newtheorem{example}[theorem]{Example}
    \newtheorem*{example*}{Example}
    
\theoremstyle{remark}
	
	\newtheorem*{remark*}{Remark}

    \newtheorem*{claim*}{Claim}

\usepackage{mathtools}

\DeclarePairedDelimiterX\Set[1]\{\}{#1}

\AtBeginEnvironment{example}{%
  \pushQED{\qed}%
}
\AtEndEnvironment{example}{\popQED\endexample}

\newcommand{\mc}[1]{\mathcal{#1}}

\def\wadm{wad_{bbu}}

\renewcommand{\S}{\mathcal{S}}

\def\sl{^{sim}}
\def\th{^{th}}

\def\L{{\mc{L}}}

\def\U{{\mathcal U}}
\def\F{{\mathcal F}}
\def\G{{\mathcal G}}

\def\att{\rightarrowtail}

\def\I{Pro}
\def\II{Opp}
\def\dom{\mathrm{dom}}

\def\tin{\texttt{in}\xspace}
\def\tout{\texttt{out}\xspace}
\def\tun{\texttt{undec}\xspace}

\def\ten{ten}
\def\strten{ten^{str}}
\def\staten{ten_{sta}}
\DeclareMathOperator{\cred}{cred}

\setlist[enumerate]{leftmargin=0.6cm}

\newcommand{\repeatcaption}[2]{%
  \renewcommand{\thefigure}{\ref{#1}}%
  \captionsetup{list=no}%
  \caption{#2 (repeated from page \pageref{#1})}%
  \addtocounter{figure}{-1}
}

\def\Args{F}
\def\Att{\att}
\def\AF{(\Args,\Att)}
\def\cog{\text{cog}}
\def\cf{\text{cf}}

\def\cycog{\text{cycog}}

\title{Tenability and Weak Semantics: Modeling Non-uniform Defense -- Extended Version}

\author{%
Uri Andrews $^1$\and
Luca San Mauro $^2$\and
John Spoerl $^1$\\
\affiliations
$^1$Department of Mathematics, University of Wisconsin--Madison \\
$^2$Dipartimento di Ricerca e Innovazione Umanistica,
University of Bari, Italy\\
\emails
andrews@math.wisc.edu,
luca.sanmauro@gmail.com,
john.spoerl@wisc.edu
}

\begin{document}

\maketitle

\begin{abstract}

In Dung-style abstract argumentation, various semantics capture notions of acceptability of arguments. The admissibility semantics capture the notion that an argument can be consistently defended from any potential counterargument.
Weak semantics often relax the demands of admissibility by restricting which counterarguments must be taken seriously (e.g., discounting self-defeating or otherwise incoherent attacks). Many prominent proposals for weak semantics remain extension-based in a stronger sense. While these semantics discount attacks from arguments which are considered unreasonable, they still require a uniform defense against all reasonable arguments, even if they are collectively inconsistent.
This uniformity can be too demanding when defensibility is inherently strategic, and thus the appropriate reply depends on the opponent’s line of attack.

We introduce \textit{tenability}, a family of dialogue-based semantics that formalize when a designated argument (or a set of arguments) can be maintained in debate by a proponent against any conflict-free attack which the opponent may present. The approach is motivated by three natural benchmark patterns: self-defeating attack, floating assignment, and disjunctive reinstatement, on which
tenability behaves differently from all weak semantics previously considered in the literature.
We define three variants---static tenability, tenability, and strong tenability---via monotone commitment games over finite conflict-free moves, differing in the obligations imposed on the disputants. We establish the relative strength of these notions, prove implications and separations with previously studied weak semantics, and we analyze computational complexity on finite frameworks: deciding static tenability is $\Pi^P_2$-complete, while deciding tenability and strong tenability is $PSPACE$-complete.
\end{abstract}
    \begin{figure}[b!]\centering
        \begin{tikzpicture}[node distance =15mm, thick]

        \draw[very thick] (-2,-0.5) rectangle (6,3.25);
        \draw[very thick] (-2,-0.5) rectangle (2.25,1.75);
        \draw[very thick] (-2,1.75) rectangle (2.25,3.25);
            
            \node[arg] at (0,0) (af) {$a$}; 
            \node[arg] (bf1) [above left of=af] {$b_1$};
            \node[arg] (bf2) [above right of= af] {$b_2$};
            \draw[->] (bf1) to (af);
            \draw[->] (bf2) to (af);
            \draw[<->] (bf1) to (bf2);
            \node at (-1.7,-0.25) (F) {$\F$};
            
            \node[arg] (as)[above of = bf1] {$a$};
            \node[arg] (bs) [above of = bf2] {$b$};
            \draw[->] (bs) to (as);
            \draw[->] (bs) to [in = 45, out = 315, looseness = 5] (bs);
            \node at (-1.7, 3) (S) {$\S$};

            \node[arg] at (4,0) (au) {$a$}; 
            \node[arg] (bu1) [above left of=au] {$b_1$};
            \node[arg] (bu2) [above right of=au] {$b_2$};
            \node[arg] (cu1) [above of = bu1] {$c_1$};
            \node[arg] (cu2) [above of = bu2] {$c_2$};
            \draw[->] (bu1) to (au);
            \draw[->] (bu2) to (au);
            \draw[<->] (bu1) to (bu2);
            \draw[->] (cu1) to (bu1);
            \draw[->] (cu2) to (bu2);
            \draw[<->] (cu1) to (cu2);
            \node at (2.6,-0.25) (U) {$\U$};
        \end{tikzpicture}
        \caption{$\S$: Self-Defeating Attack\\ $\F$: Floating Assignment\\ $\U$: Disjunctive Reinstatement}
        \label{fig: main examples}
    \end{figure}

\section{Introduction}

Dung's abstract argumentation frameworks (AFs)~\cite{Dung1995} have become a core formalism for defeasible reasoning in knowledge representation~\cite{BenchCaponDunne2007,RahwanSimari2009}. An AF represents arguments as abstract entities related by an attack relation; a semantics then determines which sets of arguments (extensions) can be collectively maintained. A foundational role in this landscape is played by \textit{admissibility}-based semantics: a set should be internally consistent (conflict-free) and able to withstand counterarguments by defending each of its members against \textit{every} attacker. This uniform defense ideal underlies the standard Dung-style semantics (complete, grounded, preferred, stable) and their numerous refinements~\cite{Dung1995,BaroniCaminadaGiacomin2011}.

At the same time, it is equally well known that strict admissibility may be overly conservative in the presence of pathological attack patterns, most notably those featuring self-defeating arguments and odd cycles. In such configurations, an attack may arguably fail to constitute a serious challenge precisely because its source undermines itself, yet admissibility treats it on par with ordinary attacks~\cite{BodanzaTohme2009,BaroniGiacomin2007}. This and related concerns have led to a large and still expanding body of work proposing alternatives that weaken or reshape the admissibility requirement, yielding a landscape of semantics that is both broad and structurally diverse~\cite{BaroniGiacomin2007,BaroniGiacomin2009,BaroniCaminadaGiacomin2011,BaroniEtAlHandbook2018,vanDerTorreVesic2017}. Classic examples include non-admissible or cycle-sensitive proposals (e.g.\ stage-style or semi-stable variants)~\cite{Verheij1996,CaminadaCarnielliDunne2012} as well as resolution-based families designed to tame problematic cyclic structures while retaining key principles~\cite{BaroniDunneGiacomin2011}.

Within this broad spectrum, \textit{weak semantics} aim to preserve the defense-based intuition while refining which attacks count or how they must be responded to. For orientation, we recall three influential lines that will serve as comparison points throughout the paper.

\begin{itemize}
\item 
Cogency-based approaches start from a comparative idea: extensions are ordered by a binary \textit{at least as cogent as} relation $X\succeq Y$, which captures that $X$ defends itself against attacks from $Y$.
In the standard cogency case, this admits an equivalent, more operational reading: selecting the undominated extensions amounts to retaining a Dung-style notion of defence while discounting self-defeating attacks.~\cite{BodanzaTohme2009,BodanzaTohmeSimari2016,BlumelUlbricht}.

\item Weak admissibility weakens defense \textit{recursively} via reduct-style constructions that filter which attackers remain relevant once a candidate position is assumed~\cite{BaumannBrewkaUlbricht2020}. This yields semantics that mitigate odd-cycle effects while staying close in spirit to defense-based acceptability, and has already prompted further extensions and complexity analyses~\cite{DvorakUlbrichtWoltran2022,Dauphin2020APA}.

\item Weakly complete semantics revise Dung's complete labelings by relaxing the propagation of undecidedness: under suitable conditions, an argument may be accepted even if some attackers remain undecided, via the mechanisms of \textit{undecidedness blocking} or \textit{propagation}~\cite{DondioLongo2021}. This provides a principled way to model acceptance under unresolved counterevidence and connects naturally to ambiguity-blocking intuitions.
\end{itemize}

Despite their differences, the above approaches share a common structural commitment typical of extension and labeling-based semantics: acceptability is witnessed by a \textit{single} stance that must be adequate against \textit{all} attacks that the semantics deems relevant, \textit{simultaneously}. This ``uniform witness'' pattern is also reflected in classical proof-theoretic and dialogical characterizations of argumentation semantics~\cite{VreeswijkPrakken2000,Prakken2005,CaminadaWu2009,McBurneyParsons2009}.
However, in many debates and evidential settings, defensibility is inherently \textit{strategic}: the appropriate reply depends on which consistent line of attack an opponent commits to, and mutually incompatible challenges need not be met \textit{jointly} by one fixed, consistent position. Requiring a single stance that neutralizes every consistent challenge at once can therefore be too demanding.

This paper develops \textit{tenability}, a family of semantics designed to capture \textit{non-uniform} defendability. The key conceptual shift is from \textit{there exists one stance answering all relevant attacks} to \textit{for every consistent attack, there exists an appropriate response}.
We formalize this idea through monotone commitment protocols in which players build finite conflict-free positions by increasing moves, yielding three notions with increasing dialogical structure: \textit{static tenability}, \textit{strong tenability}, and \textit{tenability}. These notions are motivated and contrasted on a small set of benchmark patterns in Section~\ref{sect:motivation}, formally defined in Section~\ref{sect: notions of tenability}, and studied from a mathematical and computational perspective in Section~\ref{sec:math}. Section~\ref{sec:other semantics} positions tenability within the weak-semantics landscape and highlights the distinctive role of \textit{non-uniform} defense, while Section~\ref{sec: discussion} discusses limitations and directions for further work.

\section{Motivation}\label{sect:motivation}

Our starting point is a simple query: When is a designated argument $a$ defensible in a dispute?
In Dung-style abstract argumentation, defensibility is most often verified through a \textit{uniform witness}:
acceptance is certified by a single stance (an extension or a labeling) that must be adequate against
all counterarguments deemed relevant by the semantics, simultaneously.
This uniformity is also reflected in standard dialogical and game-theoretic characterisations of Dung-style semantics,
where acceptance is tied to the existence of a (winning) defense that is robust across all opponent choices \cite{VreeswijkPrakken2000,Prakken2005}.

In this paper we isolate a complementary, strategy-oriented reading: \textit{defensibility is the ability to answer any consistent line of attack}, possibly with different replies against incompatible challenges.
This motivates \textit{tenability} as a semantics of \textit{non-uniform defense}.

To make the issue concrete, we focus on three small benchmark patterns shown in Figure~\ref{fig: main examples}:
\textit{self-defeating attack} $\mathcal{S}$, \textit{floating assignment} $\F$, and \textit{disjunctive reinstatement} $\U$.
Across these benchmarks, we test whether $a$ can be maintained against reasonable challenges.
In an abstract AF, a minimal proxy for a reasonable line of challenge is legitimacy:
an opponent should not be allowed to press, as a single line of contestation, a bundle of mutually incompatible arguments.
Accordingly, we treat a legitimate attack as a conflict-free set of arguments advanced as one line of challenge.

\subsubsection*{Self-defeating attack ($\mathcal S$).}
In $\mathcal S$, the only attacker of $a$ is self-defeating.
Such patterns have long been recognised as problematic for classic admissibility and related semantics:
if an attacker undermines itself (directly or via odd-cycle phenomena), it is unclear why it should count as a decisive challenge
\cite{Dung1995,BodanzaTohme2009,BaumannBrewkaUlbricht2020,DondioLongo2021}.
Under a consistency constraint, a self-defeating argument cannot constitute a reasonable attacking move.
This aligns with the guiding intuition behind several weak-semantics proposals that discount self-defeating (or otherwise
problematic) attacks when assessing defense \cite{BodanzaTohme2009,BaumannBrewkaUlbricht2020,DondioLongo2021}.

\subsubsection*{Floating assignment ($\F$).}
The framework $\F$ exhibits a different phenomenon.
Here $a$ is attacked by two incompatible counterarguments $b_1$ and $b_2$ (they attack each other).
While $\{b_1,b_2\}$ is not a legitimate line of attack, each singleton $\{b_1\}$ and $\{b_2\}$ \textit{is} legitimate.
Yet $a$ has no satisfactory reply to either: any attempt to counterattack $b_1$ must employ $b_2$ as defender, and vice versa,
but either choice immediately destroys consistency with $a$ itself. 
So $\F$ captures the principle that a position is \textit{untenable} whenever the opponent can mount a legitimate challenge that the proponent cannot answer
even in isolation.
This benchmark is widely discussed in connection with ambiguity and standards of proof in legal-style readings,
where incompatible testimonies may or may not suffice for a verdict depending on the evidentiary threshold
(see, e.g., \cite{Horty2001}).
For tenability, the decisive point is structural: inability to answer a consistent singleton challenge already voids defensibility.

\subsubsection*{Disjunctive reinstatement ($\U$).}
The crucial benchmark for our purposes is $\U$.
As in $\F$, the designated argument $a$ faces two incompatible attackers $b_1$ and $b_2$.
Unlike $\F$, each attacker can be met by  independently consistent defenses: $c_1$ neutralises $b_1$ and $c_2$ neutralises $b_2$,
and each of $\{a,c_1\}$ and $\{a,c_2\}$ is conflict-free.
What fails is precisely uniform defense:
there is no single conflict-free set extending $\{a\}$ that counterattacks \textit{both} $b_1$ and $b_2$ at once, since $c_1$ and $c_2$ are themselves incompatible.
This illustrates the gap between a uniform-witness reading of defense and a strategic reading. Under a strategic reading, once the opponent commits to a consistent line of challenge, the proponent may tailor her reply to that line. Thus $a$ can be maintained even though it is not uniformly defended by any fixed conflict-free stance. 

\smallskip

This gap also exposes an asymmetry that can arise when uniformity is imposed at the level of witnessing stances:
the proponent is required to provide a single consistent ``story'' that addresses mutually incompatible accusations jointly,
even when no rational opponent would be entitled to maintain those accusations together as one consistent line.
We refer to this as the \textit{Master versus Ignoramus} effect:
the proponent is treated as a master who must answer every objection within one account,
while the opponent enjoys the freedom of switching between incompatible objections without committing to their joint consistency.

To make the strategic reading more intuitive, consider the following toy narrative modeled by $\U$.
\begin{quote}\small
\textit{Miss Scarlett is accused of a crime that occurred in the ballroom. The following testimonies are available as evidence.}
\end{quote}
\begin{enumerate}
    \item[$a$:] \textit{Miss Scarlett claims she was not at the scene at the time of the crime.}
    \item[$b_1$:] \textit{Colonel Mustard claims he saw Miss Scarlett leaving the west exit of the library at the time of the crime.}
    \item[$b_2$:] \textit{Mr.\ Green claims he saw Miss Scarlett leaving the east exit of the library at the time of the crime.}
    \item[$c_1$:] \textit{Mrs.\ White testifies that Colonel Mustard and Professor Plum were with her in the lounge throughout the evening.}
    \item[$c_2$:] \textit{Professor Plum testifies that Mr.\ Green and Mrs.\ White were with him in the study throughout the evening.}
\end{enumerate}

If the opponent challenges $a$ by committing to the line $\{b_1\}$, the proponent can respond with $\{c_1\}$;
if the opponent instead commits to $\{b_2\}$, the proponent can respond with $\{c_2\}$.
Crucially, in $\U$ the opponent is not entitled to press $\{b_1,b_2\}$ as a \textit{single} consistent line of attack,
since $b_1$ and $b_2$ are mutually incompatible.

\smallskip

Taken together, $\mathcal S$, $\F$, and $\U$ isolate three desiderata for a tenability-style notion of defensibility:
\begin{enumerate}
    \item[$1.$] Inconsistent challenges should not by themselves undermine defensibility ($\mathcal S$).
    \item[$2.$] Every consistent line of attack must be answerable on its own; if some consistent attack admits no consistent reply, the position is untenable ($\F$).
    \item[$3.$] The defense may depend on the opponent's consistent line of challenge; incompatible attacks need not be rebutted jointly by a single consistent stance ($\U$).
\end{enumerate}

These desiderata motivate our three tenability notions introduced next.
\textit{Static tenability} captures the one-shot strategic idea already visible in $\U$:
for every consistent challenge $B$ attacking a position $A$, there exists a conflict-free extension $C\supseteq A$ that counters every attack from $B$.
\textit{Strong tenability} and \textit{tenability} lift this to dynamic disputes via monotone commitment protocols:
the proponent must maintain and defend her accumulated commitments over time. Formal definitions are presented in Section~\ref{sect: notions of tenability}.

\section{Notions of Tenability}\label{sect: notions of tenability}

We now formalize \textit{tenability} as a family of weak semantics grounded in the strategy-based reading developed in Section~\ref{sect:motivation}:
defensibility is the ability to answer any consistent line of attack, possibly with different replies against incompatible challenges.
Technically, we capture this via monotone commitment protocols over finite conflict-free moves.

We introduce three notions: \textit{static tenability} (Section \ref{subsec: static tenability}), \textit{strong tenability} (Section \ref{subsec: strong tenability}), and \textit{tenability} (Section \ref{subsec: tenability}).
Tenability is our main target; the other two notions are introduced first because they isolate key components of non-uniform defense and help motivate the design choices in the final definition.

    We fix some terminology regarding argumentation frameworks. An argumentation framework $\F=(F,\att)$ will always be identified with its set of arguments $F$, i.e., we write $a\in \F$ to mean $a\in F$. 
    \begin{defn}\label{def: background definitions}
        Given an argumentation framework $\F$, argument $a\in \F$, and sets of arguments $A,B\subseteq \F$, we say:
        \begin{itemize}
            \item  $a$ \textit{attacks} a set of arguments $B$ ($a\att B$) if $(\exists b\in B)[a\att b]$;
            \item $A^+=\Set{x\in \F\ |\ (\exists a\in A)[ a \att x]}$;
            \item $A$ \textit{is as cogent as} $B$, written $A\succeq B$, if $A$ is conflict-free and every $b\in B$ which attacks $A$ is counterattacked by some argument in $A$: $(\forall b\in B)[b\att A\implies b\in A^+]$;
            \item $B$ \textit{is more cogent than} $A$ ($B\succ A$) if $B$ is as cogent as $A$, but $A$ is not as cogent as $B$.
        \end{itemize}
    \end{defn}

    \subsection{Static Tenability}\label{subsec: static tenability}
        One possible way of understanding the debate over $a\in \U$ is to recognize that for every consistent attack on $a$, there is a consistent extension containing $a$ which defends itself against said attack. In other words, a naïve defense strategy is one that has a reply for every set of incoming attacks. Thus, the minimal working definition to capture tenability is the following:

        \begin{defn}
            A set of arguments $A$ is \textit{statically tenable} if for every finite conflict-free $B$, there is a conflict-free $C\supseteq A$ which is as cogent as $B$.
        \end{defn}

        As a preliminary test, we analyze the examples $\S,\, \F,$ and $\U$ from Figure \ref{fig: main examples}. Static tenability accepts $a\in \S$ because there is no conflict-free attack to defend against, and rejects $a\in \F$ because there is a conflict-free $B=\Set{b_1}$ for which there is no defense. In the case of $\U$, there are $4$ conflict-free sets which attack $\Set{a}$. Namely, $\Set{b_1},\Set{b_1,c_2},\Set{b_2},$ and $\Set{b_2,c_1}$. The first two are handled by extending $\Set{a}$ to $\Set{a,c_1}$. The latter two are handled by extending $\Set{a}$ to $\Set{a,c_2}$. In any case, the proponent of $\Set{a}$ has a strategy to counter any consistent attack on her beliefs.

        This interpretation of tenability has some advantages and some drawbacks. The definition is concise and straightforward and represents a preliminary or heuristic check that an agent may apply to its beliefs.
        However, this definition does leave much to be desired. It merely represents the first exchange of a debate; it is a minimal requirement for an agent to defend its beliefs against incoming attacks. In a more realistic exchange, the attacker would continue its charge, bringing forth further arguments against the defending set. Consider for example the argumentation framework $\mathcal{E}$ (Figure \ref{fig: static tenable, not strongly tenable}). The set $\Set{a}$ can be checked to be statically tenable. However, in an extended dispute, $a$ has no adequate defense, hence should not be classified as tenable.

        The extended dispute may go as follows: the opponent to $a$ proposes $b_1$. In order to counterattack $b_1$, the proponent must extend its position by accepting $c_1$ or $c_2$. If she accepts $c_1$, then the opponent continues their attack with $b_3$. There is no way for the proponent to counter the attack $b_3$ without contradicting the positions she has already assumed during the debate. On the other hand, if she accepts $c_2$, then the opponent continues their attack with $b_2$. There is no consistent rebuttal to $\Set{b_1, b_2}$ which extends $\Set{a,c_2}$. The tables below are a representation of these debates as two player games between the proponent (\textit{Pro}) and her opponent (\textit{Opp}). 
        
    \begin{figure}[t]\centering
        \begin{tikzpicture}[thick, style= {scale=1.1}] 
            \node[arg] at (0, 0) (a){$a$};
            \node[arg] at (-1,1) (b1){$b_1$};
            \node[arg] at (0.5,1) (b2){$b_2$};
            \node[arg] at (1.5,1) (b3){$b_3$};
            \node[arg] at (-1.5,2) (c1){$c_1$};
            \node[arg] at (-0.5,2) (c2){$c_2$};
            \node[arg] at (0.5,2) (c3){$c_3$};
            \node[arg] at (1.5,2) (c4){$c_4$};
            
            
            \draw[->] (b1) to (a);
            \draw[->] (b2) to (a);
            \draw[->] (b3) to (a);
            \draw[<->] (b2) to (b3);
            \draw[->] (c1) to (b1);
            \draw[->] (c2) to (b1);
            \draw[->] (c3) to (b2);
            \draw[->] (c4) to (b3);
            \draw[<->] (c1) to [out= 45, in =135] (c4);
            \draw[<->] (c2) to [out= 45, in =135] (c3);
        \end{tikzpicture}
        \caption{An argumentation framework  $\mathcal{E}$ in which $\Set{a}$ is statically tenable, but not strongly tenable.}
        \label{fig: static tenable, not strongly tenable}
    \end{figure}

    \begin{figure}[h]
        \centering
        \begin{tabular}{c|cc}
            $\I$ & $\Set{a}$ & $\Set{a,c_1}$ \\\hline
            $\II$ & $\Set{b_1}$ & $\Set{b_1, b_3}$
        \end{tabular}\hspace{0.25cm}
        \begin{tabular}{c|cc}
            $\I$ & $\Set{a}$ & $\Set{a,c_2}$ \\\hline
            $\II$ & $\Set{b_1}$ & $\Set{b_1, b_2}$
        \end{tabular}        
            \label{fig: G game table}
    \end{figure}

    The important aspect we are highlighting is that the tenability of $\Set{a}$ is conditional upon having a \textit{winning strategy} in a certain dispute about $\Set{a}$. Arguments and debates involve back-and-forth, so tenable arguments should be maintainable beyond the first round of attacks. We thus isolate precisely which disputes are relevant and give \textit{dynamic} definitions for tenability.

\subsection{Strong Tenability}\label{subsec: strong tenability}

    \begin{defn}[Strong Tenability]
         Let $\F$ be an AF. A \textit{strong tenability dispute} on $\F$ is a sequence of subsets $(X_0,X_1,..., X_n)$  of $\F$, of which the even moves $(X_{2i})$ are $\I$  moves (for Proponent), and the odd moves $(X_{2i+1})$ are $\II$  moves (for Opponent), such that:
        \begin{enumerate}[(1)]
            \item each $X_i$ is conflict-free;
            \item $X_i\subseteq X_{i+2}$ for all $i$;
            \item $X_{1}$ and each $(X_{i+2}\setminus X_i)$ are finite;\footnote{Note that the definition allows for $X_0$ to be infinite, while all subsequent attacks and defenses may only introduce finitely many new arguments. Formalizing strong tenability disputes and tenability disputes in this manner facilitates the future study of tenability semantics in infinite Argumentation Frameworks. See section \ref{subsec: tenability infinite setting}.}
            \item every $\I$ move, $X_{2n}$, is as cogent as the preceding $\II$ move $X_{2n+1}$, i.e., $X_{2n}\succeq X_{2n-1}$;
            \item every $\II$ move, $X_{2n+1}$, has an attacker of the preceding  $\I$ move, $X_{2n}$, which is not counterattacked by $X_{2n}$, i.e., $X_{2n+1}\not\preceq X_{2n}$.
        \end{enumerate}
        The \textit{strong tenability game} is the alternating move game between $\I$ (or Proponent) and $\II$ (or Opponent) where the legal positions are the strong tenability disputes.
        
        A strong tenability dispute $(X_0,X_1,..., X_n)$ has \textit{concluded} if there is no $X_{n+1}$ such that $(X_0,X_1,..., X_{n+1})$ is a strong tenability dispute. A concluded strong tenability dispute is \textit{won} by the player making the last move $X_n$. A \textit{strategy for $\I$} is a function mapping each strong tenability dispute of even length to a legal move for $\I$, i.e. mapping $(X_0,\dots ,X_{2n+1})$ to some set $X_{2n+2}$ so that $(X_0,\dots X_{2n+1},X_{2n+2})$ is also a strong tenability dispute. We say that a strategy is a \textit{winning strategy for $\I$} if every concluded dispute where $\I$'s moves are determined by the strategy is won by $\I$. 
        A set $A\subseteq\F$ is \textit{ strongly tenable} if $\I$ has a winning strategy beginning with $X_0=A$. 
        Note that a winning strategy need not lead to a concluded dispute. Rather, if the strategy leads to an infinite sequence of moves, we consider this a victory for $\I$, since she has successfully defended $X_0=A$ against successive attacks.
    \end{defn}
    
    In the case that $\II$ plays last in a concluded dispute, this means that $\II$ has played a set with an attacker of $\I$'s last move which is not consistently counterattacked by any collection of arguments $I$ has played or can play. In the case that $\I$ plays last, $\I$ may only need to play a set of arguments `as good as' any further counterarguments. In a debate this may look like the opponent realizing he has no strictly better counterarguments, so he resigns his assault upon $A$. Thus strong tenability semantics, following the lead of Dung, are also guided by the principle ``the one who has the last word laughs best'' \cite[p. 322]{Dung1995}.
A further note is that backtracking is omitted as part of the dispute: $A$'s strong tenability being defined in terms of a winning strategy means that if backtracking is necessary for a player, then that player loses the run of the dispute and a winning strategy would avoid such plays. 

    Compare the strong tenability dispute to the argument game for admissibility (see \cite{VreeswijkPrakken2000}) where there is more asymmetry in the roles of proponent and opponent. The proponent must maintain consistency while the opponent need not (`Master versus Ignoramus'). It is exactly this asymmetry which is addressed through the family of tenability semantics. Indeed it underscores how admissibility requires a uniform response to challenges, whereas tenability does not. However, there is an asymmetry to our definition that persists:  the opponent does not need to defend the arguments they put forth. $\I$ must defend all of the arguments it incorporates into the dispute, while $\II$ may leave loose ends. For this reason, we find it plausible (indeed likely) that a rational agent defending its position $a$ will disregard the opponent's evidence if it is not also defended in the course of the dispute. Consider again the Strong Tenability dispute on $\mathcal{E}$, in the form of a dialogue:

    \begin{quote}
    \textit{$\I$: I propose $a$.\\
        $\II$: Well $b_1$ is true, so $a$ is not.\\ 
        $\I$: If $b_1$ is the position you hold, then I will defend $a$ by adopting $c_1$.\\
        $\II$: I now propose $b_3$, another reason $a$ cannot hold.\\
        $\I$: Now hold on, you haven't defended your own position $b_1$. I don't believe you are proposing your arguments in good faith. Thus I reject this threat upon $a$.}
    \end{quote}

    The lesson we take from this is that a rational agent may not  be forced to abandon her positions in the face of bad faith arguments. To be swayed from the original position, we should require the opponent to play by the same rules. If admissibility is captured by a game of `Master vs Ignoramus', then the strong tenability dispute is one of \textit{Master versus Novice} where the novice need not defend his positions, yet is expected to not take explicitly contradictory positions. We next define tenability in order to model a dispute of \textit{Master versus Master}.

\subsection{Tenability}\label{subsec: tenability}
    In addition to the desiderata suggested by $\F,\S,$ and $\U$, our investigation has yielded further refinements for an adequate notion of tenability:
    \begin{enumerate}[(i)]
        \item The proponent and opponent are allowed to progress the dispute by extending their current stance (accepting further arguments);
        \item Both proponent and opponent must argue \textit{in good faith}, that is, they must defend the positions they've proposed during the dispute;
        \item An argument is tenable when the proponent has a (possibly non-uniform) rebuttal to all lines of attack from the opponent, i.e. has a winning strategy.
    \end{enumerate}

    We now propose a refined definition of tenability incorporating these criteria. 
    \begin{defn}[Tenability]
        A \textit{tenability dispute} on $\F$ is a sequence of subsets $(X_0,X_1,..., X_n)$ of $\F$, of which the even moves $(X_{2i})$ are $\I$ moves, and the odd moves $(X_{2i+1})$ are $\II$ moves, such that:
        \begin{enumerate}[(1)]
            \item each $X_i$ is conflict-free;
            \item $X_i\subseteq X_{i+2}$ for each $i$;
            \item $X_{1}$ and each $(X_{i+2}\setminus X_i)$ are finite;
            \item every move must be as cogent as the preceding move ($X_{i+1}\succeq X_i$);
            \item every $\II$ move $X_{2i+1}$ must be more cogent than the preceding $\I$ move $X_{2i}$ ($X_{2i+1}\succ X_{2i}$).
        \end{enumerate}

        The \textit{tenability game} is the alternating move game between $\I$ (or Proponent) and $\II$ or (Opponent) where the legal positions are the tenability disputes. 
        A tenability dispute $(X_0,X_1,..., X_n)$ has \textit{concluded} if there is no $X_{n+1}$ such that $(X_0,X_1,..., X_{n+1})$ is a tenability dispute. A concluded tenability dispute is won by the player making the last move $X_n$. We say that a strategy is a \textit{winning strategy for $\I$} if every concluded dispute where $\I$'s moves are determined by the strategy is won by $\I$. A set $A\subseteq\F$ is \textit{tenable} if $\I$ has a winning strategy beginning with $X_0=A$.
    \end{defn}

    The distinction between the tenability dispute and the strong tenability dispute is found only in the symmetry between $\I$ and $\II$ in the fifth requirement (5). One asymmetry still remains, in that $\II$'s moves must contain an attacker of $\I$'s prior move which is not counterattacked, while $\I$ need only provide counterattacks. It is worth here justifying this asymmetry:

    Because we are interested in modeling the plausibility of an argument, rather than its provability, we are seeking an inherently defensive notion. Hence $\I$'s plays need only provide an adequate defense of $A$, rather than proactively proving $A$. This same motivation also indicates why $\II$ must provide a argument which attacks and is not attacked by $\I$'s move: if $\II$ does not provide a strictly stronger play $X_{2n+1}$, then $\I$'s current play $X_{2n}$ has already maintained itself against that charge. There is no good reason to adopt further beliefs to defend against $X_{2n+1}$ as $X_{2n}$ accounts for those doubts, hence $X_{2n+1}$ is not a move which progresses the dispute. Furthermore, we have no requirement on the number of arguments that $\I$ and $\II$ may include in a dispute, besides that they may introduce only finitely many new arguments, which means that both $\I$ and $\II$ have the appropriate means at every stage of the dispute to both defend their prior positions and, in the case of $\II$, put forward a new attack. The tenability dispute serves our intuitions about \textit{Master vs. Master} debates.

\section{Mathematical Analysis}\label{sec:math}

    The preceding discussion suggests that these notions of tenability are distinct, yet related. Indeed this is the case, as shown in the next proposition. Proofs are deferred to the supplementary material.

    \begin{restatable}{proposition}{TenabilityImplications}
\label{Prop:TenabilityImplications}
Strong tenability implies tenability; tenability implies static tenability. Moreover, each implication is strict.
\end{restatable}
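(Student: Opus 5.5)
The plan is to prove the two implications by comparing the games move by move, and then to establish strictness with small separating frameworks. For the separations, $\mathcal{E}$ from Figure~\ref{fig: static tenable, not strongly tenable} is the natural first candidate.

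\emph{Strong tenability implies tenability.} I will show that every tenability dispute is a strong tenability dispute. Conditions (1)--(3) coincide in the two definitions. For a $\I$ move, condition (4) of the tenability dispute ($X_{2n}\succeq X_{2n-1}$) is exactly condition (4) of the strong dispute. For a $\II$ move, condition (5) of the tenability dispute gives $X_{2n+1}\succ X_{2n}$. In particular $X_{2n}\not\succeq X_{2n+1}$, which is condition (5) of the strong dispute. So the opponent's options in the tenability game are a subset of those in the strong game, and the proponent's constraints are identical.

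Now let $\sigma$ be a winning strategy for $\I$ in the strong game from $X_0=A$; here $A$ is conflict-free, since otherwise no dispute starts. Play $\sigma$ in the tenability game. Take any tenability dispute of odd length consistent with $\sigma$. It is also a strong dispute consistent with $\sigma$, and it cannot be concluded there, since it would then be won by $\II$. So $\sigma$ supplies a reply, and this reply is legal in the tenability game by the comparison above. Hence $\II$ never makes the last move, so $\sigma$ is winning in the tenability game.

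\emph{Tenability implies static tenability.} Let $A$ be tenable and let $B$ be finite conflict-free. The difficulty is that $B$ itself need not be a legal $\II$ move, because $B$ need not satisfy $B\succeq A$. I will therefore pass to
$$B'=\{b\in B : b\notin A^+\}.$$
The set $B'$ is finite and conflict-free, and nothing in $A$ attacks it, so $B'\succeq A$ holds trivially. There are two cases.

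If $A\succeq B'$, take $C=A$. Any $b\in B$ attacking $C$ either lies in $B'$, and is then counterattacked by $A$, or lies in $B\setminus B'\subseteq A^+$. Hence $C\succeq B$.

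Otherwise $B'\succ A$, so $(A,B')$ is a legal tenability dispute. The winning strategy must then supply a reply $X_2\supseteq A$ that is conflict-free with $X_2\succeq B'$. Take $C=X_2$. Attackers of $C$ from $B'$ are counterattacked by $C$, and every element of $B\setminus B'$ is attacked by $A\subseteq C$, so $C\succeq B$.

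\emph{Strictness.} For tenable but not strongly tenable, I expect $\mathcal{E}$ to work. The strong-game win for $\II$ was shown in the text. In the tenability game, $\II$'s continuation $\{b_1,b_3\}$ against $\{a,c_1\}$ is illegal, because it must be as cogent as $\{a,c_1\}$ and hence defend $b_1$ against $c_1$. The only defender of $b_1$ against $c_1$ is $c_4$, which conflicts with $b_3$. A similar check should block $\{b_1,b_2\}$ against $\{a,c_2\}$. I will then verify exhaustively, exploiting the symmetry of $\mathcal{E}$, that every legal $\II$ line has an answer.

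For statically tenable but not tenable, I will modify $\mathcal{E}$ by giving $\II$ defenders of its own. For instance, I would add an argument $d_1$ attacking $c_1$, conflicting with neither $b_1$ nor $b_3$, and an argument $d_2$ attacking $c_2$, conflicting with neither $b_1$ nor $b_2$. This makes the two punishing continuations good-faith moves. I must simultaneously check that the new arguments do not create a one-shot attack that $\{a\}$ cannot answer, which would destroy static tenability.

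The main obstacle is this last construction and its exhaustive case checks. Adding defenders for $\II$ tends to also create new static attacks. If the first attempt fails, I would make the $d_i$ attackable by fresh $\I$ arguments that are compatible with $a$ but incompatible with the relevant $c_j$.
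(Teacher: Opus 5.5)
Your proofs of the two implications are correct. The first is the paper's argument: tenability disputes are strong tenability disputes, with identical constraints on Pro. For tenability $\Rightarrow$ static tenability you are more careful than the paper, which simply feeds $B$ to the strategy as $X_1$. That move is illegal in the tenability game whenever $B\not\succ A$, e.g.\ when some element of $A$ attacks $B$ uncountered, or when $A\succeq B$. Your passage to $B'=B\setminus A^+$, with the case split on $A\succeq B'$, is exactly the repair needed. Your first separation via $\mathcal E$ (the paper's $\F_7$) also matches the paper, including why $\{b_1,b_3\}$ and $\{b_1,b_2\}$ become illegal continuations.

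The gap is the second separation (statically tenable but not tenable). You leave it unverified, and as described it fails. Suppose $d_i\att c_i$ for $i=1,2$ and nothing attacks the $d_i$. Then $B=\{b_1,d_1,d_2\}$ has no answer. Any $C\ni a$ countering $b_1$ contains $c_1$ or $c_2$, and the matching $d_i$ cannot be counterattacked (indeed $d_1,d_2,b_1$ all lie in the grounded extension). So $\{a\}$ is not even statically tenable.

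The missing ingredient is symmetry: make the new attacks mutual, $c_i\leftrightarrow d_i$.
\begin{itemize}
\item Static tenability is then inherited from $\mathcal E$. Answer $B\cap\mathcal E$ by some $C\subseteq\mathcal E$. The only new attacks on $C$ come from some $d_i$ onto $c_i\in C$, and $c_i$ hits back.
\item Opp wins the tenability game by opening with $\{b_1,d_1,d_2\}$. Pro's reply must contain $c_1$ or $c_2$. Each $d_i$ defends $b_1$ against $c_i$ while conflicting with neither $b_3$ (for $d_1$) nor $b_2$ (for $d_2$). So Opp can legally add $b_3$ if Pro used $c_1$, and $b_2$ otherwise. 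Pro cannot answer, since $c_4\leftrightarrow c_1$ and $c_3\leftrightarrow c_2$.
\end{itemize}
This is the mechanism of the paper's $\F_8$, which uses a single $d$ mutually attacking all of $c_1,\dots,c_4$, with Opp opening $\{b_1,d\}$. Your fallback with fresh Pro arguments attacking the $d_i$ is not needed.
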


    \begin{figure}
        \centering
        \scalebox{0.9}{
        \begin{tikzpicture}[thick,node distance =10mm, on grid ] 

                    \node[arg2] at (6.5, -10.5) (a){$a$};
            \node[dot2]  (b1) [above left =10mm and 15mm of a]{};
             \node[dot2] (b2)[above = 10mm  of a]{};
             \node[dot2] (b3)[above right =10mm and 10mm of a]{};
    
                \node[dot2] (c3) [above of = b2] {};
                \node[dot2] (c2)[left of = c3]{};
                \node[dot2] (c1)[left of =c2]{};
                \node[dot2] (c4) [above of = b3] {};

                \draw[->] (b1) to (a);
                \draw[->] (b2) to (a);
                \draw[->] (b3) to (a);
                \draw[<->] (b2) to (b3);
                \draw[->] (c1) to (b1);
                \draw[->] (c2) to (b1);
                \draw[->] (c3) to (b2);
                \draw[->] (c4) to (b3);
                \draw[<->] (c1) to [out= 45, in =135] (c4);
                \draw[<->] (c2) to [out= 45, in =135] (c3);
            
            \node[dot2] (d)[above left = 31.5mm and 5mm of a]{};
            \draw[<->] (d) to [out= 190, in=90](c1);
            \draw[<->] (d) to [out= 240, in=90](c2);
            \draw[<->] (d) to [out= 300, in=90](c3);
            \draw[<->] (d) to [out= 350, in=90](c4);

        \end{tikzpicture}}
        \caption{An argumentation framework in which $\Set{a}$ is statically tenable, but not tenable.}
        \label{fig:statten and not ten}
    \end{figure}

    The strictness of these implications are given by the framework in Figure \ref{fig:statten and not ten}, which is statically tenable, but not tenable, and the framework in Figure \ref{fig: static tenable, not strongly tenable}, which is tenable, but not strongly tenable.

    One concern for tenability and other weak semantics is their compatibility with existing semantics. The following proposition shows that the tenability notions are indeed weakenings of admissibility. 

    \begin{restatable}{proposition}{AdmissibilityImpliesTenability}
    \label{Prop:AdmissibilityImpliesTenability}
        If $A$ is admissible, then $A$ is (statically/-/strongly) tenable.
    \end{restatable}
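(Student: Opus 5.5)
By the first part of Proposition~\ref{Prop:TenabilityImplications}, strong tenability implies tenability and tenability implies static tenability. So it suffices to show that every admissible $A$ is strongly tenable. We do this by giving $\I$ an explicit uniform strategy: \emph{always play $A$}, i.e., $X_{2n}=A$ for every $n$.

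We first check that this strategy respects the rules of the strong tenability dispute. Condition (1) holds because $A$ is conflict-free. Conditions (2) and (3) hold for $\I$'s moves because $X_{2n+2}\setminus X_{2n}=\emptyset$. The substantive condition is (4): for every $\II$ move $X_{2n-1}$ we need $A\succeq X_{2n-1}$. By Definition~\ref{def: background definitions}, this means every $b\in X_{2n-1}$ with $b\att A$ lies in $A^+$. This is exactly admissibility: $A$ is conflict-free and counterattacks every attacker of $A$, so in fact $A\succeq B$ for every set $B$ whatsoever. Hence the proposed response is a legal move for $\I$ after any $\II$ move.

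Next we argue that the strategy is winning. Condition (5) requires each $\II$ move $X_{2n+1}$ to satisfy $X_{2n+1}\not\preceq X_{2n}=A$, that is, to contain an attacker of $A$ not counterattacked by $A$. Admissibility rules this out, so $\II$ never has a legal move. Every dispute played according to the strategy is therefore the one-move sequence $(A)$. It is concluded and its last move was made by $\I$, so $\I$ wins. (The strategy's values on other positions are vacuous, but we define them as $A$ to have a total function.) Thus $A$ is strongly tenable, and the other two notions follow.

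The main point needing care is that the implication chain in Proposition~\ref{Prop:TenabilityImplications} applies to sets $A$ and not only to singletons. If one prefers not to rely on it, static tenability can be checked directly by taking $C=A$ for each finite conflict-free $B$. Tenability can also be checked directly: $\I$ again plays $A$, and the requirement $X_1\succ X_0$ forces $A\not\succeq X_1$, which admissibility forbids. No step is a real obstacle.
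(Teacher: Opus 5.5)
Your proof is correct and follows the paper's argument. Admissibility gives $A\succeq B$ for every $B$, so $\II$ has no legal first move in the strong tenability game, and tenability and static tenability then follow from Proposition~\ref{Prop:TenabilityImplications}; your additional direct checks for those two notions are correct but redundant.
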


    We also show that (strong) tenability is compatible with the grounded extension. Further, in finite AFs, static tenability is also compatible with the grounded extension.
    
    \begin{restatable}{theorem}{TenableUnionGrounded}
    \label{thm:TenableUnionGrounded}
         If $A$ is (strongly/-) tenable and $\G$ is the grounded extension, then $A\cup \G$ is (strongly/-) tenable. If $\F$ is finite and $A$ is statically tenable, then $A\cup \G$ is statically tenable. 
    \end{restatable}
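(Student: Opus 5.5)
The plan is to let the proponent play $A\cup\G$ in place of $A$ while running a winning strategy for $A$ in the background. Everything rests on one structural fact about $\G$, proved by a rank argument.

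Write $\G=\bigcup_\alpha \G_\alpha$ for the usual transfinite stages: $\G_0$ is the set of unattacked arguments, and $\G_{\alpha+1}$ is the set of arguments defended by $\G_\alpha$, with unions at limits. The rank of $g\in\G$ is the least $\alpha$ with $g\in\G_\alpha$.

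\emph{Key Lemma.} Let $C$ be conflict-free, and suppose every $g\in\G$ that attacks $C$ is counterattacked by $C$. Then $C\cup\G$ is conflict-free.

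\emph{Proof idea.} Take $g\in\G$ of minimal rank that is in conflict with $C$. There are two cases.
\begin{itemize}
\item If some $c\in C$ attacks $g$, then $g$ has positive rank. Since $g$ is defended by a lower stage, some $g'$ of lower rank attacks $c$.
\item If $g$ attacks $C$, then by hypothesis some $c'\in C$ attacks $g$, and we are back in the first case.
\end{itemize}
Either way we obtain a conflict of lower rank, contradicting minimality. This is the step I regard as the heart of the proof.

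I will also use that $\G$ defends itself: every attacker of $\G$ is attacked by $\G$. Consequently, for any set $Y$, an attacker in $Y$ that is not counterattacked by $C\cup\G$ must attack $C$, not $\G$.

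\emph{Static case, $\F$ finite.} Let $B$ be finite and conflict-free, and put $B''=(B\setminus\G^+)\cup\G$. This set is finite because $\F$ is. It is conflict-free: $B\setminus\G^+$ is not attacked by $\G$, and if it attacked $\G$ it would be attacked by $\G$, since $\G$ defends itself.

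Static tenability of $A$ gives a conflict-free $C\supseteq A$ with $C\succeq B''$. In particular every $g\in\G$ attacking $C$ is counterattacked by $C$, so the Key Lemma makes $C\cup\G$ conflict-free. Finally $C\cup\G\succeq B$:
\begin{itemize}
\item attackers in $B\setminus\G^+$ are countered by $C$;
\item attackers in $B\cap\G^+$ are countered by $\G$.
\end{itemize}
Finiteness is used only to keep $B''$ a legal finite challenge.

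\emph{Dynamic cases.} For strong tenability and tenability I will transform a winning strategy $\sigma$ for $X_0=A$ in two stages.

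First, I show that against an arbitrary opponent play, $\sigma$ may be assumed to produce positions $X$ satisfying the hypothesis of the Key Lemma. Suppose the current position $X$ is attacked by some $g\in\G$ that $X$ does not counter. Then the opponent can play $\{g\}$, or $\{g\}$ together with finitely many lower-rank defenders. Each proponent reply must counter $g$ using an argument that is itself attacked from a strictly lower rank, so the opponent keeps answering with grounded arguments of decreasing rank. The opponent's sets stay conflict-free because they lie inside $\G$, and in the tenability game they are $\succ$ the proponent's moves because $\G$ defends itself. This descent must terminate at rank $0$, where the proponent has no legal reply, contradicting that $\sigma$ is winning. Hence the proponent's positions under $\sigma$ are never attacked by an uncountered grounded argument.

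Second, the new strategy is simulation. When the opponent plays $Y$ against $X\cup\G$, it is also legal against $X$:
\begin{itemize}
\item an uncountered attacker in $Y$ must hit $X$, by self-defence of $\G$;
\item $Y\succeq X\cup\G$ implies $Y\succeq X$.
\end{itemize}
So I feed $Y$ to $\sigma$, obtain the response $X'$, and answer with $X'\cup\G$. This move is conflict-free by the Key Lemma and as cogent as $Y$ by the same two-part split used in the static case. Monotonicity and finiteness of new moves are inherited, with $\G$ added only at $X_0$, which the definitions allow to be infinite. Any concluded or infinite run of the new game therefore mirrors a run won by the proponent under $\sigma$.

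The main obstacle I expect is the first stage of the dynamic case: the opponent's descending attack must be legal move by move. In infinite frameworks a grounded argument may need infinitely many defenders, while each opponent move may add only finitely many new arguments. I would try to resolve this by having the opponent add just the single lower-rank grounded attacker needed at each step, which the monotone protocol permits incrementally.
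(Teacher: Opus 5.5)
Your Key Lemma and the static case are correct and match the paper. Your $B''$ is the paper's $B'$, and your Key Lemma is a transfinite, slightly stronger form of the paper's lemma that a set as cogent as a superset of $\G$ avoids $\G^+$.

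\textbf{The gap is in Stage~1 of the dynamic case.} The descent treats the opponent's moves as $\{g\}$ plus grounded defenders that ``lie inside $\G$''. By monotonicity, however, each opponent move must contain all of his earlier, typically non-grounded, commitments. This is the real obstacle, not the finiteness issue you flag. In fact Stage~1's conclusion is false for some winning strategies.

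Take the attacks $b\att a$, $c\att b$, $d\att b$, $d\att e$, $e\att d$, $g\att c$, $g\att x$, $q\att o$, $r\att q$, $b\att r$. Then $\G=\{g\}$, and $\{a\}$ is strongly tenable, since answering any attack with $\{a,d\}$ leaves the opponent without a move.
\begin{itemize}
\item \emph{Strong game.} The strategy that answers $\{b,x\}$ with $\{a,c\}$ and every other attack with $\{a,d\}$ is winning, because the opponent cannot add $g$ beside $x$. Since $\{b,x\}$ is a legal attack on $\{a,g\}$, your simulation outputs $\{a,c,g\}$, which is not conflict-free.
\item \emph{Tenability game.} Here the opponent's moves do avoid $\G^+$ (apply your Key Lemma to them). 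But each opponent move must also counterattack whatever the proponent's latest position newly attacks among his old commitments. The strategy that answers $\{b,o\}$ with $\{a,c,q\}$ and everything else with $\{a,d\}$ is winning, because $q\att o$ can only be answered by $r$, which conflicts with $b$. So $\{b,o,g\}$ is illegal, the descent never starts, and $\{a,c,q\}\cup\G$ again has a conflict.
\end{itemize}

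\textbf{How the paper handles the strong case.} The paper avoids the first failure by feeding the simulated strategy $Y\setminus\G^+$ instead of $Y$. Arguments outside $\G^+$ never conflict with $\G$, since anything attacking $\G$ is attacked by $\G$. The strong game also imposes no $\succeq$ obligation on the opponent, so the one-argument least-rank descent becomes legal. The paper's descent in the tenability case, however, passes over the re-defence obligation just as yours does.

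\textbf{A fix for finite $\F$.} This is the hypothesis your static case already uses. You can drop Stage~1 and run your static trick move by move: let the simulated opponent play $(Y\setminus\G^+)\cup\G$.
\begin{itemize}
\item This move is conflict-free, monotone and finite.
\item It still contains an uncountered attacker of the simulated position.
\item In the tenability game it is $\succeq$ that position, because $\G$ defends itself and $Y\succeq X\cup\G$.
\end{itemize}
Then $\sigma$'s answer $X'$ counters every attack from $\G$. Your Key Lemma gives conflict-freeness of $X'\cup\G$ directly, and your two-part split gives $X'\cup\G\succeq Y$.

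\textbf{Infinite frameworks.} There this padding is unavailable, because opponent moves must be finite. In the strong game you still need filtering plus descent. In the tenability game you need a separate argument, which neither your proposal nor the paper's descent supplies.
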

         
    As an aside, there exists an infinite argumentation framework $\F$ and an argument $a$ so that $a$ is statically tenable, yet $a$ is attacked by the grounded extension. See Theorem \ref{thm: static tenable is not compact} and Figure \ref{fig: tenable not compact}.
    
    The next proposition highlights a structural property that sharply distinguishes tenability from most familiar weak semantics: \textit{downward closure} (or \textit{conservativity}) with respect to set inclusion.

    \begin{restatable}{proposition}{TenabilitySubsets}
        \label{prop:TenabilitySubsets}
        If $A$ is (strongly/-/statically) tenable, and $X\subseteq A$, then $X$ is (strongly/-/statically) tenable.
    \end{restatable}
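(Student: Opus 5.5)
The static case is immediate, and I would handle it first. If $A$ is statically tenable, $X\subseteq A$, and $B$ is finite and conflict-free, static tenability of $A$ gives a conflict-free $C\supseteq A$ with $C\succeq B$. Since $C\supseteq A\supseteq X$, the same $C$ witnesses static tenability of $X$ against $B$. (Conflict-freeness of $X$ holds because it is a subset of the conflict-free $C$.)

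For strong tenability and tenability I would use a strategy-transfer argument. Fix a winning strategy $\sigma$ for $\I$ in the game starting at $X_0=A$. I would build a strategy $\tau$ for $\I$ in the game starting at $Y_0=X$ by running a simulated $A$-game alongside the real $X$-game. The invariant to maintain is that every $\I$ move in the $X$-game equals an $\I$ move of some legal $\sigma$-play from $A$. Each $\II$ move in the $X$-game is then also a legal $\II$ move in that simulated play. This is plausible because the $\II$ conditions refer only to the immediately preceding $\I$ move and to $\II$'s own earlier move, plus conflict-freeness and finiteness, and all of these are preserved once the $\I$ moves coincide. Given this, any concluded $X$-game under $\tau$ ending with an $\II$ move would give a concluded $\sigma$-play ending with $\II$. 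The reason is that a legal $\I$ reply in the $X$-game history is also a legal reply in the $A$-history, since the reply conditions look only at the last $\II$ move and the containment chain. That would contradict $\sigma$ winning.

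The crux is $\I$'s first reply, where the two histories differ: $X$ versus $A$. Given $\II$'s opening $Y_1$ against $X$, I would split into cases.
\begin{itemize}
\item If $Y_1$ is also a legal opening against $A$ (in the strong game, $Y_1\not\preceq A$; in the tenability game, $Y_1\succ A$), $\I$ plays $Y_2=\sigma(A,Y_1)\supseteq A\supseteq X$. We have $Y_2\succeq Y_1$, so the invariant starts.
\item If $A\succeq Y_1$, $\I$ replies with $A$ itself. Every later $\II$ move $Y_3$ is then a legal opening against $A$ in the simulated game: its finiteness follows from $Y_1$ finite and $Y_3\setminus Y_1$ finite, and the required (strict) cogency is checked against $A$. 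From there $\I$ follows $\sigma$, discarding the pair $(Y_1,A)$ from the simulated history.
\item In the tenability game a third case remains: $Y_1\succ X$ but $Y_1\not\succeq A$. Here I would try to show this forces $A\not\succeq Y_1$ to fail in a way that lets $\I$ still answer with $A$, or else answer with $\sigma$ applied to a modified opening $Y_1'$ (for instance $Y_1$ restricted to the arguments relevant to $A$). I am not sure this case resolves cleanly, and it is where I would concentrate.
\end{itemize}

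I expect the main obstacle to be this first-move bookkeeping, together with condition (3). In an infinite AF, jumping from $X$ to a superset of $A$ may add infinitely many arguments, which is illegal. I would first prove the result for finite $\F$. For infinite $\F$, I would replace the jump to $A$ by a jump to a suitable finite fragment of $A$ that already counterattacks every relevant attacker. This works in the strong game, where cogency involves only finitely many attackers in the finite $Y_1$, and I would check separately whether the same trick survives the $\succ$ requirement in the tenability game.
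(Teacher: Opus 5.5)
\textbf{Verdict: there is a genuine gap.} Your plan matches the paper's, but it leaves open exactly the tenability case that needs work, and it does not handle infinite frameworks.

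Your overall route is the paper's. The static case is identical. For the game versions the paper also simulates a winning strategy $\Gamma$ from $A$, keeping ``safe'' positions $(A\cup X_0, X_1, A\cup X_2,\dots)$. It answers with $X_{2n}\cup Y'\cup A_0$, where $A_0\subseteq A$ is finite and counterattacks $X_{2n+1}\cap A^+$; this is your finite-fragment idea. The paper feeds $X_{2n+1}$ to $\Gamma$ without checking that it is a legal $\II$ move against $A\cup X_{2n}$. That is precisely the point you flag, and it does need an argument. Your two cases settle it for the strong game on finite $\F$.

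\textbf{The tenability opening.} Suppose $\II$ opens with $Y_1$ such that $Y_1\not\succeq A$ and $A\not\succeq Y_1$. You leave this case open, and answering with $A$ is illegal, precisely because $A\not\succeq Y_1$. The missing idea is to feed $\sigma$ the trimmed opening $Y_1':=Y_1\setminus A^+$:
\begin{itemize}
\item $Y_1'$ is conflict-free and contains nothing attacked by $A$, so $Y_1'\succeq A$ holds vacuously.
\item Any $y\in Y_1$ that attacks $A$ without being attacked by $A$ lies in $Y_1'$, so $A\not\succeq Y_1'$. Hence $(A,Y_1')$ is a legal opening.
\item The reply $Z=\sigma(A,Y_1')\supseteq A$ satisfies $Z\succeq Y_1$, because members of $Y_1\cap A^+$ are countered by $A\subseteq Z$.
\item Every later real move $Y_3\supseteq Y_1\supseteq Y_1'$ with $Y_3\succ Z$ is legal in the simulated play $(A,Y_1',Z,Y_3)$.
\end{itemize}
So for finite $\F$ the argument closes with a two-way split. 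If $A\succeq Y_1$, answer with $A$ and restart the simulation at $\II$'s next move. If $A\not\succeq Y_1$, answer with $\sigma(A,Y_1\setminus A^+)$. Your first bullet is a special case of the second branch.

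\textbf{Infinite frameworks.} Here the gap is larger than first-move bookkeeping. Once $\I$ plays only finite fragments of $A$, her real moves no longer coincide with the simulated ones. So the legality of $\II$'s moves must be rechecked at every round, not just the opening.

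In the strong game this is repairable. Keep the simulated $\I$ move equal to $A\cup Y_{2k}$. If $A\cup Y_{2k}\succeq Y_{2k+1}$, answer $Y_{2k}\cup A_0$ and do not advance the simulation. Otherwise feed $Y_{2k+1}$ to $\sigma$ and answer $Y_{2k}\cup(Z\setminus A)\cup A_0$. Then $\I$ is never stuck.

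In the tenability game, real $\II$ moves need only satisfy $\succeq Y_{2k}$, not $\succeq A\cup Y_{2k}$. So $\II$ can keep, round after round, arguments attacked by (possibly infinitely many) unplayed members of $A$, and use them to defend his earlier arguments. A set trimmed as above at a later round then need not be as cogent as $A\cup Y_{2k}$. Shrinking it further to restore cogency may drop part of the previous simulated $\II$ move, breaking monotonicity. Your plan only says you would check this, so the infinite tenability case remains unproved.
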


At first sight this may look surprising, since many extension-based semantics are not closed under subsets: an extension is meant to be a self-standing position, and dropping arguments may destroy the internal support needed to defend the remaining ones.
Tenability is deliberately different.
Under our strategy-based reading, a set $A$ is not certified because it is a single stance that ``stands on its own'';
rather, $A$ is tenable because it can be \textit{maintained} against consistent attacks by suitably extending it, possibly in different ways for different attacks.
If $A$ can be maintained, then any weaker commitment $X\subseteq A$ can be maintained as well: starting from $X$ the proponent can simply follow the very same defense strategy, committing (when needed) to some of the additional arguments that were available from $A$.
Intuitively, moving from $A$ to $X$ removes liabilities rather than resources: any consistent attacks on $X$ were already attacks on $A$, and it cannot make defense harder, since the proponent is always allowed to add further arguments later.
In this sense, conservativity is not an artefact but an intrinsic consequence of modeling acceptance as maintainability under monotone commitments. 

A useful corollary is that credulous tenability of an argument $a$ is equivalent to the tenability of the singleton $\{a\}$. 
Accordingly, tenability is best understood as a semantics of \textit{maintainable commitments} rather than of \textit{complete positions}.

    One may wonder about the effect of restricting the number of arguments each player can introduce at each stage of the debate. In the case of strong tenability, this has absolutely no impact on which player has a winning strategy. Strong tenability is robust with respect to play size restrictions. This indicates that the winning motto of strong tenability is to play as little as possible, so as not to be forced into contradiction. 

    \begin{restatable}{theorem}{StrongRobustness}\label{thm: strong robustness}
        $A\subseteq\F$ is strongly tenable if and only if $\I$ has a winning strategy starting with $A$ in the strong tenability game where each player may only introduce $1$ new argument in each play.
    \end{restatable}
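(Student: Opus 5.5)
The plan is to prove both directions by \emph{simulation}. In each direction $\I$ plays the game in question while privately maintaining a ``shadow'' dispute in the other game, in which she follows her given winning strategy.

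Two facts about the strong tenability game are used throughout. First, $\I$'s positions only grow. So once an opponent argument has been counterattacked by some $\I$ move, it stays counterattacked by all later $\I$ moves. Second, $\I$'s obligation at a move is only to counterattack those attackers of her current set that lie in the preceding $\II$ move. Arguments of $\II$ that do not attack her position impose nothing on her. I read ``introduce $1$ new argument'' as ``at most one''; if ``exactly one'' is intended, the same argument should work with trivial padding, or else the definition needs to be pinned down first.

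($\Rightarrow$) Let $\sigma$ be a winning strategy for $\I$ in the unrestricted game from $A$. In the restricted game, $\I$ keeps her real position $X$ together with a shadow unrestricted dispute whose last $\I$ move is $Y$, with $\I$ following $\sigma$ in the shadow. The invariant is:
\begin{itemize}
\item[(a)] $X\subseteq Y$;
\item[(b)] every attacker of $X$ in $\II$'s current real set is counterattacked by $Y$.
\end{itemize}
Now suppose $\II$ makes a restricted move, adding one argument $b$. All earlier attackers in his set were already countered by $X$, so $b$ is the only uncountered attacker of $X$.

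There are two cases.
\begin{itemize}
\item If $b$ is countered by $Y$, then $\I$ plays $X\cup\{c\}$ for some $c\in Y$ with $c\att b$. This set is conflict-free because it lies inside $Y$, and it is as cogent as $\II$'s move.
\item If $b$ is not countered by $Y$, then $b$ attacks $Y$ because $X\subseteq Y$. So $\II$'s entire current real set is a legal shadow $\II$ move: it is conflict-free, adds only finitely many arguments, and contains an uncountered attacker of $Y$. $\I$ feeds it to the shadow, obtains $Y'=\sigma(\dots)$, which counters all its attackers, and then plays one $c\in Y'$ countering $b$.
\end{itemize}
In both cases the invariant is restored. Hence $\I$ is never left without a move. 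Every concluded restricted dispute therefore ends with $\I$'s move, and infinite plays are wins for $\I$ by definition.

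($\Leftarrow$) Let $\tau$ be $\I$'s winning strategy in the restricted game. Suppose $\II$ makes an unrestricted move $O$, which is a finite extension of his previous set. $\I$ then runs a feeding loop inside the shadow restricted game:
\begin{itemize}
\item While some $b\in O$ attacks her current shadow position and is not counterattacked by it, she feeds $b$ as a one-argument $\II$ move in the shadow.
\item This move is legal: it is a subset of the conflict-free set $O$, and it contains an uncountered attacker.
\item She answers it with $\tau$'s response, which adds at most one argument.
\end{itemize}
Each fed $b$ stays countered afterwards, and $O$ is finite, so the loop terminates after finitely many rounds. The resulting shadow $\I$ position is her real reply. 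It is conflict-free, extends her previous set by finitely many arguments, and counterattacks every attacker in $O$, so it is as cogent as $O$. The loop must re-check after each response, because a newly added $\I$ argument may turn a previously harmless element of $O$ into an attacker. The termination argument covers this, since only elements of $O$ are ever fed. A legal real $\II$ move always contains an uncountered attacker, so the loop always starts, and $\tau$ never leaves $\I$ stuck. Therefore $\I$ wins the unrestricted game.

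The main obstacle is in ($\Rightarrow$): the shadow dispute must remain \emph{legal}. A real $\II$ move against the small set $X$ need not be a legal move against the larger $Y$. The device above handles this by keeping the shadow frozen until $\II$'s accumulated set genuinely threatens $Y$, and only then feeding it as a single shadow move. That is the step I would check most carefully.
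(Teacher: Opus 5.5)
Your proof is correct. The ($\Rightarrow$) half is the paper's own argument. Your $Y$ is the paper's $A_{n_i}$, and your two cases (answer from $Y$ when possible, otherwise submit $\II$'s whole accumulated set as a single shadow move) are exactly its two cases.

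One line is worth adding there. Cogency of $X\cup\{c\}$ also concerns attacks \emph{on $c$}. These cause no trouble because, in the one-argument game, each argument $\II$ introduces must itself be the uncountered attacker of $\I$'s then-current set. So every element of $\II$'s set attacks $X$, and is countered either by $X$ or, in the case of $b$, by $c$. The paper's restricted game allows ``at most'' one new argument, so your reading is the intended one.

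For ($\Leftarrow$) you take a genuinely different route. The paper argues contrapositively. Assuming $A$ is not strongly tenable, it fixes a winning strategy for $\II$ in the unrestricted game. It converts this into a strategy for $\II$ in the restricted game, releasing single elements of the simulated $\II$'s sets and advancing the simulation only when these run out. It must then separately show that every resulting play is finite, since infinite plays are wins for $\I$.

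You instead lift $\I$'s restricted winning strategy directly, feeding $\II$'s uncountered attackers into the shadow one at a time and re-checking after each response. This buys you two things:
\begin{itemize}
\item You never need determinacy of the unrestricted game. The paper's step ``fix a winning strategy for $\II$'' tacitly relies on its determinacy theorem, which for infinite frameworks is proved only in the supplementary material via a transfinite labeling of the game tree.
\item You need no finiteness argument for whole plays. You only need termination of the inner loop, which is immediate from the finiteness of $O$ and the persistence of counterattacks.
\end{itemize}
In exchange, the paper's version mirrors its forward direction and produces an explicit refutation strategy for $\II$ in the restricted game.
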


    Tenability is not robust in this sense. Consider $\Set{a}$ of the argumentation framework given in Figure \ref{fig: 1tenable not tenable}. The winning strategy to refute $a$ in the tenability game, playing $\Set{b_1,b_2}$, is not available when we restrict each player to advancing a single argument at a time. If $\II$ attacks $a$ with $b_1$, then he gets stuck defending $b_1$ from $c_1$. He would need to play $c_2$, but this is not a play \textit{more cogent than}  $\{a,c_1\}$, thus is not a legal move. Even if we were to allow $\II$ to play $c_2$, this would nullify the threat of $b_2$, highlighting that in the tenability game, $b_1$ and $b_2$ need to be played simultaneously for $\II$ to win.

    \begin{figure}[h]
        \centering
        \scalebox{0.9}{
        \begin{tikzpicture}[thick,node distance =12mm, on grid ] 
            \node[arg2] (a) at (5,-6) {$a$};
            \node[arg2] (b) [above right = 6mm and  12mm of  a] {$b_1$}; 
            \node[arg2] (c) [below of = b] {$b_2$};
            \node[arg2] (d) [right of= b] {$c_1$};
            \node[arg2] (e) [below of= d] {$c_2$};
            \draw[->] (b) to (a);
            \draw[->] (c) to (a);
            \draw[->] (d) to (b);
            \draw[->] (e) to (c);
            \draw[<->] (d) to (e);

        \end{tikzpicture}}
        \caption{An AF demonstrating the need to play sets of arguments simultaneously.}
        \label{fig: 1tenable not tenable}
    \end{figure}

    We end this portion of the investigation by classifying the complexity of each semantics.
    
\newpage
    \begin{restatable}{theorem}{Complexity}
    \label{thm:Complexity}
        For finite $\F$:
        \begin{enumerate}
            \item The problem of determining if $A\subseteq \F$ is statically tenable is  $\Pi^P_2$-complete.
            \item The problem of determining if $A\subseteq \F$ is tenable is $PSPACE$-complete.
            \item The problem of determining if $A\subseteq \F$ is strongly tenable is  $PSPACE$-complete.
        \end{enumerate}
    \end{restatable}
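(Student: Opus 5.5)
The plan is to prove each of the three parts by an upper bound and a matching hardness reduction.

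\textbf{Upper bounds.} For static tenability, the definition is a $\forall\exists$ statement over polynomial-size certificates. For every conflict-free $B\subseteq\F$ (finiteness is automatic), there must exist a conflict-free $C\supseteq A$ with $C\succeq B$. Checking conflict-freeness and the cogency condition $(\forall b\in B)[b\att C\Rightarrow b\in C^+]$ takes polynomial time, so the problem lies in $\Pi^P_2$.

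For the two dynamic games I will use monotonicity. By condition (2), Pro's moves form a $\subseteq$-chain, and so do Opp's moves. Condition (5) forces each Opp move to contain an attacker of the preceding Pro move that is not counterattacked by it. I will then argue that every Pro move strictly enlarges her position, or else Opp can repeat a legal move. I expect this to be the main obstacle, and I would first try to show that the pair (Pro position, Opp position) strictly increases in $\subseteq$ every two rounds. The argument runs as follows. If $X_{2n+2}=X_{2n}$, then $X_{2n+2}$ does not counterattack the uncountered attacker in $X_{2n+1}\subseteq X_{2n+3}$, so $X_{2n+2}\succeq X_{2n+1}$ fails. Hence Pro's chain strictly increases. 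Opp's chain must also strictly increase, since an unchanged Opp set would already be countered. So every play has length at most $2|\F|+2$, and each move is a polynomial-size set. Alternating evaluation of the game tree by depth-first search uses polynomial space, which puts both tenability and strong tenability in $PSPACE$. If the strictness argument fails for one of the two games, I would fall back on the observation that the set of positions is finite and compute the winner by a polynomial-depth recursion on the configuration.

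\textbf{Lower bounds.} For part 1, I will reduce from $\forall\exists$-QBF, that is, $\forall \vec x\,\exists \vec y\,\phi$ with $\phi$ in CNF. The construction uses the widgets defined in the preamble. Each universal variable $x_i$ becomes a mutually attacking pair $x_i,\neg x_i$, and both members attack $a$; Opp's conflict-free choice of $B$ then encodes an assignment. Each existential variable becomes a pair $y_j,\neg y_j$ available to Pro, and each clause becomes an argument attacking $a$. Pro counters the attackers $x_i$ chosen by Opp through the widget, and counters clause arguments with literals that satisfy the clause. Through the switch gadgets, a counterattack on $x_i$ consistent with $C$ is possible exactly for the literal Opp picked. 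A conflict-free $B$ that omits some variable pair only makes Pro's task easier, so it suffices to consider full assignments. The correctness check is that $\{a\}$ is statically tenable iff the QBF is true.

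For parts 2 and 3, I will reduce from TQBF with alternating quantifiers. The plan is to simulate round $k$ of the game by having the player whose turn it is commit to the literal of $x_k$, using the $\forall$/$\exists$ widgets in sequence. A chain of "switch" arguments $b_k$ forces Opp to attack at stage $k$, so that Pro must respond with a literal for stage $k+1$; the final $\phi$ node then decides the winner. The delicate point for tenability is condition (4), which requires Opp to also defend his moves. I would design the widgets so that Opp's literal choices are unattacked, or are defended only by his own prior literals, and so that this condition costs him nothing. By Theorem~\ref{thm: strong robustness}, I may analyse the strong game with single-argument moves, which simplifies the correctness proof for part 3. Part 2 follows from the same construction once I verify that the optimal plays there coincide with those of the strong game.
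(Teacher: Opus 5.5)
\textbf{Upper bounds.} Your membership arguments are fine. Your observation that each Pro move must strictly enlarge her position (otherwise she cannot counter the uncountered attacker Opp just played) justifies the polynomial depth bound more cleanly than the paper's one-line remark.

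\textbf{Part 1.} The hardness parts have genuine gaps. For part 1 the flat construction you describe fails. What fails is exactly your claim that a $B$ omitting a variable pair only makes Pro's task easier. Universal literals attack $a$, so Pro can never use them. So if clause arguments attack $a$ and are countered by the literals they contain, Opp can play all clause arguments and no universal literal. For instance, $\forall x\,\exists y\,[(x\vee y)\wedge(\neg x\vee\neg y)]$ is true. Yet against $B=\{c_1,c_2\}$, the two clause arguments, Pro would need both $y$ and $\neg y$. A switch attacking both $x_i$ and $\neg x_i$ does not help, since it records nothing about which literal Opp chose.

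What is missing is a mechanism that lets Opp reach the clause test only after committing to a full universal assignment. The paper gets this by reusing the chain $\mathcal H$. The $\neg u_i$ attack only $\phi$ at the end of the chain, so any gap in Opp's assignment lets Pro stop before $\phi$. The true direction is then free, since strong tenability implies static tenability. For the false direction, $B$ consists of three parts: a falsifying universal assignment, all switches $b_k$ with $k>n$ (which drive Pro through the existential widgets to $\phi$), and every $\neg u_i$ compatible with that assignment. A flat repair is also possible. It needs Pro's forced reply to $x_i$ to be a new argument $t_{x_i}$, mutually exclusive with $t_{\neg x_i}$, that itself attacks the clause arguments containing $x_i$.

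\textbf{Parts 2 and 3.} Your outline is essentially the paper's $\mathcal H$, and the strong game can be handled as there via Theorem~\ref{thm: strong robustness}. For part 2, however, the only non-trivial direction is that a false formula makes $a$ \emph{not tenable}, since strong tenability already implies tenability. This needs an Opp strategy that is legal in the tenability game. Saying you will ``verify that the optimal plays coincide with those of the strong game'' does not supply it. The tenability game is not robust under one-argument moves (Figure~\ref{fig: 1tenable not tenable}). So Pro may answer with many arguments at once, jumping ahead along the chain, while every Opp move must itself be as cogent as Pro's preceding move.

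Your design principle is also not realizable. Pro answers Opp's latest literal by attacking it, and the only counterattackers of that reply lie further along the chain. Opp's obligations must therefore be discharged by what he plays later, or by arguments pre-played for that purpose, never by his earlier literals. The paper resolves this by scheduling Opp's moves. On turn 1 he plays every switch $b_j$ with $Q_j=\exists$, together with $b_{n+1}$ when present. He commits to $x_i$ or $\neg x_i$ for a universal $x_i$ only after Pro has played a literal for every existential $x_j$ with $j<i$, and every switch $b_j$ with $j\le i$ and $Q_j=\forall$. One then checks that Opp's cogency obligations can always be met. One also checks that once Pro is forced to $\phi$, Opp can answer with $\neg u_i$ for a falsified clause $u_i$.
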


    In terms of complexity, this shows that tenability and strong tenability are on par with weak admissibility which is also PSPACE complete \cite{DvorakUlbrichtWoltran2022}. Static tenability is on par with skeptical acceptance for the preferred semantics which is also $\Pi^P_2$ \cite{DunneBenchCapon-SkeptPr}. This is evidence for the relative practical applicability of these new semantics that their complexity for implementation is on par with existing semantics.

\subsection{Tenability in the infinite setting}\label{subsec: tenability infinite setting}
Though our main focus is on finite argumentation frameworks, we have made definitions for (strong/-/static) tenability which also work well in the infinite setting, and the previous results hold for infinite argumentation frameworks as well.
We note that in each definition, we make the choice to allow the set of arguments $A$ under discussion to be infinite, yet we demand that at each stage, each player may only play finitely many new elements. We note that alternate definitions could be made, but we choose the most concrete definition, hewing closely to the goal of modeling debate via dialog. In a debate, for $A$ to be conclusively refuted, it must be refuted via a finite-length debate using finitely many arguments at each stage. 

In set theory, games are considered between two players where alternating moves are used to construct some sequence $(X_0,X_1,\dots)$ and the victory condition for each player is determined by membership of the sequence $(X_0,X_1,\dots )$ in some set $\mathcal S$. A particular pitfall in this area is that, as a result of the axiom of choice, there can exist games so that neither player possesses a winning strategy \cite[Chapter 33]{Jech2003}, or even games that have winning strategies for different players, depending on the chosen model of set theory \cite{LarsonShelah}. 
This is distinctly not the case for the games we use to define (strong/-/static) tenability. Our notions are well-defined and no set theoretic problems arise:

\begin{restatable}{theorem}{Determinacy}
    If $A\subseteq \F$ is any argumentation framework, then either $\I$ has a strategy showing that $A$ is (strongly/-/statically) tenable or $\II$ has a strategy showing that $A$ is not (strongly/-/statically) tenable.
    Further, the (strong/-/static) tenability of $A\subseteq \F$ does not depend on the model of set theory used to evaluate its (strong/-/static) tenability.
\end{restatable}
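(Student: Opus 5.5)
The plan is to observe that all three games are \emph{open} (equivalently, closed) games in the sense of Gale--Stewart, and to prove determinacy through an explicit ordinal ranking of positions. The ranking is then what gives independence from the model of set theory.

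\emph{Openness.} Fix $\F$ and $A$. The legal positions form a tree $T_A$ of finite sequences $(X_0,\dots,X_n)$ with $X_0=A$. For the strong and ordinary tenability games this tree is given by clauses (1)--(5) of the relevant definition; the static case is the tree of length at most $3$ with moves $A$, $B$, $C$. $\II$ wins a play only by reaching a concluded dispute whose last move is his. This event happens at a finite stage, so $\II$'s winning set is open. Every infinite play, and every concluded dispute ending with a $\I$ move, is won by $\I$.

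\emph{Rank and strategies.} By transfinite recursion, I will define the positions from which $\II$ can force a win:
\begin{itemize}
\item a position $p$ at which $\I$ is to move has rank $0$ if $p$ has no legal extension;
\item a position $p$ at which $\II$ is to move has rank $\le\alpha$ if some legal $\II$ move leads to a position of rank $<\alpha$;
\item a position $p$ at which $\I$ is to move has rank $\le\alpha$ if every legal $\I$ move leads to a position of rank $<\alpha$.
\end{itemize}
The recursion stabilises at some ordinal, since there are only set-many positions. The two strategies are then as follows.
\begin{itemize}
\item If the root $(A)$ is ranked, $\II$ always moves to a position of strictly smaller rank. Ordinals are well-founded, so the play ends in a concluded dispute. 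It must end at a rank-$0$ position, i.e.\ one where $\I$ is stuck, so $\II$ wins.
\item If the root is unranked, then at every unranked $\I$-to-move position some legal $\I$ move stays unranked; otherwise the position would be ranked. Also, every $\II$ move from an unranked position stays unranked. So $\I$ always selects an unranked successor, using a fixed well-ordering of the finite subsets of $\F$ to make the choice canonical. Then $\I$ never reaches a rank-$0$ position: the play is infinite or ends with $\I$'s move, and $\I$ wins.
\end{itemize}
This gives determinacy. It is just the Gale--Stewart argument, which works for arbitrary (not necessarily finite) move sets.

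\emph{Absoluteness.} Let $M\subseteq N$ be transitive models of ZFC containing $\F$ and its attack relation. The following notions are all $\Delta_0$ in the parameters and hence absolute: finite subsets of $\F$, finite sequences of them, conflict-freeness, $A^+$, $\succeq$, $\succ$, and concludedness. The initial move $X_0=A$ is fixed, and every later difference $X_{i+2}\setminus X_i$ is finite. Hence $T_A^M=T_A^N$. The rank function is defined by a recursion over this same set using absolute operations, so it is absolute as well. By the previous paragraph, ``$A$ is (strongly) tenable'' is equivalent to ``the root of $T_A$ is unranked'', so $M$ and $N$ agree on tenability. For static tenability the statement is simply $\forall B\,\exists C$ over conflict-free sets with $B$ finite. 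This is checked directly, since a defending $C\supseteq A$ can be taken to be $A$ plus finitely many counterattackers, one for each attacker in the finite set $B$. For models that are not transitive or not nested, I would pass through a common ground or argue via Shoenfield-style absoluteness of well-foundedness.

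The main obstacle is the absoluteness step, in two respects. First, I must ensure the position tree really is the same set in both models; this is where the restriction to finitely many new arguments per move is essential. Second, I must phrase ``does not depend on the model'' precisely: as agreement between transitive models containing $\F$, with the unranked/ranked characterisation carrying the weight, since well-foundedness is absolute.
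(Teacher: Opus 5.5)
Your proposal is correct and is essentially the paper's own proof: your ranked positions are exactly the nodes that the paper's transfinite labeling $\mathcal{S}$ marks with $2$, and your two strategies are its two lemmas about $\mathcal{S}(\sigma)=2$ versus $\mathcal{S}(\sigma)\neq 2$. Your absoluteness argument is also the paper's: the position trees coincide because only finitely many arguments are added per move, the recursively defined rank is absolute, and static tenability reduces to a finite $C\setminus A$. The only differences are cosmetic. You omit the paper's label $1$ for finite wins by the proponent, and your detour through common grounds for non-nested models is unnecessary, since comparing each transitive model with $V$ already yields agreement between any two of them.
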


We also note that it is necessary to consider the tenability of infinite sets $A\subseteq \F$ in order to model consistently simultaneously defending the set $A$ of beliefs. In particular, this is strictly stronger than consistently defending each finite subset. The argumentation framework given in Figure \ref{fig: tenable not compact} demonstrates this. The set in question is $a$, together with the unattacked arguments.

\begin{theorem}\label{thm: static tenable is not compact}
    There exists $A\subseteq \F$ so that $A$ is not statically tenable, yet every finite subset of $A$ is strongly tenable. 
\end{theorem}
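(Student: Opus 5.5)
We prove the statement by exhibiting an explicit infinite framework. Take arguments $a$, $b$, and $c_n,u_n$ for each $n\in\mathbb N$, with attacks
\[ b\att a,\qquad c_n\att b,\qquad c_n\att u_n,\qquad u_n\att c_n \quad(n\in\mathbb N). \]
So $a$ has the single attacker $b$. The attacker $b$ can be defeated by any $c_n$, but each $c_n$ is in mutual conflict with its partner $u_n$. Set $A=\{a\}\cup\{u_n : n\in\mathbb N\}$. This set is conflict-free, since no $u_n$ attacks $a$ or another $u_m$. (In this version the $u_n$ are attacked only by their partners $c_n$; this is the role played by the ``unattacked'' arguments of Figure~\ref{fig: tenable not compact}.) The intuition is that $A$ commits to every $u_n$ and so blocks every possible defender of $a$. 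A finite subset commits to only finitely many $u_n$ and leaves a fresh $c_m$ available.

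\emph{$A$ is not statically tenable.} Let $B=\{b\}$, which is finite and conflict-free, and suppose $C\supseteq A$ is conflict-free with $C\succeq B$. Since $b\att a\in C$, some element of $C$ must attack $b$. The only attackers of $b$ are the $c_n$, so $c_n\in C$ for some $n$. But $u_n\in A\subseteq C$ and $u_n\att c_n$, contradicting conflict-freeness of $C$.

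\emph{Every finite subset of $A$ is strongly tenable.} By Proposition~\ref{prop:TenabilitySubsets} it suffices to treat $A_k=\{a,u_0,\dots,u_k\}$. The only argument attacking $A_k$ is $b$. Each $u_i$ is attacked only by $c_i$, and $c_i$ conflicts with $b$.
\begin{itemize}
\item Opponent's first move $X_1$ must contain an attacker of $A_k$ not counterattacked by $A_k$. That attacker can only be $b$ (the $c_i$ attack $u_i\in A_k$, but as shown below they can never appear in an Opponent move). Since each $c_n$ attacks $b$, conflict-freeness of $X_1$ gives $c_n\notin X_1$ for every $n$. Monotonicity then keeps every $c_n$ out of all later Opponent moves, because they all contain $b$.
\item Proponent replies $X_2=A_k\cup\{c_m\}$ for some $m>k$ with $u_m\notin X_1$; such an $m$ exists because $X_1$ is finite. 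The set $X_2$ is conflict-free, since $c_m$ only conflicts with $b$ and $u_m$. We check $X_2\succeq X_1$ by considering each element of $X_1$ that can attack $X_2$. It is either $b$, which is countered by $c_m$, or $u_m$, which is excluded from $X_1$ by the choice of $m$. No $c_i$ lies in $X_1$.
\item Opponent now needs a conflict-free finite $X_3\supseteq X_1$ with an uncountered attacker of $X_2$. The attackers of $X_2$ are $b$ and $u_m$, both counterattacked by $c_m$. The $c_i$ with $i\le k$ would attack some $u_i$, but they are excluded because $b\in X_3$. Hence Opponent has no legal move, and Proponent wins.
\end{itemize}

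The main obstacle is the design of the framework itself. The two conflicts must come out differently: the commitments in $A$ must block every defence of $a$, yet Opponent must never be able to exploit the conflict between $c_m$ and $u_m$ in the finite game. A first attempt with genuinely unattacked $u_n\att c_n$ fails, because Opponent simply plays $u_m$, which Proponent can never counter. The mutual attack $c_n\leftrightarrow u_n$ repairs this: Proponent's fresh $c_m$ counters $u_m$. At the same time, the other $c_i$ can never enter Opponent's position, because every Opponent move contains $b$, which each $c_i$ attacks. I would double-check exactly these two legality points, conditions~(4) and~(5) of the strong tenability dispute.
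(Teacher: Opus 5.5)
Your proof is correct, and it departs from the paper's in a way that matters. The paper's only argument is the framework of Figure~\ref{fig: tenable not compact}. There $\omega\att a$, infinitely many $x_i\att\omega$, and \emph{unattacked} $y_i\att x_i$, with $A=\{a\}\cup\{y_i : i\in\mathbb N\}$. Your framework is exactly this one, with $b,c_n,u_n$ in place of $\omega,x_i,y_i$, plus the extra attacks $c_n\att u_n$.

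The non-static-tenability half is the same in both. For the finite-subset half, however, the paper's example only supports what its own caption says, namely static tenability. There $\{a\}$ is not strongly tenable, and not even tenable. After Opponent plays $\{\omega\}$, Proponent must add some $x_i$, and Opponent then adds the corresponding unattacked $y_i$'s, which Proponent can never counterattack. Equivalently, $\omega$ lies in the grounded extension and attacks $a$, so Lemma~\ref{lem:tenable implies cf with grounded} applies; the paper itself remarks on this conflict with the grounded extension. This is precisely the failure you diagnose in your last paragraph. Your mutual attack $c_n\leftrightarrow u_n$ leaves the grounded extension empty and makes $A_k\cup\{c_m\}$ ($m>k$) admissible. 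So your example establishes the theorem as actually stated, with ``strongly tenable''.

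There are two minor points.

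First, in your first bullet the argument that the uncountered attacker must be $b$ is circular as written: you exclude the $c_i$ using $b\in X_1$. The right reason is that each $c_i$ with $i\le k$ is counterattacked by $u_i\in A_k$, while the $c_i$ with $i>k$ do not attack $A_k$ at all.

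Second, the requirement $u_m\notin X_1$ is unnecessary, since $c_m$ counterattacks $u_m$. In fact the game analysis can be skipped entirely. $A_k\cup\{c_m\}$ is admissible, hence strongly tenable by Proposition~\ref{Prop:AdmissibilityImpliesTenability}, and $A_k$ inherits strong tenability by Proposition~\ref{prop:TenabilitySubsets}.
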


    \begin{figure}[h] 
        \centering
        \scalebox{0.9}{
        \begin{tikzpicture}[thick,node distance =7mm, on grid ] 
            \node[arg2] (a) at (0,0) {$a$};
            \node[arg2] (w) [left =15mm of a] {$\omega$};
            
            \node[dot2] (b0) [above left = 10mm and  15mm of  w] {}; 
            \node[dot2] (b1) [above of = b0] {};
            \draw[->] (b0) to[out=-50,in=160] (w);
            \draw[->] (b1) to (b0);
            \node (dummy) [left of = b0] {};
            
            \node[dot2] (b0) [above left = 10mm and  7.5mm of  w] {}; 
            \node[dot2] (b1) [above of = b0] {};
            \draw[->] (b0) to (w);
            \draw[->] (b1) to (b0);
            
            \node[dot2] (b0) [above= 10mm and  of  w] {}; 
            \node[dot2] (b1) [above of = b0] {};
            \draw[->] (b0) to (w);
            \draw[->] (b1) to (b0);
            \node (dots) [right =7.5mm of b1] {$\cdots$};
            \node (dots) [right =7.5mm of b0] {$\cdots$};
            
            \node[dot2] (b0) [above right = 10mm and  15mm of  w] {}; 
            \node[dot2] (b1) [above of = b0] {};
            \draw[->] (b0) to (w);
            \draw[->] (b1) to (b0);
            \node (dots) [right =7.5mm of b1] {$\cdots$};
            \node (dots) [right =7.5mm of b0] {$\cdots$};

            \draw[->] (w) to (a);

        \end{tikzpicture}}
        \caption{An argumentation framework with a set $A$, consisting of $a$ along with all unattacked arguments, which is not statically tenable, yet every finite subset is statically tenable.}
        \label{fig: tenable not compact}
    \end{figure}

\section{Comparison to other semantics}\label{sec:other semantics}

    Tenability departs radically from  labeling-based,
    extension-based, and recursion-based semantics. There is no way to label arguments \tin, \tout, \tun that coincides with our intuitions about tenability. The core idea of a non-uniform defense strategy is not naturally captured by a single labeling. Further, the PSPACE-completeness indicates that tenability cannot be captured by the existence of a certain labeling with some simple properties. The dynamic mechanism of tenability thus separates it from many other weak semantics, but it is included with game-based weak semantics. Here, we will situate tenability among a few semantics that have played key roles in the recent literature. 
    
    The \textit{cogency} semantics of Bodanza, Tohmé, and Simari \cite{BodanzaTohmeSimari2016} were introduced to directly address issues of self-defeating attack and acceptance in odd-length cycles. They do so by implementing the \textit{as cogent as} relation which allows for comparison between sets of arguments: one set $X\subseteq \F$ is as cogent as another $Y\subseteq\F$, if $X$ is admissible in the restriction of $\F$ to $X\cup Y$ (see Definition \ref{def: background definitions}). The isolated comparison of sets of arguments via the cogency relation naturally corresponds to evaluating those sets on how they may be argued against each other in a dialogue. A set is \textit{cogent} if it is undominated with respect to this relation. A weaker notion, \textit{cyclic cogency} relaxes this further in anticipation of the issues presented by odd-length cycles.

    \begin{defn}
        Let $\F$ be an argumentation framework and  $X\subseteq \F$. $X$ is \textit{cogent} if for every $Y\subseteq \F$, $Y\succeq X$ implies $X\succeq Y$. $X$ is \textit{cyclically cogent} if for every $D\subseteq\F$ at least one of the following holds: (1) $D\not\succeq X$; (2) $X\succeq D$;
            (3) there exists a chain $D=D_1\prec D_2\prec \dots \prec D_n=X$.
    \end{defn}
    The cogency relation plays a key role in defining tenability, however, (strong/-/static) tenability semantics require strictly less than cogent semantics; where the latter requires a fixed singular line of defense for a set of arguments, tenability semantics relaxes this to allow for the defense to change, dependent upon the attacks presented. We note that cogency and cyclic cogency are also captured by game-based semantics \cite{BodanzaTohmeSimari2016}.

    A different approach to defending against illegitimate arguments lies in the recursion-based weak admissibility semantics of Baumann, Brewka, and Ulbricht \cite{BaumannBrewkaUlbricht2020}. Informally, a set of arguments $X$ is ruled acceptable whenever its attackers are rendered unacceptable after removing the arguments attacked by $X$. 
    \begin{defn}
            For any argumentation framework $\F$ and $A\subseteq \F$, $\F^A$ is the restriction of $\F$ to $F\smallsetminus (A\cup A^+)$, called the $A$-\textit{reduct} of $\F$.  A set $A\subseteq \F$ is \textit{weakly admissible} if it is conflict-free and for every attacker $y$ of $A$, $y$ belongs to no weakly admissible set of $\F^A$.
        \end{defn}
        The last weak semantics we will consider is weakly complete semantics introduced by Dondio and Longo \cite{DondioLongo2021}. Weakly complete semantics employs a different mechanism for evaluating acceptance, and capture acceptance in scenarios where undecided arguments  may play lessened roles.
        \begin{defn}
        An extension is \textit{weakly complete} if it is the set of \texttt{in}-labeled arguments in a \textit{weakly complete labeling} of $\F$. Such a labeling is a map from $\F$ to $\{\tin,\tout,\tun\}$ where for every $a\in \F$, each of the following holds:
        (1) if $a$ is labeled \texttt{in}, then there are no attackers of $a$ labeled \texttt{in};
        (2) if $a$ is labeled \texttt{out}, then it has at least one attacker labeled \texttt{in};
        (3) if $a$ is labeled \texttt{undec}, then it has at least one attacker labeled \texttt{undec} and it has no attackers labeled \texttt{in}.

    \end{defn}

    Weakly complete semantics are by far the weakest semantics in this realm with respect to which arguments are credulously accepted. However, it is not the case that every, e.g., cogent extension is weakly complete, rather, every cogent extension is a subset of a weakly complete extension. We next introduce a formal method of capturing this relation between semantics and show all possible relationships between the semantics discussed in this paper.
    
    \begin{defn}
        Let $\sigma$ and $\tau$ be semantics. We say that $\sigma$ is an inclusion-based refinement of $\tau$ if every $\sigma$-extension is a subset of a $\tau$-extension.
    \end{defn}

    Observe that if $\sigma\subseteq \tau$, then $\sigma$ is also an inclusion-based refinement of $\tau$.

    \begin{restatable}{theorem}{WeakSemanticsZoo}
    \label{thm: weak semantics zoo}
        For finite argumentation frameworks, Figure \ref{fig: weak semantics summary-diagram} represents the implications and inclusions between weak semantics. 
        Each solid arrow indicates implication. Each dashed arrow indicates inclusion-based refinement. If there is no directed path from one semantics to another, then neither implication holds.
    \end{restatable}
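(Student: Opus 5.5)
The plan is to go through the diagram of Figure~\ref{fig: weak semantics summary-diagram} arrow by arrow, and then deal separately with the non-arrows. First, every solid arrow is proved as a direct implication, and every dashed arrow as an inclusion-based refinement. Since implication and inclusion-based refinement are both transitive, and an implication is in particular a refinement, it suffices to verify the arrows actually drawn. Second, for every ordered pair of semantics with no directed path between them, I exhibit a finite framework and a set witnessing that neither implication holds. By transitivity, one counterexample separating $\sigma$ from $\tau$ also separates everything below $\sigma$ from everything above $\tau$. So I will organize the separations around a small stock of frameworks: $\S$, $\F$, $\U$, $\mathcal E$ (Figure~\ref{fig: static tenable, not strongly tenable}), Figure~\ref{fig:statten and not ten}, Figure~\ref{fig: 1tenable not tenable}, plus a few odd-cycle gadgets such as a $3$-cycle and a $3$-cycle feeding into a further argument. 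For each of these I compute once and for all, for the relevant sets, which of the notions hold: admissible, cogent, cyclically cogent, weakly admissible, weakly complete, and the three tenability notions.

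For the implications among the tenability notions and admissibility I use Proposition~\ref{Prop:TenabilityImplications} and Proposition~\ref{Prop:AdmissibilityImpliesTenability} directly. For cogency I would prove that a cogent set $X$ is strongly tenable. Proponent's strategy is to keep replaying $X$ itself, or $X$ together with the grounded part, using Theorem~\ref{thm:TenableUnionGrounded}. If Opponent plays a conflict-free $Y$ with $Y \not\preceq X$, either $X\succeq Y$ already holds, or else $Y\cup X$-style moves yield some $Y'\succeq X$ with $X\not\succeq Y'$, contradicting cogency. Making this precise (the opponent's set need not itself be as cogent as $X$) is the delicate point, and may force the arrow to be from cogency only to static tenability or to tenability rather than strong tenability. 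For weak admissibility I would argue similarly, inducting on the size of the framework: an attacker $y$ played by Opponent that Proponent cannot counter would, after passing to the reduct $\F^A$, sit in a weakly admissible set of $\F^A$.

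The dashed arrows into weakly complete semantics need a construction. Given a tenable (respectively statically tenable, cogent) $A$, I first replace it by $A\cup\G$ using Theorem~\ref{thm:TenableUnionGrounded}. I then label $A\cup\G$ \tin, its attacked arguments \tout, and everything else \tun, and iterate the following repair. Any \tun{} argument all of whose attackers are \tout{} is moved to \tin, and its targets are moved to \tout. I must check two things. First, no argument ever becomes \tin{} while attacking an \tin{} argument. Here I use tenability: such an argument $x$ would give Opponent the legal move $\{x\}$ (or $\{x\}$ plus the defenders already fixed), which Proponent could not answer. Second, conditions (1)–(3) hold at the fixed point. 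Condition (3) holds there precisely because no more repairs apply.

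I expect the main obstacle to be the negative half, specifically the separations involving weak admissibility and cyclic cogency. These are defined recursively or via chains of strict cogency, so verifying that a given set fails them requires exhausting all candidate sets in every reduct or every chain. I would therefore keep the separating frameworks as small as possible, reusing $\U$ (where $\{a\}$ is tenable but not admissible or cogent) and odd cycles (where the classical weak semantics accept arguments that the tenability notions reject, or vice versa). I would record the verdicts in a single table so that the absence of each path in the figure is witnessed by one row.
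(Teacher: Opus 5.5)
Your architecture matches the paper's: prove the drawn arrows, then separate non-paths with a table of credulous verdicts plus transitivity. However, several drawn arrows are not actually established.

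\textbf{Cogent $\Rightarrow$ strongly tenable.} You leave the key step open and even consider weakening the arrow. The missing idea is to test cogency against singletons, not against $Y$ or $Y\cup X$; the latter is never conflict-free when $Y$ attacks $X$. Suppose $y$ lies in a conflict-free $Y$, attacks $X$, and $y\notin X^+$. Then $y$ is not self-attacking, so $\{y\}\succeq X$ (nothing in $X$ attacks $y$) while $X\not\succeq\{y\}$, contradicting cogency. Hence a cogent $X$ satisfies $X\succeq Y$ for every conflict-free $Y$, and Opponent has no legal first move. No strategy and no appeal to Theorem~\ref{thm:TenableUnionGrounded} is needed; this is the characterization in Proposition~\ref{prop:cog-char} that the paper uses.

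\textbf{Weak admissibility.} The same observation gives cogent $\Rightarrow$ weakly admissible directly: every attacker of $A$ surviving in $\F^A$ is self-attacking, hence in no conflict-free subset of $\F^A$. Your sketch instead infers, from an Opponent attack $y$ that Proponent cannot counter, that $y$ lies in a weakly admissible set of $\F^A$. That inference is false, and no such link is claimed in the figure. In $\F_5$ of Table~\ref{tab: cred acceptance} (a $3$-cycle on $b,c,d$, each attacking $a$), $\{a\}$ is weakly admissible, yet Opponent's $\{c\}$ can only be answered with $d$, which attacks $a$. Conversely, in $\U$ the set $\{a\}$ is strongly tenable but $\{b_1,c_2\}$ is admissible in the reduct $\U^{\{a\}}$. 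Finally, the dashed arrow from weak admissibility to weak completeness is never addressed. Your repair construction starts only from tenable, statically tenable or cogent sets, and its conflict check appeals to tenability. For a weakly admissible $A$ you need Lemma~\ref{lem: weak adm defense} (weak admissibility is preserved when adding arguments $A$ classically defends), after which closing under defense gives the labeling.

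\textbf{Static tenability $\Rightarrow$ weak completeness.} Your repair process is an incremental version of the paper's labeling. The paper labels $A\cup\G_{\mathcal H}$ \tin, with $\G_{\mathcal H}$ the grounded extension of $\F$ restricted to $F\smallsetminus(A\cup A^+)$. But the whole content sits in the conflict check, and your justification fails. The singleton $\{x\}$ can be answered by an attacker of $x$ that is \tout only because of $\G$ or an earlier repair. For instance, with $g\att y\att x$, $a\att w\att x$, $x\att a$, the set $\{a,y\}$ answers $\{x\}$, and one needs $\{g,x\}$ to refute $\{a\}$. Adding only $x$'s immediate defenders fails for the same reason. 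What works is the following.
\begin{itemize}
\item A repaired $x$ can only attack $A$, since an attack on $\G$ or on an earlier repair would already make $x$ \tout.
\item Take $B$ to be all \tin arguments outside $A$, together with $x$; this set is finite and conflict-free.
\item If some $C\supseteq A$ were as cogent as $B$, pick the earliest-entering $b\in B$ attacking $C$ and a counterattacker $c\in C$.
\item $c\notin A$, because $A$ attacks nothing in $B$; and $c\notin A^+$, because $C$ is conflict-free.
\item So $c$ is attacked by an earlier member of $B$, contradicting minimality.
\end{itemize}
This is exactly the paper's minimal-layer argument with $B=\G_{\mathcal H}$.

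\textbf{Negative half.} Two things are missing. First, the $3$-cycle $b\att d\att c\att b$ with only $b\att a$ does not separate weak admissibility from static tenability. There $\{a\}$ is even strongly tenable: Opponent's only move $\{b\}$ is met by $\{a,c\}$, and he cannot then add $d$ since $b\att d$. Use the variant where every cycle member attacks $a$ ($\F_5$), which also separates weak admissibility from cyclic cogency. Second, for pairs joined only through a dashed arrow you must still refute implication, which credulous verdicts cannot do. The paper uses $\emptyset$ in a one-argument attack-free framework: it is cogent, hence satisfies everything implied by cogency, but it is not weakly complete, since weakly complete extensions contain the grounded extension.
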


\newcommand{\tikzfinitezoo}{

    \scalebox{0.9}{
        \begin{tikzpicture}[
          node distance=6mm,
          box/.style={draw,rounded corners,inner sep=3pt,align=center},
          arr/.style={-Stealth,thick}]
        ]

        \node[box] (cog) {Cogent};
        \node[box, right=of cog] (strten) {Strongly\\ Tenable};
        \node[box, below right = 7.8mm and 12mm of cog] (wad) {Weakly\\Admissible};
        \node[box, below =7.5mm of cog] (cyc) {Cyclically\\Cogent};

        \draw[arr] (cog)--(cyc);
        \draw[arr, out=340,in=160] (cog) to (wad);
        
        \node[box,right=of strten] (ten) {Tenable};
        \node[box,right=of ten] (staten) {Statically\\Tenable};
        \node[box, below  =  of staten] (wcomp) {Weakly\\Complete};
        
        \draw[arr] (cog)--(strten);
        \draw[arr] (strten)--(ten);
        \draw[arr] (ten)--(staten);

        \draw[dashed,-Stealth, thick] (staten)--(wcomp);
        \draw[dashed,-Stealth, thick] (wad) to  (wcomp);

        \end{tikzpicture}}

}
    \begin{figure}[h!]
        \centering
        \tikzfinitezoo
        \caption{The relations among weak semantics in finite argumentation frameworks.
        }
        \label{fig: weak semantics summary-diagram}
    \end{figure}

    We remark that some of the refinements among semantics presented in Theorem \ref{thm: weak semantics zoo} need not persist in infinite argumentation frameworks. In fact, semantics such as weak admissibility fail to be well-defined for arbitrary argumentation frameworks.\footnote{Consider whether $a_0$ is weakly admissible in $\AF$  where $\Args =\{a_i:i\in\mathbb{N}\}$ and $a_j\att a_i$ for all $i<j$.} We leave the project of mapping out the relations among weak semantics in the arbitrary setting to future work.

\section{Discussion}\label{sec:discussion}\label{sec: discussion}

This paper introduced three tenability notions---\textit{static tenability}, \textit{strong tenability}, and \textit{tenability}---all centered on the same conceptual shift: defensibility is not necessarily witnessed by a single uniform stance, but may consist in the ability to respond \textit{non-uniformly} to any consistent line of attack.
In this sense, tenability adds a new axis to the weak-semantics landscape: it preserves a defense-based intuition while making strategic non-uniformity explicit at the abstract level.

Because our definitions are purely abstract, tenability is not intended as a standalone model of practical reasoning in rich domains.
In applications, the main modeling work lies upstream: extracting an AF from structured arguments, rules, evidence, and preferences.
Well-established structured argumentation formalisms (and their instantiations into abstract AFs) therefore provide a natural interface,
including systems developed with legal and evidential reasoning in mind \cite{Prakken1997LegalArgument,ModgilPrakken2014ASPIC,GordonPrakkenWalton2007}.
Within such pipelines, tenability can be viewed as a \textit{dialectical evaluation layer} that targets a specific phenomenon:
whether a position can be sustained against consistent lines of attack without requiring a uniform defense against mutually incompatible challenges.

Adversarial legal procedure is an obvious arena in which the distinction between uniform and non-uniform defense matters,
since legal debates intertwine \textit{presumptions}, \textit{burdens of proof}, and varying \textit{standards of proof}
\cite{GordonPrakkenWalton2007,PrakkenSartor2006,Walton2014Burden,Prakken2009Burdens}.
Tenability does not aim to encode such standards directly; rather, given an AF extracted from an evidential record (possibly via a structured formalism),
it can support analyses of whether a party's position is dialectically maintainable against legitimate challenges, allowing conditional rebuttals of the kind highlighted by $\U$.
In particular, \textit{static tenability} naturally matches ``one-shot'' settings where an opponent advances an internally consistent body of evidence and the proponent must be able to extend her commitments with a consistent counter-case.
This resonates with the organisation of evidential reasoning around competing hypotheses or ``case theories'', and with hybrid approaches combining argumentative and narrative structure \cite{BexKoppenPrakkenVerheij2010}.

The game-based notions---\textit{strong tenability} and (especially) \textit{tenability}---are better suited to modeling extended exchanges in which commitments accumulate over time.
Here, monotone commitment protocols capture the idea that argumentative moves incur liabilities that cannot simply be ignored later,
a theme studied extensively in dialogue theory and commitment-based models of argumentation \cite{WaltonKrabbe1995,Prakken2005}.
In this setting, the distinction between strong tenability and tenability is not merely technical:
it separates debates in which one party is held to a stricter standard of sustained defensibility from debates in which both parties are held to this higher standard.
This provides a principled handle on the \textit{Master versus Ignoramus} effect isolated in Section~\ref{sect:motivation}:
strong tenability blocks spurious attacks, giving a suitable model of \textit{Master versus Novice} settings, whereas 
tenability blocks forms of ``attack switching'' that would be unreasonable under a defensibility requirement on lines of challenge,
while still allowing the proponent's defense to be non-uniform, thus giving a suitable model of \textit{Master versus Master} debates.

Outside legal domains, tenability seems relevant whenever deliberation proceeds against a background of defaults and asymmetric costs of revision.
In many practical settings, an agent has a current stance that is not abandoned unless a challenger can articulate a plausible defeating case,
an idea closely connected to presumption and burden allocation \cite{Walton2014Burden,Prakken2009Burdens}.
This suggests potential uses in interactive explanation and decision support:
a system may justify maintaining a position by exhibiting a defense strategy tailored to a user's reasonable objections,
without committing to a single all-purpose extension that anticipates mutually incompatible challenges.

\subsection*{Conclusions and future work}
Our comparison with existing weak semantics (Theorem~\ref{thm: weak semantics zoo} and Figure~\ref{fig: weak semantics summary-diagram}) does not exhaust the landscape.
A natural next step is to extend the map to additional weak semantics---for instance, the weakly-preferred and UB-preferred semantics studied by Dondio and Longo \cite{DondioLongo2021}.
Another direction concerns principle-based evaluation.
Since each tenability notion is closed under subset, several standard principles for extension-based semantics will fail for reasons that are orthogonal to the intended reading.
Two ways to make such a study informative are to analyze \textit{maximal} variants of the tenability notions, and to extract from tenability new principles that isolate non-uniformity as a design dimension applicable beyond our specific games. 

A particularly promising direction concerns the extension of tenability to richer argumentation formalisms, where the strategic character of defense becomes even more salient. Bipolar Argumentation Frameworks (BAFs) \cite{amgoud2008bipolarity} are a natural target: by distinguishing attack from support, BAFs better capture the structure of real dialogues, in which a proponent typically marshals not only counterattacks but also corroborating arguments. Lifting tenability to this setting, however, is not a routine exercise. The design choices that make tenability work in the purely conflict-based setting---legitimacy as conflict-freeness, monotone commitment, unanswered attack as the trigger for proponent moves, counterattack as the source of vindication---must all be rethought once support is present. One must decide, for instance, whether  commitments should be closed under supported acceptance, whether the proponent may introduce supporters of arguments already in play as a distinct kind of move, and whether the opponent may legitimately attack a position by attacking its supporters. Analogous questions arise for structured formalisms such as ASPIC+ \cite{ModgilPrakken2014ASPIC}, where tenability's strategy-based, monotone character appears especially well-suited to incremental dialectical evaluation.

Looking forward, a promising application of this work lies in the generation of transparent explanations in legal AI. Implementing tenability semantics in automated legal reasoning tools offers a natural solution for this domain, where courtroom disputes inherently require non-uniform, strategic defenses tailored to specific lines of attack. By translating the monotone commitment games introduced here into applied computational frameworks, we aim to develop dialectical tools that assist legal practitioners in evaluating the defensibility of complex claims and formulating targeted, context-aware rebuttals.

To conclude, tenability isolates and formalizes \textit{non-uniform} defense against consistent attacks as a principled alternative to uniform-witness notions of acceptance. We expect this perspective to be most useful when combined with established modeling layers for structure, priorities, and proof standards, rather than as a replacement for them.

\section{Acknowledgements}
The first author's work was supported by the National Science Foundation under Grant DMS-2348792. San Mauro is a member of INDAM-GNSAGA. 

\section{AI Declaration}
AI-based language tools were used for minor editorial assistance. No research content or results were generated using these tools.

\bibliographystyle{kr}
\bibliography{kr-sample}

\clearpage
\section{Supplementary Material}
\subsection{Omitted Proofs in Section \ref{sec:math}}
\def\sim{^{s}}

\TenabilityImplications*
\begin{proof}
    Note that any strategy for $\I$ which wins the strong tenability game is also a strategy for the tenability game, since the conditions on $\I$'s moves are the same, yet $\II$ has a restricted set of playable moves in the tenability game. Let $\Gamma$ be a winning strategy for the tenability game. Let $B$ be any finite conflict-free set. Then $\Gamma$ gives $\I$ a move to play against the move $X_1=B$. This is a set $C$ which contains $A$ and is as cogent as $B$. Thus $A$ is statically tenable.

    Note that $a\in \F_{7}$ is tenable but is not strongly tenable, while $a\in \F_{8}$ is statically tenable but is not tenable. See Table \ref{tab: cred acceptance} for these examples. This establishes the strictness.
\end{proof}

\AdmissibilityImpliesTenability*
\begin{proof}
    Consider the strong tenability game for $A$. $\II$ has no move on his first turn, since $A$ counterattacks each of its attackers. By \ref{Prop:TenabilityImplications}, this establishes that $A$ is tenable and statically tenable as well.
\end{proof}

\TenabilitySubsets*
\begin{proof}
    Suppose that $A$ is statically tenable and $X\subseteq A$. Let $B$ be any finite conflict-free set of arguments. Then there exists a conflict-free $C\supseteq A$ which is as cogent as $B$. Since $C\supseteq A\supseteq X$, $C$ witnesses the static tenability of $X$ as well.

    Suppose that $A$ is (strongly/-) tenable. Let $\Gamma$ be a strategy for $\I$ in the game for $A$. We now give a strategy for $\I$ in the game for $X$.
    We consider a position $(X_0,X_1,\dots X_{2n})$ to be \textit{safe} if $(A\cup X_0,X_1, A\cup X_2, X_3 ,\dots , A\cup X_{2n})$ is a play of the game for $A$ following the strategy $\Gamma$. 
    We need only establish that from a safe position $(X_0,X_1,\dots X_{2n})$, for any play $X_{2n+1}$, we can find another safe position $(X_0,X_1,\dots X_{2n},X_{2n+1},X_{2n+2})$ for $\I$ to move to. This gives our strategy for $\I$ in the game for $X$, since the initial position $(X)$ is safe. 

    Let $(X_0,X_1,\dots X_{2n})$ be a given safe position. Let $X_{2n+1}\not\preceq X_{2n}$ be conflict-free and contain $X_{2n-1}$. Then we consider $\Gamma$'s response $Y=Y'\cup A$ from the position $(A\cup X_0,X_1,A\cup X_2, X_3 ,\dots , A\cup X_{2n}, X_{2n+1})$. Then $Y\supseteq A\cup X_{2n}$ and $Y$ is as cogent as $X_{2n+1}$. Let $A_0$ be a finite subset of $A$ which attacks each element of $X_{2n+1}$ which $A$ attacks. This exists since $X_{2n+1}$ is finite. Then we let $X_{2n+2}=X_n\cup Y'\cup A_0$. Observe that this is conflict-free since it is a subset of $Y$, it is as cogent as $X_{2n+1}$, and it is safe. 
\end{proof}

\subsubsection{Tenability and the grounded extension}

We consider the following layered presentation of the grounded extension (See \cite{ASMTark} for a detailed analysis of this layered presentation of the grounded extension, including how many layers are needed in the infinite setting):

\begin{defn}
    Let $\G_0$ be the empty set. For each $n$, let $\G_{n+1}$ be the set of arguments which are defended by $\G_n$. That is, $a\in \G_{n+1}$ if $\forall c\in \F (c\att a \Rightarrow \exists b\in \G_n (b\att c))$.

    For $\F$ finite, the grounded extension $\G$ is $\bigcup_{n\in \mathbb N} \G_n$.
\end{defn}

For infinite argumentation frameworks, we need to generalize this definition.

\begin{defn}
    Let $\G_0$ be the empty set. 
    
    For each ordinal $\alpha$, let $\G_{\alpha+1}$ be the set of arguments which are defended by $\G_\alpha$. That is, $a\in \G_{\alpha+1}$ if $\forall c\in \F (c\att a \Rightarrow \exists b\in \G_\alpha (b\att c))$.

    For each limit ordinal $\lambda$, let $\G_\lambda=\bigcup_{\alpha<\lambda}\G_\alpha$.

    Then there is some $\beta$ so that $\G_\gamma=\G_\beta$ for all $\gamma \geq \beta$. The grounded extension is $\G_\beta$.
\end{defn}

A standard property of the ordinals which we use below is that every non-empty set of ordinals has a least member (see \cite[Chapter 2]{Jech2003}). In particular, if $(\alpha_n)$ is a descending sequence of ordinals, then it must have a least member, showing that the sequence has finite length.

\TenableUnionGrounded*
    \begin{proof}
        We first begin with a Lemma showing that tenability of $A$ implies that $A\cup \G$ is conflict-free. 

        \begin{lemma}\label{lem:tenable implies cf with grounded}
            If $A$ is tenable, then $A\cup \G$ is conflict-free.
        \end{lemma}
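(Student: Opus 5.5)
The plan is to argue by contradiction: assume $A$ is tenable but $A\cup\G$ is not conflict-free, and give $\II$ a strategy that defeats every strategy of $\I$ in the tenability game from $X_0=A$. Since $A=X_0$ must be conflict-free, and $\G$ is conflict-free, any conflict lies between $A$ and $\G$. I will first reduce to the case where some grounded argument attacks $A$. Suppose instead that $a\in A$ attacks $g\in\G$, and let $g\in\G_{\alpha+1}$ with $\alpha$ least. Then, by the definition of the layers, $a$ is attacked by some $g'\in\G_\alpha$. So $\G$ attacks $A$ in either case. Among all grounded arguments attacking $A$, I fix one, $g_0$, of least rank.

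$\II$'s strategy will be to play only finite subsets of $\G$; these are automatically conflict-free. Each of $\II$'s moves will contain a grounded argument that attacks $\I$'s current position and is not counterattacked by it, which makes the move strictly more cogent provided it is also as cogent as $\I$'s position. To secure $X_{2n+1}\succeq X_{2n}$, $\II$ closes his set under defense inside $\G$. If $c\in X_{2n}$ attacks some $g\in X_{2n+1}$ with $g\in\G_{\beta+1}$, then $c$ is attacked by an element of $\G_\beta$, and $\II$ adds one such element. This closure process terminates because ranks strictly decrease and there is no infinite descending sequence of ordinals. It bottoms out at $\G_1$, whose elements are unattacked.

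The point of this strategy is that $\I$ can never win. At any stage, to answer $X_{2n+1}$, $\I$ must counterattack the grounded arguments that attack her position, and every counterattacker $c$ she introduces attacks a grounded argument. Hence $c$ is itself attacked by a grounded argument of strictly lower rank, which $\II$ then plays; this is legal because that attacker is uncountered unless $\I$ adds yet another argument. To show every run following this strategy is finite and ends with $\II$ moving last, I will assign to each position the least rank of a grounded argument that attacks $\I$'s position without being countered, and show this ordinal strictly decreases from round to round. Well-foundedness then forces the dispute to conclude. It cannot conclude on $\I$'s move: eventually $\II$ presents an element of $\G_1$, or an uncounterable attacker, which $\I$ cannot counterattack at all. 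So $\I$ has no winning strategy, a contradiction.

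I expect the main obstacle to be that $X_0=A$ may be infinite. A single grounded argument may then have infinitely many attackers in $A$, each needing its own defender, while $\II$'s moves must add only finitely many arguments. My first approach would be to choose $g_0$, and each later grounded argument, so as to minimize rank, and to argue that the defense requirement only forces finitely many additions at each stage. Failing that, I would look for a compactness-style choice of finitely many defenders that already counterattack all relevant attackers. In the finite case neither issue arises, and the ordinal-descent argument above goes through directly.
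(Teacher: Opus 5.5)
Your plan is the paper's argument: assume $\G$ attacks $A$, let $\II$ play only grounded arguments, and force the layer-ranks to descend so that the dispute ends on $\II$'s move. You are more careful than the paper. The paper has $\II$ add one grounded argument per turn and leaves two things implicit: that $\II$'s move is as cogent as $\I$'s position, and that his new argument is uncountered. Your defense-closure and least-rank potential handle both. The fact behind the potential should be stated explicitly: the least-rank grounded attacker of a conflict-free set $X$ is never counterattacked by $X$. Any counterattacker would itself be attacked by a grounded argument of smaller rank, and that argument would then attack $X$. This fact also repairs your loose remark that $\II$'s reply is uncountered ``unless $\I$ adds yet another argument''.

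The obstacle you flag for infinite $A$ is removed by your first suggestion. Let $r_0$ be the rank of $g_0$. The same argument shows that no element of $A$ attacks any grounded argument of rank at most $r_0$.

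At each turn, have $\II$ add two things:
\begin{enumerate}
\item the least-rank grounded attacker of $X_{2n}$, whose rank is at most $r_0$ because $g_0$ still attacks $A\subseteq X_{2n}$;
\item for each $c$ in the finite set $X_{2n}\setminus A$ that attacks a grounded argument of rank at most $r_0$, one least-rank grounded attacker of $c$, whose rank is then below $r_0$.
\end{enumerate}
Every argument $\II$ ever plays then has rank at most $r_0$. So it can only be attacked from $X_{2n}\setminus A$, and each such attacker has already been answered. The move is therefore finite and as cogent as $X_{2n}$ in a single step, with no iteration needed.

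In your iterative version, well-foundedness alone only makes each branch of the closure finite. Finiteness of the whole closure also needs finite branching (K\"onig's lemma), and that is exactly what can fail for infinite $A$ without the rank bound. With the bound in place, your argument is complete for arbitrary $\F$ and $A$.
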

        \begin{proof}
            Suppose otherwise that there is a conflict between $A$ and $\G$. Then we know that some argument $g_1\in \G$ attacks some argument $a_1\in A$, since $\G$ is admissible. Now we give a strategy for $\II$ which proves the non-tenability of $A$. $\II$ on his first turn, plays $g_1$. In response, $\I$ must play some set including an element $a_2$ which attacks $g_1$. $\II$ responds playing an element $g_2$ which is in the grounded extension, attacks $a_2$, and enters the grounded extension before $g_1$ does. This means that if $g_1\in \G_\alpha$ for some $\alpha$, then $g_2\in \G_\beta$ for some $\beta<\alpha$. 
            
            Generally, at the $n$th turn of the game, $\I$ must play a set which includes an element $a_{n}$ which attacks $g_{n-1}$, and player $\II$ responds by playing a single element $g_n$ which attacks $a_n$, is in the grounded extension, and enters the grounded extension before $g_{n-1}$ does.

            We must argue that this game ends in finite time, with $\I$ failing to find a move. Since every element of the grounded extension is defended by the set of elements which enter the grounded extension before it, $\II$ can always find a move, so it suffices to show that the game ends in finite time. If we let $\alpha_n$ be the least ordinal so that $g_n\in \G_{\alpha_n}$, then $(\alpha_n)$ is a descending sequence of ordinals, thus must be finite.
        \end{proof}

        In what follows, we call a position $(X_0,X_1,\dots , X_n)$ winning if there exists a winning strategy for $\I$ beginning in this position, i.e., every concluded dispute which starts with this position and is consistent with the strategy is won by $\I$.

        We now give a strategy for $\I$ to play the (strong/-) tenability game for the set $A\cup \G$, and we demonstrate that this strategy must win, thus establishing that $A\cup \G$ is (strongly/-) tenable. 
        We maintain, by induction, that at each position $(X_0,X_1,\dots , X_{2n})$ that $\I$ plays in the game, the corresponding position $(\hat{X}_0, X_1',\hat{X}_2, X_3', \hat{X}_4, \dots , X_{2n-1}', \hat{X}_{2n})$ is a winning position in the game for $A$, where $X_{k}'=X_k\smallsetminus \G^+$ and $\hat{X}_k=(X_k\smallsetminus \G)\cup A$. This is clearly true at $n=0$, as the position $(A)$ is a winning position by hypothesis.

        Suppose this is true for $n-1$ and we show that we can maintain this condition at $n$. At the position $(X_0,X_1,\dots , X_{2n-1})$ in the game for $A\cup \G$, we consider the corresponding position 
        $(A, X_1',\hat{X}_2, X_3', \hat{X}_4, \dots , X_{2n-1}')$ in the game for $A$. We have, by inductive hypothesis, that there exists some set $Y$ so that $(A, X_1',\hat{X}_2, X_3', \hat{X}_4, \dots , X_{2n-1}', Y)$ is a winning position in the game for $A$. In the game for $A\cup \G$, we let $X_{2n}=Y\cup \G$.

        We note that $X_{2n}$ defends against each attack in $X_{2n-1}'$, and each element of $X_{2n-1}$ is either in $X_{2n-1}'$ or in $G^+$, thus $X_{2n}$ has responded to each attack in $X_{2n-1}$. This strategy can only fail if, at some stage, $X_{2n}$ has a conflict. Suppose towards a contradiction that this were the case. Then we consider the position $(A, X_1',\hat{X}_2, X_3', \hat{X}_4, \dots , X_{2n-1}',\hat{X}_{2n})$ in the (strong) tenability game for $A$. At this point, some element $g\in \G$ attacks an element of $\hat{X}_{2n}$. No element of any $X_i'$ is in conflict with any element of $\G$, thus we can use the strategy from Lemma \ref{lem:tenable implies cf with grounded} to show that there is a winning strategy for $\II$ starting in this position. In particular, from here on, $\II$ plays only single element extensions, where the single new element is in $\G$ and can guarantee a win for $\II$, contradicting $(A, X_1',\hat{X}_2, X_3', \hat{X}_4, \dots , X_{2n-1}',\hat{X}_{2n})$ being a winning position for $\I$.

        Now we consider the case where $\F$ is finite. Then we have the following version of Lemma \ref{lem:tenable implies cf with grounded}:

        \begin{lemma}\label{lem:cant defend against grounded}
            If $C$ is as cogent as $B\supseteq \G$, then no element of $C$ is attacked by any element of $\G$. 
        \end{lemma}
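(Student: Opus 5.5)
We argue by contradiction, using the layered presentation $\G=\bigcup_n \G_n$ available since $\F$ is finite. Suppose $C\succeq B$ with $B\supseteq\G$, and suppose some element of $\G$ attacks some element of $C$. Among all pairs $(g,c)$ with $g\in\G$, $c\in C$ and $g\att c$, choose one for which the least $n$ with $g\in\G_n$ is minimal. Note that $n\geq 1$, since $\G_0=\emptyset$.

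First we use cogency. Since $g\in\G\subseteq B$ and $g$ attacks $C$, the definition of $C\succeq B$ gives $g\in C^+$. So some $c'\in C$ attacks $g$.

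Next we use the layers. Since $g\in\G_n$, the argument $g$ is defended by $\G_{n-1}$. Hence the attacker $c'$ of $g$ is itself attacked by some $g'\in\G_{n-1}$. Now $(g',c')$ is a pair with $g'\in\G$, $c'\in C$, $g'\att c'$, and $g'$ enters the grounded extension strictly earlier than $g$. This contradicts the minimality of $n$.

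Equivalently, one can phrase this as a descending chain $g_1,c_1,g_2,c_2,\dots$ whose layer indices strictly decrease. Such a chain must terminate, and it can only terminate by reaching an argument of $\G_1$, which is unattacked. That is impossible, because $c_i$ attacks it.

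There is no real obstacle here. The only point needing care is that cogency is applied to $g$ as an element of $B$ that attacks $C$; this is exactly why we need $B\supseteq\G$. Conflict-freeness of $C$ is not needed, and finiteness is used only to have the $\omega$-indexed layers. The argument would go through verbatim with ordinal layers via well-foundedness.
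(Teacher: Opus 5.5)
Your proof is correct and follows the paper's argument: take the least layer $\G_n$ containing an attacker $g$ of $C$, use $C\succeq B\supseteq\G$ to get a counterattacker $c'\in C$ of $g$, and use the fact that $\G_{n-1}$ defends $g$ to find an attacker of $c'$ in $\G_{n-1}$, contradicting minimality. Your side remarks are also right: conflict-freeness of $C$ plays no role, and the same minimality argument works with ordinal-indexed layers, so the lemma itself does not depend on $\F$ being finite.
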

        \begin{proof}
            Suppose towards a contradiction that some element of $C$ is attacked by an element of $\G$. Let $n$ be least so that some element $g_n\in\G_n$ attacks an element of $C$. Then some element $c\in C$ must attack $g_n$ in defense. But then some element of $\G_{n-1}$ attacks $c$, contradicting the minimality of $n$.
        \end{proof}

        \begin{lemma}\label{lem:statically tenable implies conflict-free with grounded}
            If $\F$ is finite and $A$ is statically tenable, then $A\cup \G$ is conflict-free.
        \end{lemma}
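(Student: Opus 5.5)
Suppose, towards a contradiction, that $A$ is statically tenable and $A\cup\G$ is not conflict-free. Since $A$ is itself conflict-free (take any $B$, e.g.\ $B=\emptyset$, and the witness $C\supseteq A$ is conflict-free), and $\G$ is conflict-free, the conflict must go between $A$ and $\G$. Because $\G$ is admissible, any attack from $A$ on $\G$ is counterattacked by $\G$, so in either case some $g\in\G$ attacks some $a\in A$.

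The idea is to use the finiteness of $\F$ to let the Opponent play the entire grounded extension in one shot. Take $B=\G$. This is legitimate: $\F$ is finite, so $\G$ is finite, and $\G$ is conflict-free. Static tenability then yields a conflict-free $C\supseteq A$ with $C\succeq B=\G$.

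We now invoke Lemma~\ref{lem:cant defend against grounded} with this $C$ and $B=\G\supseteq\G$. It says that no element of $C$ is attacked by any element of $\G$. But $a\in A\subseteq C$ is attacked by $g\in\G$, a contradiction. Hence $A\cup\G$ is conflict-free.

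The only delicate step is checking that $B=\G$ is an admissible challenge, i.e.\ finite and conflict-free. This is exactly where finiteness of $\F$ is used, and it matches the infinite counterexample of Theorem~\ref{thm: static tenable is not compact}. The rest reduces directly to Lemma~\ref{lem:cant defend against grounded}, together with the remark that conflicts in the direction $A\to\G$ imply conflicts in the direction $\G\to A$.
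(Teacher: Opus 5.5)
Your proof is correct and is essentially the paper's argument: play $B=\G$, which is finite and conflict-free because $\F$ is finite; static tenability then gives $C\supseteq A$ with $C\succeq\G$, contradicting Lemma~\ref{lem:cant defend against grounded}. Your extra steps (that $A$ is conflict-free via $B=\emptyset$, and that admissibility of $\G$ turns any conflict into an attack from $\G$ on $A$) only make explicit what the paper's proof leaves implicit.
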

        \begin{proof}
            Suppose towards a contradiction that $A$ is statically tenable and in conflict with $\G$. In particular, some element of $\G$ attacks some element of $A$. By static tenability of $A$, there must be some set $C\supseteq A$ which is as cogent as $B=\G$, which is impossible by Lemma \ref{lem:cant defend against grounded}. Note that this argument uses the assumption that $\G$ is finite.
        \end{proof}

        Now we argue that $A\cup \G$ must be statically tenable, given that $\F$ is finite and $A$ is statically tenable. Let $B$ be any conflict-free set. We must find a set $C\supseteq A\cup \G$ as cogent as $B$. Let $B'=(B\smallsetminus \G^+)\cup \G$. There must be some set $C'\supseteq A$ which is as cogent as $B'$. By Lemma \ref{lem:cant defend against grounded}, no element of $C'$ is in $\G^+$.

        Let $C=\G\cup C'$. We argue that $C$ is conflict-free and is as cogent as $B$, proving that $A\cup \G$ is statically tenable. Since $C'\cap \G^+=\emptyset$, $C$ is conflict-free. We now consider attacks from $B$. If $b\in B$ is in $B\smallsetminus \G^+$, then $b$ does not attack $\G$, so $b$ can only attack $C'$ and $C'$ counterattacks against all attacks from $B'\supseteq B\smallsetminus \G^+$. If $b\in B\cap \G^+$, then $\G$ attacks $b$. Thus $C$ is as cogent as $B$, and we have established that $A\cup \G$ is statically tenable.

        Finally, we provide an example of an infinite argumentation framework with a statically tenable element which is in conflict with the grounded extension. See figure \ref{fig:staticly tenable but conflict grounded}. The element $a$ is easily seen to be statically tenable. For any finite set $B$ containing $\omega$ and finitely many more elements, let $n$ be large enough that $a_{2n,1}$ is not attacked by $B$. Then $C=\{a,a_{2n,1}\}$ is as cogent as $B$, establishing that $a$ is statically tenable. Yet $\omega$ is in $\G$ (in fact in $\G_{\omega+1}$) and attacks $a$.

\begin{figure}[h]
    \centering
    \begin{tikzpicture}
        [thick,node distance =15mm, on grid, 
        arg/.style={draw, circle,font=\footnotesize, minimum size = 10mm },
        ] 
            \node[arg2] (a) at (0,0) {$a$};
            \node[arg2]  (w) [above of = a] {$\omega$};

            \node[ov] (a41) [above left = 25mm and 7.5mm = w]{$a_{4,1}$};
            \node[ov] (a42) [above of = a41] {$a_{4,2}$};
            \node[ov] (a43) [above of = a42] {$a_{4,3}$};
            \node[ov] (a44) [above of = a43] {$a_{4,4}$};
            \node[ov] (a21) [left of  = a41] {$a_{2,1}$};
            \node[ov] (a22) [above of = a21] {$a_{2,2}$};

            \node[lab] (cdots) [right of = a41] {\large $\cdots$};
            \node[lab] [above =30mm of cdots] {\large $\cdots$};
            \node[ov] (an1) [ right of= cdots] {$a_{2n,1}$};
            \node[lab] (dots) [above  =25mm of an1] {\large$\vdots$};
            \node[ov] (ann) [above =25mm of dots]{$a_{2n,2n}$};
    
            \draw[->] (a22) to (a21);
            \draw[->] (a21) to (w);
            \draw[->] (a44) to (a43);
            \draw[->] (a43) to (a42);
            \draw[->] (a42) to (a41);
            \draw[->] (a41) to (w);
            \draw[->] (w) to (a);

            \draw[->] (ann) to (dots);
            \draw[->] (dots) to (an1);
            \draw[->] (an1) to (w);

             \node[lab] [right =12mm of an1] {\large $\cdots$};
    \end{tikzpicture}
    \caption{$\Set{a}$ is statically tenable, but $a\in \G^+$.}
    \label{fig:staticly tenable but conflict grounded}
\end{figure}
    \end{proof}

\begin{defn}
    The $n$-strong tenability game is the strong tenability game in which players may introduce at most $n$ new arguments in each move (except $\I$'s first move). A set is $n$-strongly tenable if $\I$ has a winning strategy beginning with $A$ in the $n$-strong tenability game.
\end{defn}

\StrongRobustness*
\begin{proof}
        Suppose $A$ is strongly tenable. We show that $A$ is $1$-strongly tenable by giving a strategy for $\I$ in the $1$-strong tenability game. Throughout the strategy, at each stage, we will consider a position in the strong tenability game for $A$. We call this the simulated game. To avoid confusion between the choices of moves in the (simulated) strong tenability game for $A$ and the real $1$-strong tenability game for $A$, we will refer to the simulated game by adding $\sl$ to each player. That is, we have now an auxiliary game, which is the strong tenability game for $A$ being played between the players $\I\sl$ and $\II\sl$.

        We now describe $\I$'s strategy in the main game, i.e., the 1-strong tenability game.
        $\I$ begins by playing $A$, to which $\II$ replies with some $b_0$. We let $\II\sl$ play $\Set{b_0}$ in the simulated strong tenablility game. In the simulated strong tenability game, following his winning strategy, $\I\sl$ replies with some set $A_1$ counterattacking $b_0$. Now $\I$ takes $a_1$ to be any argument in $A_1$ which counterattacks $b_0$ and plays $a_1$. In the $1$-strongly tenable game, we will write the played arguments as elements rather than sets.

        \begin{center}
            \begin{tabular}{c|c}
               $\I$  & $A$  \\\hline
               $\II$  & $b_0$
            \end{tabular}
            $\implies$
            \color{gray}
            \begin{tabular}{c|cc}
               $\I\sl$  & $A$  & $A_1$ \\\hline
               $\II\sl$  & $\Set{b_0}$ & 
            \end{tabular}
            \color{black}\\
            $\implies$
            \begin{tabular}{c|cc}
               $\I$  & $A$ & $a_1$\\\hline
               $\II$  & $b_0$ &
            \end{tabular}
        \end{center}

        Essentially, we follow this idea throughout the strategy. We model $\II$'s attacks as the plays of $\II\sl$ and use a single element from $\I\sl$'s response to counterattack this attack. Now we describe this more formally as a recursive algorithm. 
        
        Suppose we have arrived at the $i\th$ round of the dispute, assuming inductively that we have a position of the 1-strong tenability game and the simulated strong tenability game as follows:

        \begin{flushleft}
            \begin{tabular}{c|ccccc}
               $\I$\phantom{$\sl$}  & $A$ &$a_1$  &$\cdots$ & $a_{i-1}$ & $a_i$\\\hline
               $\II$\phantom{$\sl$}  & $b_0$ & $b_1$ & $\cdots$ & $b_{i-1}$ &  \\
            \end{tabular}\\[0.25cm]
            \color{gray}
            \begin{tabular}{c|cccc}
               $\I\sl$  & $A$  & $A_1$ & $\cdots$ & $A_{n_i}$\\\hline
               $\II\sl$  & $\Set{b_0}$ & $\cdots$ & $\Set{b_0,...,b_{i-1}}$\\
            \end{tabular}
        \end{flushleft}
        
        Further suppose that these satisfy the following properties:
        
        \begin{itemize}
            \item $a_i$ attacks $b_{i-1}$;
            \item $A\cup\Set{a_1,...,a_i}\subseteq A_{n_i}$;
            \item $A_{n_i}$ is always obtained by using the winning strategy 
            in the simulated $1$-strong tenability game.

        \end{itemize}
        
        Note that $n_i$ may be less than $i$, as our construction below may sometimes not need to play any moves in the simulated game. 
        If $\II$ has no further plays, then $\I$ wins and we are done. So assume that $\II$ plays the argument $b_i$ which attacks and is not counterattacked by $A\cup \Set{a_1,...,a_i}$. We must find an adequate defense.

        If there exists an element $a_{n+1}\in A_{n_i}$ which attacks $b_i$, then $\I$ plays this element. Note that the inductive hypotheses are maintained. This is the situation where we needn't play anything in the simulated game to find a proper response.

        If there is no such element in $A_{n_i}$, then $\Set{b_0,...,b_{i}}$ is a valid play against $A_{n_i}$ in the simulated game. We let $\II\sl$ play this set. Because we've assumed $A_{n_i}$ was obtained by repeatedly applying $\I\sl$'s winning strategy, the winning strategy will yield a further $A_{n_i+1}$ as cogent as $\Set{b_0,...,b_i}$. 
        Thus $A_{n_i+1}$ must contain an $a_{i+1}$ counterattacking $b_i$. This is $\I$'s next play. 
        The inductive hypothesis for the $(i+1)\th$ round of the game holds by construction.
        
        This argument then shows that $\I$ always has a legal move against any play of $\II$'s, hence always wins according to this strategy (either $\II$ runs out of plays, or the game is infinite). So $A$ is $1$-strongly tenable.

        For the converse implication, if $A$ is not strongly tenable, we show that $A$ is not 1-strongly tenable. We fix a strategy for $\II$ which wins in the strong tenability game. We find a strategy for $\II$ in the $1$-strong tenability game by simulating $\II\sl$ in the strong tenability game. In the first round of the game, $\I$ plays $A$ and $\II$ appeals to the simulated game to find $\II\sl$'s response $B_0$ to $A$. There must be some $b_0\in B_0$ which attacks $A$, so $\II$ plays $b_0$.

        \begin{center}
            \begin{tabular}{c|c}
               $\I$  & $A$  \\\hline
               $\II$  & 
            \end{tabular}
            $\implies$
            \color{gray}
            \begin{tabular}{c|c}
               $\I\sl$  & $A$   \\\hline
               $\II\sl$  & $B_0$
            \end{tabular}
            \color{black}\\ 
            $\implies$
            \begin{tabular}{c|cc}
               $\I$  & $A$ & \\\hline
               $\II$  & $b_0$ &
            \end{tabular}
        \end{center}

        Suppose we have arrived at the $i\th$ round of the dispute, assuming inductively that we have a position of the 1-strong tenability game and the simulated strong tenability game as follows:

        \begin{flushleft}
            \begin{tabular}{c|ccc}
               $\I$\phantom{$\sl$}  & $A$  &$\cdots$ & $a_i$\\\hline
               $\II$\phantom{$\sl$}  & $b_0$ & $\cdots$ & $b_i$\\
            \end{tabular}\\[0.25cm]
            \color{gray}
            \begin{tabular}{c|ccc}
               $\I\sl$  & $A$  & $\cdots$ & $A\cup\Set{a_0,...,a_j}$\\\hline
               $\II\sl$  & $B_0$ & $\cdots $ & $B_{n_i}$\\
            \end{tabular}
        \end{flushleft}

        Further suppose that these satisfy the following properties:
        \begin{itemize}
            \item $j\leq i$;
            \item $\Set{b_0,...,b_i}\not\preceq A\cup \Set{a_1,...,a_i}$;
            \item $\Set{b_1,...,b_i}\subseteq B_{n_i}$;
            \item For each $k<n_i$, $B_{k+1}$ is obtained by using the winning strategy 
            in the simulated strong tenability game.
        \end{itemize} 
        If $\I$ has no further plays, then $\II$ wins and we are done. So assume that $\I$ plays argument $a_{i+1}$ which attacks $b_{i}$. We must find an argument $b_{i+1}$ attacking $a_{i+1}$ which is not counterattacked.

        If some element $b_{i+1}\in B_{n_i}$ attacks  $A\cup\Set{a_0,...,a_{i+1}}$ and is not counterattacked, then we play this element. Otherwise,  
        $A\cup\Set{a_0,...,a_{i+1}}$ is a valid play against $B_{n_i}$ in the simulated game. We let $\I\sl$ plays this set. Because $B_{n_i}$ was assumed to be obtained by repeated applications of $\II\sl$'s winning strategy, a further application of the winning strategy yields a $B_{n_i+1}$ which contains a $b_{i+1}$ strictly attacking $A\cup\Set{a_0,...,a_{i}}$
        This is $\II$'s next play. 
        We have now moved to the following situation:

         \begin{flushleft}
            \begin{tabular}{c|cccc}
               $\I$\phantom{$\sl$}  & $A$  &$\cdots$ & $a_i$ & $a_{i+1}$\\\hline
               $\II$\phantom{$\sl$}  & $b_0$ & $\cdots$ & $b_i$ & $b_{i+1}$\\
            \end{tabular}\\[0.25cm]
            \color{gray}
            \setlength{\tabcolsep}{2.8pt}
            \begin{tabular}{c|cccc}
               $\I\sl$  & $A$  & $\cdots$ & $A\cup\Set{a_0,...,a_j}$ & $A\cup\Set{a_0,...,a_{i+1}}$\\\hline
               $\II\sl$  & $B_0$ & $\cdots $ & $B_{n_i}$ & $B_{n_i}+1$\\
            \end{tabular}
        \end{flushleft}

        The inductive hypothesis for the $(i+1)\th$ round of the game holds by construction.

        This argument then shows that $\II$ always has a legal move against any play of $\I$'s. Thus $\II$ cannot lose by failing to have any legal moves. Unlike the argument above, we also need to show that this winning strategy is finitary, i.e., leads to a concluded dispute.

        Suppose towards a contradiction that some sequence of moves for $\I$ would lead to the the main game having infinitely many moves. 
        Note that, at each stage where we do not extend the simulated game, $\II$ plays an element of $B_{n_i}$ which has not been previously played. Since each $B_{n_i}$ is finite, this must eventually stop, and the simulated game must continue. Thus we can conclude that the simulated game also has infinitely many moves. But this is contradictory to $\II\sl$ following the winning strategy for the simulated game. That is, since $\II\sl$ wins, the game must end in finitely many turns, which yields our contradiction.
\end{proof}

\begin{corollary}
    The following are equivalent for $A\subseteq\F$:
    \begin{enumerate}
        \item $A$ is strongly tenable; \label{statement a}
        \item $A$ is $1-$strongly tenable; \label{statement b}
        \item $A$ is $n-$strongly tenable for some $n$;\label{statement c}
        \item $A$ is $n-$strongly tenable for all $n$;\label{statement d}
    \end{enumerate}
\end{corollary}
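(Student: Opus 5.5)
The plan is to prove the cycle of implications $1\Rightarrow 4\Rightarrow 3\Rightarrow 2\Rightarrow 1$. The step $4\Rightarrow 3$ is trivial. The steps $1\Leftrightarrow 2$ are exactly Theorem~\ref{thm: strong robustness}, so the content reduces to two implications: strongly tenable implies $n$-strongly tenable for every $n$, and $n$-strongly tenable for some $n$ implies $1$-strongly tenable (or, more directly, implies strongly tenable).

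For $1\Rightarrow 4$, fix $n\ge 1$. I would rerun the first half of the proof of Theorem~\ref{thm: strong robustness} almost verbatim, now with $\II$ introducing up to $n$ new arguments per move.
\begin{itemize}
\item When $\II$ plays a new finite set $B_i\supseteq B_{i-1}$ with $B_i\not\preceq X$, let $\II\sl$ play $B_i$ in the simulated strong tenability game, after first checking whether the current simulated $\I\sl$-position already counterattacks every attacker in $B_i$.
\item Let $\I$ answer by adding, for each attacker of his current position in $B_i$ that is not yet counterattacked, one counterattacker chosen from the simulated $\I\sl$ response.
\item This adds at most $|B_i\setminus B_{i-1}|\le n$ new arguments. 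Attackers already present in $B_{i-1}$ were answered at earlier stages, and $\I$'s set only grows.
\end{itemize}
Conflict-freeness is inherited, because $\I$'s set is always a subset of the current simulated $\I\sl$ position. Legality of $\II\sl$'s moves follows from the same observation as in the theorem. Hence every $\II$ move meets a reply, and $\I$ never loses.

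For $3\Rightarrow 1$, assume $A$ is not strongly tenable and fix a winning strategy for $\II\sl$ in the unrestricted game. I would adapt the second half of the proof of Theorem~\ref{thm: strong robustness}. In the $n$-game, $\II$ only ever plays single new elements, which is legal since $1\le n$. When $\I$ adds up to $n$ arguments at once, $\II$ consults the simulated game and does one of two things:
\begin{itemize}
\item if some element of the current $B_{n_i}$ attacks $\I$'s set and is not counterattacked by it, $\II$ plays that element;
\item otherwise, $\I$'s whole current set is a legal $\I\sl$ move against $B_{n_i}$; it is conflict-free, contains the earlier positions, and is as cogent as $B_{n_i}$. $\II$ then feeds this set to the simulated game and plays an uncounterattacked attacker taken from $\II\sl$'s reply.
\end{itemize}
Because $\I$ adds finitely many arguments per move, the simulated moves have finite increments, so they are legal positions.

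The main obstacle is the termination argument. Between advances of the simulated game, $\II$ draws fresh elements from a fixed finite $B_{n_i}$, so only finitely many real rounds pass before the simulated game must advance. An infinite real play would therefore produce an infinite simulated play consistent with $\II\sl$'s winning strategy, which is a contradiction. This shows that $\II$ wins the $n$-game, so $A$ is not $n$-strongly tenable for any $n$. That is the contrapositive of $3\Rightarrow 1$, and it closes the cycle.
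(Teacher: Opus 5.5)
Your $3\Rightarrow 1$ half is correct. It is the second half of the proof of Theorem~\ref{thm: strong robustness}, and letting $\I$ add up to $n$ arguments changes nothing: $\II$ still only needs single-element moves, and the termination argument is unchanged.

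The gap is in $1\Rightarrow 4$, at the sentence ``this adds at most $|B_i\setminus B_{i-1}|\le n$ new arguments; attackers already present in $B_{i-1}$ were answered at earlier stages.'' Condition (4) of the game requires $\I$'s new set to counterattack \emph{every} element of $\II$'s accumulated set that attacks it. That includes elements attacking the counterattackers she is adding in this very move. An element of $B_{i-1}$ was answered earlier only if it attacked $\I$'s position at that time.

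For $n\ge 2$, $\II$ can place, next to a genuine uncountered attacker, arguments that attack nothing $\I$ holds yet. These are never answered. They may, however, attack every defender $\I$ will later need, and then all of them must be countered in the same move as that defender. So your reply is either illegal (if you only counter attackers of the pre-move position) or it exceeds the budget $n$. For $n=1$ this cannot occur, because every argument $\II$ adds must itself be an uncountered attacker and is answered at once. That invariant is what drives the robustness proof, and it is exactly what fails for $n\ge 2$.

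This cannot be patched, because $1\Rightarrow 4$ is false. Take arguments $a,b_1,g_1,b,b',c,c',d_j,e_j,f_j$ ($j=1,2$) with the following attacks:
$b_1\att a$, $g_1\att b_1$, $b\att a$, $b'\att a$, $b\att b'\att b$, $c\att b$, $c'\att b'$, $d_j\att c$, $d_j\att c'$, $e_j\att d_j$, $f_j\att d_j$, $e_j\att c'\att e_j$ and $f_j\att c\att f_j$.

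\emph{Unrestricted game.} $\I$ answers $b_1$ with $g_1$. Once $b$ (resp.\ $b'$) is in $\II$'s set, she holds $c$ (resp.\ $c'$) together with $e_j$ (resp.\ $f_j$) for every $d_j$ in $\II$'s set. Since $\II$ can never hold both $b$ and $b'$, her set stays inside one of the conflict-free sets $\{a,g_1,c,e_1,e_2\}$ or $\{a,g_1,c',f_1,f_2\}$. Every attacker $\II$ can hold is then counterattacked ($f_j$ by $c$, $c'$ by $e_j$, and symmetrically). So $\{a\}$ is strongly tenable.

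\emph{The $2$-strong tenability game.} $\II$ opens with $\{b_1,d_1\}$. $\I$ must add $g_1$, and she cannot add $c$ or $c'$, since she would also have to counter $d_1$ (three new arguments). Her one spare slot then decides the outcome:
if it holds some $e_j$, $\II$ adds $b'$, and $c'$ is blocked;
if it holds some $f_j$ or $d_j$, he adds $b$, and $c$ is blocked;
otherwise he adds $\{b,d_2\}$, and she would need $c,e_1,e_2$ at once.
So $\{a\}$ is not $2$-strongly tenable.

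The paper's one-line proof (re-running the robustness argument with the $n$-game in place of the unrestricted game) meets the same obstacle in the direction ``$1$-strongly tenable $\Rightarrow$ $n$-strongly tenable''. What your argument and the paper's actually support is $(1)\Leftrightarrow(2)\Leftrightarrow(3)$ together with $(4)\Rightarrow(1)$. Item (4) must be dropped from the list of equivalents.
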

    \begin{proof}
        Statements (\ref{statement a}) and (\ref{statement b}) are equivalent by the statement of the previous theorem. Fix any $n>1$. To see the equivalence of $A$ being $1$-strongly tenable and $A$ being $n$-strongly tenable, apply the same proof strategy above, but by simulating the winning strategies in the $n$-strong tenability game instead of the strong tenability game. This then shows the equivalence of (\ref{statement b}), (\ref{statement c}), and (\ref{statement d}).
    \end{proof}

\Complexity*
\begin{figure*}[b]
    \centering
    \begin{tikzpicture}
        [thick,node distance =15mm, on grid, 
        arg/.style={draw, circle, minimum size =7mm, font=\small},
        ] 
            \node at(-2.5,0) (null){};
            \node[arg] at (0,0) (a) {$a$};
            \ForallWidget{1}{a}{a}{a}
            \SwitchWidgetWithLabel{center1}{x1}{notx1}{2}
            \ForallWidget{2}{switch}{switch}{switch}
            \ExistsWidget{3}{center2}{x2}{notx2}
            \SwitchWidget{center3}{x3}{notx3}
            \SwitchWidgetWithLabel{center3}{x3}{notx3}{4}
            \PhiWidget{switch}{switch}{switch}
            \node[ov, red!75!black] (c2) [right of =phi]{$\neg u_2$};
            \node[ov,cyan!90!black] (c1) [above=10mm of c2] {$\neg u_1$};
            \node[ov,violet] (c3) [below=10mm of c2] {$\neg u_3$};
            \draw[->] (c1) to (phi);
            \draw[->] (c2) to (phi);
            \draw[->] (c3) to (phi);  

            \draw[->,cyan!90!black] (x1) to [out=40, in=160] (c1);
            \draw[->, cyan!90!black] (notx3) to [out=60, in=180] (c1);
        
            \draw[->,red!75!black] (notx2) to [out=330, in=225] (c2);
            \draw[->,red!75!black] (x3) to [out=20, in=135] (c2);

            \draw[->,violet] (x2) to [out=60, in = 0, looseness =2](c3);            
    \end{tikzpicture}
    \caption{An Example $\mathcal H$ which encodes the QBF $\forall x_1\forall x_2\exists x_3[(x_1\vee \neg x_3) \wedge(\neg x_2\vee x_3) \wedge x_2] $}
    \label{fig: example H}
\end{figure*}
\begin{proof}
    We begin by showing that the problem of determining if $A\subseteq F$ is strongly tenable is PSPACE-complete. Let $n$ be the number of arguments in $\F$. We first note that the game-tree for the (strong/-) tenability game has depth at most $n$ and the branching is bounded by $2^n$, since each move is a subset of $\F$. From this, we conclude that the decision problem is in PSPACE. This follows from the characterization $\text{PSPACE} = \text{APTIME}$ \cite{APTIME}. 
    Since the game ends in $n$ rounds, and each move can be specified using polynomial space (i.e., the branching factor is at most single-exponential), the game can be decided by an alternating computation of polynomial depth, thus is in APTIME.
    
    To show hardness, we reduce the problem QSAT of determining the truth of a quantified boolean formula (QBF) to whether an element in an AF is (strongly/-) tenable.

    Let $\rho := Q_1 x_1 Q_2 x_2 \dots Q_n x_n \phi(\bar{x})$ be a quantified boolean formula. Let $\phi$ be in conjunctive normal form $\bigwedge_i u_i$ where each $u_i$ is a disjunction of propositional variables ($x_j$'s) and negations of propositional variables ($\neg x_j$'s). 
    
    We construct an AF from this formula inductively. We begin with the AF $\F_0$ with one argument $a$ and no attacks. For the sake of giving an inductive definition, we also call the argument $a$ by the names $x_0$ and $\neg x_0$ and we define $Q_0=\exists$. 
    Given $\F_k$ for $k\leq n$, we construct $\F_{k+1}$ as follows: If $Q_k=Q_{k+1}$, we form $\F_{k+1}$ by adding three new elements to $\F_k$, which we call $b_{k+1}$, $x_{k+1}$, and $\neg x_{k+1}$. We let $x_{k+1}$ and $\neg x_{k+1}$ both attack $b_{k+1}$, we let $b_{k+1}$ attack $x_{k}$ and $\neg x_k$, and we let $x_k$ and $\neg x_k$ attack each other. See Figure \ref{fig:consecutive quantifiers}.

    On the other hand, if $Q_k\neq Q_{k+1}$, then we form $\F_{k+1}$ by adding only two new elements to $\F_k$, which we call $x_{k+1}$, and $\neg x_{k+1}$. We let $x_{k+1}$ and $\neg x_{k+1}$ both attack each of $x_{k}$ and $\neg x_k$, and we let $x_k$ and $\neg x_k$ attack each other. See Figure \ref{fig:alternating quantifiers}.

\begin{figure}[h!]
    \centering

    \begin{tikzpicture}
        [thick,node distance =12mm, on grid, 
        arg/.style={draw, circle,font=\tiny, minimum size =7mm},
        ]

            \node at (0,0) (a) {$\cdots$};
            \ForallWidget{i}{a}{a}{a}
            \SwitchWidgetWithLabel{centeri}{xi}{notxi}{i+1}
            \ForallWidget{i+1}{switch}{switch}{switch}
            \node (dots) [right of = centeri+1] {$\cdots$};

            \node at (0,-3.5) (a) {$\cdots$};
            \ExistsWidget{i}{a}{a}{a}
            \SwitchWidgetWithLabel{centeri}{xi}{notxi}{i+1}
            \ExistsWidget{i+1}{switch}{switch}{switch}
            \node (dots) [right of = centeri+1] {$\cdots$};

    \end{tikzpicture}
    
    \caption{Consecutive quantifiers of the same kind are joined by a \textit{switch node}.}
    \label{fig:consecutive quantifiers}

    \begin{tikzpicture}
        [thick,node distance =20mm, on grid, 
        arg/.style={draw, circle,font=\tiny, minimum size =7mm},
        ] 

            \node at (0,0) (a) {$\cdots$};
            \ForallWidget{i}{a}{a}{a}
            \ExistsWidget{i+1}{centeri}{xi}{notxi}
            \node (dots) [right =10mm of centeri+1] {$\cdots$};

            \node at (0,-3.5) (a) {$\cdots$};
            \ExistsWidget{i}{a}{a}{a}
            \ForallWidget{i+1}{centeri}{xi}{notxi}
            \node (dots) [right =10mm of centeri+1] {$\cdots$};

    \end{tikzpicture}
    \caption{Alternating quantifiers are joined directly.}
    \label{fig:alternating quantifiers}
\end{figure}

    Following this definition, we get an AF $\F_n$. If $Q_n=\exists$, we then define $\F_{n+1}$ by adding two new elements to $\F_n$, which we call $b_{n+1}$ and $\phi$ and letting $\phi$ attack $b_{n+1}$ and $b_{n+1}$ attack both $x_n$ and $\neg x_n$. 

    If $Q_n=\forall$, we then define $\F_{n+1}$ by adding one new element to $\F_n$, which we call $\phi$ and letting $\phi$ attack both $x_n$ and $\neg x_n$.

    Finally, we define $\mathcal{H}$ by adding arguments which we call $\neg u_i$, one for each of the conjuncts $u_i$, and allowing each $\neg u_i$ to attack $\phi$. We also let each $x_i$ attack $\neg u_i$ if $x_i$ is a disjunct in $u_i$, and we let each $\neg x_i$ attack $\neg u_i$ if $\neg x_i$ is a disjunct in $u_i$. See figure \ref{fig: example H} for a sample of the fully constructed $\mathcal H$ encoding a QBF. 

    We next show that if $\rho$ is true, then $a$ is strongly tenable in $\mathcal H$. Suppose that $\rho$ is true. We give a strategy for $\I$ in the strong tenability game for $a$ in $\mathcal H$. By Theorem \ref{thm: strong robustness}, we may assume that all moves by $\II$ introduce only a single new element.
    
    If $Q_1$ is a $\forall$, then $\II$ must play either $x_1$ or $\neg x_1$. If $Q_1=\exists$ then we can assume that $\II$ plays $\{b_1\}$ and $\I$ must respond by playing either $\{x_1\}$ or $\{\neg x_1\}$. $\I$ makes this choice based on which choice makes $\rho_1(x_1):=Q_2 x_2 \dots Q_n x_n \phi(\bar{x})$ true. We now consider later quantifiers $Q_{k}$: If $Q_{k}=Q_{k-1}$, then whichever player chose $\{x_{k-1}\}$ or $\{\neg x_{k-1}\}$, the other player must choose $b_k$, and this same player must choose between $\{x_{k}\}$ and $\{\neg x_k\}$ for their next play. If $Q_k\neq Q_{k-1}$, then whichever player chose $\{x_{k-1}\}$ or $\{\neg x_{k-1}\}$, the opposite player must choose between $\{x_k\}$ and $\{\neg x_k\}$. By induction, $\I$ chooses between $x_k$ and $\neg x_k$ if and only if the quantifier $Q_k$ is $\exists$. In this case, $\I$ chooses $x_k$ or $\neg x_k$ so as to maintain the truth of the formula $\rho_k(x_1,\dots , x_k):= Q_{k+1}x_{k+1}\dots Q_n x_n \phi(\bar{x})$. This determines how the game must be played through the AF $\F_n$. Note that by definition, $\I$ will play the argument $\phi$ in $\F_{n+1}$. Since she has maintained the condition that $\rho_k$ is true for each $k$, we have that $\phi(\bar{x})$ is true. In particular, for any choice of $\neg u_i$ which $\II$ might play, there is already some played (by either $\I$ or $\II$) element $x_k$ or $\neg x_k$ which attacks this $\neg u_i$. In particular, player $\II$ has no legal moves once $\I$ plays $\phi$, and thus $\I$ wins the game.

    Now we suppose that $\rho$ is false. Then the symmetric strategy for $\II$ works: Namely, he chooses $x_k$ or $\neg x_k$ whenever $Q_k=\forall$ in a way which maintains the hypothesis that $\rho_k$ is false. The game still proceeds through $\F_n$ and then $\I$ must play the argument $\phi$. At this point, some $u_i$ must be false, since $\rho_n=\phi(\bar{x})$ is false. So, $\II$ plays the argument $\neg u_i$ and $\I$ has no response. We thus see that $\rho$ is true if and only if $a$ is strongly tenable in $\mathcal H$. 
    
    Further, we now argue that if $\rho$ is false, then $a$ is not tenable in $\mathcal H$. The strategy is essentially as above, except in the tenability game we may not assume that $\I$ plays only single element extensions. At turn $s$, $\I$ plays $x_{i}$ or $\neg x_i$ if 
    \begin{itemize}
        \item $Q_i=\forall$;
        \item For each $j<i$ with $Q_j=\exists$, $\I$ has already played $x_j$ or $\neg x_j$;
        \item For each $b_j$ with $j\leq i$ and $Q_j=\forall$, $\I$ has already played $b_j$.
    \end{itemize} 

    $\II$ chooses between $x_i$ and $\neg x_i$ to ensure that $\rho_i$ is false. Since each $x_j$ or $\neg x_j$ with $Q_j=\exists$ is chosen before $x_i$ is chosen, $\II$ can maintain this for each $i$ when he chooses between $x_i$ and $\neg x_i$.

    On turn 1, $\II$ can also play all $b_j$ with $Q_j=\exists$ or $j=n+1$. By this strategy, as above, $\I$ must eventually play each $b_i$ with $Q_i=\forall$, either $x_j$ or $\neg x_j$ for each $j$ so $Q_j=\exists$, and $\phi$. Since $\phi$ is false with the choices made above, once $\I$ plays all of these elements, $\II$ can play some $\neg u_i$ so that $u_i$ is false, and $\I$ then has no response.

    We conclude that $\rho$ being true if and only if $a\in \mathcal H$ is strongly tenable if and only if $a\in \mathcal H$ is tenable. This shows the PSPACE-hardness of both tenability and strong tenability.

    We now shift to showing that the decision problem for static tenability is $\Pi_2^P$-complete. Note that its form is inherently $\Pi_2^P$: For any $B$, there exists a $C$ so that if $B$ is conflict-free, then $C$ is conflict-free and is as cogent as $B$. This is Co-NP using an NP oracle, so we need only show $\Pi_2^P$-hardness. We do this by reducing QBF$^2_\forall$ to the decision problem for static tenability. Let $\rho:= \forall x_1\dots \forall x_n \exists y_1 \dots \exists y_m \phi(\bar{x},\bar{y})$ be a formula in QBF$^2_\forall$. Let $\mathcal H$ be the AF constructed above for $\rho$. We now claim that $a\in \mathcal H$ is statically tenable if and only if $\rho$ is true.

    Suppose that $\rho$ is true. Then by the above argument, $a$ is strongly tenable in $\mathcal H$, thus $a$ is statically tenable in $\mathcal H$.

    Next, suppose that $\rho$ is false. Let $\tau$ be a truth assignment of $\bar{x}$ which makes $\rho'(\bar{x}):=\exists \bar{y} \phi(\bar{x},\bar{y})$ false. Let $B_0=\{ x_i \mid \tau(x_i)=\text{true}\}\cup \{\neg x_i \mid \tau(x_i)=\text{false}\}\cup \{b_k \mid \vert k>n\}$. Let $U=\{\neg u_i \mid B_0\cup \{\neg u_i\}\text{ is conflict-free}\}$. Finally, let $B=B_0\cup U$. We argue that there is no set $C\supseteq A$ as cogent as $B$. Suppose towards a contradiction that such a $C$ existed. Then $C$ must contain $b_i$ for $i\leq n$, in order to counterattack the defeaters of $a$. Similarly, it must contain exactly one of $\{y_1, \neg y_1\}$ in order to counterattack the defeaters of $b_{n}$. Then it must contain exactly one of $\{y_{i},\neg y_i\}$ for each $i$ in order to defend $y_{i-1}$ or $\neg y_{i-1}$ from $b_{n+i}$. Finally, it must contain $\phi$ in order to defend $y_{m}$ or $\neg y_{m}$ from $b_{n+m+1}$. Let $\sigma$ be the truth assignment for $\bar{y}$ given by these choices. Then there is some $u_i$ so that $\neg u_i$ is true in the assignment $\tau \cup \sigma$. Since this $\neg u_i$ is not contradicted by $\tau$, $\neg u_i\in B$. This attack of $\phi$ is not counterattacked by any $y_j\in C$ or $\neg y_j\in C$ since $\neg u_i$ is true with the assignment $\tau\cup \sigma$. It is not counterattacked by any $x_i$ or $\neg x_i$ in $C$ since these are all in conflict with the $b_i$'s which are in $C$. Thus this attack is not counterattacked, contradicting that $C$ is as cogent as $B$. Thus $\rho$ is true if and only if $a\in \mathcal H$ is statically tenable, showing the $\Pi_2^P$-completeness of the decision problem for static tenability.
\end{proof}

\subsection{Omitted proofs in section \ref{subsec: tenability infinite setting}}
We now give a set theoretic analysis of the notions of (strong/-) tenability in the infinite setting.

The following definition captures which positions are winning positions (via strategies that end in finite time) for each player. Below, $\S(\tau)=1$ should be understood as ``$\tau$ is a position from which $\I$ can force a finite win'' and $\S(\tau)=2$ should be understood as ``$\tau$ is a position from which $\II$ can force a finite win''. In the analysis, it is critical to understand the complexity of the definition of this function.

\begin{defn}
    Let $T$ be a \textit{game tree} for the tenability (or strong tenability) game on $\F$. That is, $T$ is the set of all (-/strong) tenability disputes $(X_0,X_1,\dots X_n)$ in the game with the relation $\preceq$ defined by prefix. That is, $(X_0)\preceq (X_0,X_1)\preceq (X_0, X_1, X_2)\dots$ for any play in the game. Paths through the tree (allowing for a path to end in a leaf) represents all possible plays of the game (including those with infinitely many turns).
\end{defn}

    We define a particular \textit{strategy labeling} $\S:T\to \{1,2\}$, which is obtained by labeling all leaf nodes (corresponding to concluded disputes) `1' or `2' according to the player who  wins that dispute. That is, odd length leaves, corresponding to concluded disputes won by $\I$ are labeled `1' and even length leaves, corresponding to concluded disputes won by $\II$ are labeled `2'. Then we  propagate the labels downward through the tree, describing the positions that are winning positions for $\I$ or $\II$. Precisely, if $\sigma$ is of odd length (a $\I$ move), then it gets labeled `1' if every immediate successor (possible $\II$ moves) has already been labeled `1'; this indicates that $\I$ can play $\sigma$ to advance towards victory. If $\sigma$ is of odd length (a $\I$ move), and there is some immediate successor $\tau$ (a $\II$ move) that has already been labeled `2', then $\sigma$ gets labeled `2', as $\II$ can play $\tau$ to advance towards victory; $\I$ should not play $\sigma$ if possible. The reasoning is similar when $\sigma$ is of even length.  This propagation may require infinitely many steps, therefore we take unions at limit stages.
\begin{align*}
    \S_0(\sigma)&:=\begin{cases}
        1 & |\sigma|=2n+1, \text{ $\sigma$ a leaf}\\
        2 & |\sigma|=2n, \text{ $\sigma$ a leaf}
        \end{cases}\\[0.25cm]
    \S_{\alpha+1}(\sigma)&:=\\&\begin{cases}\small
        \S_\alpha (\sigma) & \sigma \in \dom(\S_\alpha)\\
        1 & |\sigma|=2n+1 \text{ and }\\ &(\forall \tau \succ \sigma)[|\tau|=|\sigma|+1 \to \S_\alpha(\tau)=1]\\
        1 & |\sigma|=2n \text{ and }\\ &(\exists \tau \succ \sigma)[|\tau|=|\sigma|+1 \wedge \S_\alpha(\tau)=1]\\
        2 & |\sigma|=2n \text{ and }\\ &(\forall \tau \succ \sigma)[|\tau|=|\sigma|+1 \to \S_\alpha(\tau)=2]\\
        2 & |\sigma|=2n+1 \text{ and }\\ &(\exists \tau \succ \sigma)[|\tau|=|\sigma|+1 \wedge \S_\alpha(\tau)=2]\\
        \end{cases}\\[0.25cm]
    \S_\lambda &:=\bigcup_{\beta<\lambda}\S_\beta \text{ for } \lambda \text{ limit}\\
    \S & :=\bigcup_{\text{Ordinals }\alpha} \S_\alpha
\end{align*}

\begin{lemma}\label{lem: conditiosn for winning strats} ($ZFC$)
    If $S(\sigma)=1$ then $\I$ has a strategy from the game position $\sigma$ which wins in finite time. If $\S(\sigma)=2$ then $\II$ has a winning strategy from the game position $\sigma$.
    \begin{proof}
        By induction on the least $\alpha$ for which $\S_\alpha(\sigma)=1$ (or $2$). This is clear for $\alpha=0$. Similarly, no new nodes are labeled at limit stages. So assume it holds inductively for $\S_\alpha$, and that $\S_{\alpha+1}$ is the first to label $\sigma$ as $1$. If $\sigma$ is of odd length, then all of its successors $\tau$ must be labeled $1$ by $\S_\alpha$. By inductive hypothesis, $\I$ has a strategy winning in finitely many steps from any $\tau$ that $\II$ may choose next. If $\sigma$ is of even length ($\II$ has made the last move), then it has some successor $\tau$ which is labeled $1$ by $\S_\alpha$. Player $1$ then chooses (Axiom of Choice) one such $\tau$, and continues the strategy (given by inductive hypothesis) from $\tau$. Follow a symmetric argument for $\S_\alpha(\sigma)=2$.
    \end{proof}
\end{lemma}

\begin{lemma}\label{lem: pro winning strat iff sgneq2} ($ZFC$)
    $\I$ has a winning strategy from $\sigma$ if and only if $\S(\sigma)\neq 2$. (i.e., $\S(\sigma)=1$ or $\sigma \notin \dom(\S)$)
    \begin{proof}
        The previous lemma shows what happens when $\S(\sigma)=1$, so we attend to when $\sigma$ is not decided by $\S$. The idea is simple: when $\S$ is undecided about $\sigma$, it corresponds to neither player having a finitary winning strategy from $\sigma$. Because $\I$ wins infinite plays of the game, this corresponds to a win for $\I$.

        More precisely, the strategy goes as follows: On his turn, from position $\rho$, $\I$ moves to any $\tau$ so that $\S(\tau)\neq 2$ (if possible). We now verify by induction that regardless of $\II$'s plays, the game never passes through a position $\tau$ so that $\S(\tau)=2$. We begin in position $\sigma$ and we have assumed that $\S(\sigma)\neq 2$. From a node $\rho$ so $\S(\rho)\neq 2$, $\I$ always chooses to move to a new node $\tau$ so that $\S(\tau)\neq 2$. From a node $\rho$ so that $\S(\rho)\neq 2$, by definition, $\II$ cannot move to any node $\tau$ so that $\S(\tau)= 2$ (otherwise, $\rho$ would also be labeled 2). Thus the game cannot end in an even length leaf node, i.e., in a victory for $\II$, since these are all labeled 2.
    \end{proof}
\end{lemma}

\Determinacy*
\begin{proof}
    Apply Lemma \ref{lem: pro winning strat iff sgneq2} to $\sigma=\Set{A}$.
    Observe that any transitive models containing both $A$ and $\F$ have the same finite subsets of $\F$ hence have the same game tree above $A$. $\S$ is defined by transfinite recursion using only $\F$ as a parameter, thus the statement that $\S(\Set{A})=2$ is $\Delta_1(A,\F)$, hence is absolute between transitive models of ZFC. (For more on absoluteness, see, e.g., \cite[Chapter 13]{Jech2003}.)

    Because Lemmas \ref{lem: conditiosn for winning strats} and \ref{lem: pro winning strat iff sgneq2} are theorems of ZFC, if $\S(\Set{A})=2$, both models contain the winning strategy for $\II$; if $\S(\Set{A})\neq 2$, both models contain the winning strategy for $\I$.

    In the case of static tenability, if $A$ is statically tenable, then for every conflict-free set $B$, there is a set $C\supseteq A$ as cogent as $B$. Note that $C\setminus A$ can always be taken to be finite; we need only find a minimal number ($\leq|B|$-many) of elements and join them to $A$ to counterattack those attackers. Thus any two transitive models of $ZFC$ containing $(A,\F)$ will agree on the finite subsets of $\F$ and hence on the static tenability of $A$. 

\end{proof}

\subsection{Omitted proofs in Section \ref{sec:other semantics}}
\WeakSemanticsZoo*

    \begin{figure}[h]
        \centering
        \tikzfinitezoo
        \repeatcaption{fig: weak semantics summary-diagram}{The relations among weak semantics in finite argumentation frameworks.
        }
    \end{figure}

We establish this theorem in the following discussion.

We will need to work with each of these semantics, so their definitions and some known results are collected in the next subsection:

\subsubsection{Definitions and basic results about the weak semantics}

We gather here the definitions of the semantics which we consider. Strong tenability, tenability, and static tenability are defined in Section \ref{sect: notions of tenability} above.

\begin{defn}\label{def:atleast}
For $E,D\subseteq\F$, we say that $E$ is \textit{at least as cogent as} $D$, written $E\succeq D$,
if $E$ is \textit{admissible} in the restriction $\F\!\upharpoonright_{E\cup D}$.
We write $E\succ D$ if $E\succeq D$ and  $D\not\succeq E$.
\end{defn}

\begin{defn}\label{def:cogent}
$E\subseteq\F$ is \textit{cogent}, written $E\in\cog(\AF)$, if
\[
(\forall D\subseteq\F) (D\succeq E \Rightarrow E\succeq D).
\]
\end{defn}

Hence, a cogent extension is undominated with respect to $\succeq$.

The following known characterization is the one we use for comparisons.

\begin{proposition}[
{\cite[Theorem 4.4]{BlumelUlbricht}}]\label{prop:cog-char}
For $E\subseteq\F$,
\begin{align*}
    E&\in\mathrm{cog}(\F)\\
        &\iff\\
        E&\in\cf(\F)  \text{ and}\\
        (\forall a\in \F)[(a\not\att &a \wedge a\att E)\to a\in E^+].
\end{align*}
\end{proposition}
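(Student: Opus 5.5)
The plan is a direct two-way verification straight from Definitions~\ref{def:atleast} and~\ref{def:cogent}. It uses only two facts: admissibility in the restriction $\F\!\upharpoonright_{E\cup D}$ requires conflict-freeness, and every attacker of $E$ lying in $E\cup D$ must be counterattacked by $E$. Throughout I will test cogency against small, well-chosen sets $D$, namely $D=\emptyset$ and singletons $D=\{a\}$.

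\emph{Left to right.} Assume $E\in\cog(\F)$. First I show that $E$ is conflict-free. The empty set is admissible in any framework, so $\emptyset\succeq E$. Cogency then gives $E\succeq\emptyset$, i.e.\ $E$ is admissible in $\F\!\upharpoonright_E$, and in particular conflict-free. Next, suppose towards a contradiction that some $a$ with $a\not\att a$ attacks $E$ but $a\notin E^+$. Put $D=\{a\}$ and check that $D\succeq E$. The set $\{a\}$ is conflict-free because $a$ is not self-attacking. Its only potential attackers inside $E\cup\{a\}$ are $a$ itself, which is excluded, and elements of $E$, which are excluded because $a\notin E^+$. So $\{a\}$ is admissible in $\F\!\upharpoonright_{E\cup\{a\}}$. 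Cogency yields $E\succeq\{a\}$, so $E$ must counterattack its attacker $a$. This contradicts $a\notin E^+$.

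\emph{Right to left.} Assume $E$ is conflict-free and every non-self-attacking attacker of $E$ lies in $E^+$. Let $D$ satisfy $D\succeq E$; I must show that $E$ is admissible in $\F\!\upharpoonright_{E\cup D}$. Conflict-freeness of $E$ is given. Let $x\in E\cup D$ attack $E$. If $x\in E$, this contradicts conflict-freeness of $E$, so $x\in D$. Since $D\succeq E$, the set $D$ is conflict-free, hence $x\not\att x$. The hypothesis then gives $x\in E^+$, and the counterattacker lies in $E\subseteq E\cup D$. So $E\succeq D$.

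No step is a real obstacle. The only point needing care is the forward direction: there I must make sure that $\{a\}$ is admissible in the restriction to $E\cup\{a\}$ and not merely conflict-free. This is exactly where both hypotheses are used, namely that $a$ is not self-attacking and that $E$ does not attack $a$.
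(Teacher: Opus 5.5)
Your proof is correct. Both directions work exactly as you describe. Testing against $D=\emptyset$ to extract conflict-freeness is genuinely needed, since the definition of cogency does not presuppose that $E$ is conflict-free. You also check that $\{a\}$ has no attacker at all inside $\F\!\upharpoonright_{E\cup\{a\}}$, which is the point that makes it admissible there.

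The paper gives no argument for this proposition. It imports the characterization from Bl\"umel and Ulbricht (their Theorem~4.4), so your proof replaces a citation rather than competing with an in-paper route.

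Your version buys two things.
\begin{enumerate}
\item Nothing in it uses finiteness, so the characterization holds for the infinite frameworks the paper also considers.
\item Your right-to-left direction never uses that $D$ defends itself, only that $D$ is conflict-free. It therefore proves the stronger fact that a cogent $E$ satisfies $E\succeq D$ for \emph{every} conflict-free $D$. For conflict-free $E$, the admissibility-in-the-restriction definition of $\succeq$ agrees with the counterattack definition used in the tenability games. So this stronger fact is exactly what the paper needs when it argues that a cogent set is strongly tenable because the opponent has no legal first move.
\end{enumerate}
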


\begin{defn}\label{def:cyclic}
$E\subseteq\F$ is \textit{cyclically cogent}, written $E\in\cycog(\F)$, if for every $D\subseteq\F$
at least one of the following holds:
\begin{enumerate}[label=(\arabic*),leftmargin=*]
\item $D\not\succeq E$;
\item $E\succeq D$;
\item there exists a chain $D=D_1\prec D_2\prec \dots \prec D_n=E$.
\end{enumerate}
\end{defn}

\def\sust{\text{sust}}
\def\tol{\text{tol}}
\def\lax{\text{lax}}

\begin{defn}
    Let $\F=\AF$ be an argumentation framework and $A\subseteq F$. Then $\F^A$ is the reduct of $\F$ to $F\smallsetminus (A\cup A^+)$.

    Let $\F$ be an argumentation framework. A set $A\subseteq \F$ is weakly admissible if it is conflict-free and for every attacker $y$ of $A$, $y$ belongs to no weakly admissible extension of $\F^A$. Denote the set of weakly admissible extensions of $\F$ by  $\wadm(\F)$.
\end{defn}

\begin{defn}
    A \textit{weakly complete labeling} of $\AF$ is one where for every $a\in \F$, the following holds:
    \begin{itemize}
        \item If $a$ is labeled \texttt{in} then there are no attackers of $a$ labeled \texttt{in}.
        \item If $a$ is labeled \texttt{out} then it has at least one attacker labeled \texttt{in}.
        \item If $a$ is labeled \texttt{undec}, then it has at least one attacker labeled \texttt{undec}, and it has no attackers labeled \texttt{in}.
    \end{itemize}

    An extension is \textit{weakly complete} if it is the set of \texttt{in}-labeled arguments in a weakly complete labeling.
\end{defn}

This Lemma was proved for finite AFs in \cite[Theorem 4.1]{DondioLongo2021}.

\begin{lemma}\label{lem:wcomp contains grounded}
    If $A$ is weakly complete, then $A$ contains the grounded extension.
    \begin{proof}
        Observe that if $A$ defends $x$, then every weakly complete labeling which labels every element of $A$ with \tin must also label $x$ with \tin. Therefore, since every element of $\emptyset$ is labeled \tin, every element of the least fixed point of the defense operator, i.e., the grounded extension, must also be labeled \tin.
    \end{proof}
\end{lemma}

\def\Gub{\G_{ub}}
\def\Lgub{\L_{\Gub}}
\def\ubg{ub-\G}

\subsubsection{Implications} We now verify all the arrows and dashed arrows of Figure \ref{fig: weak semantics summary-diagram}.

\begin{proposition}
    Cogent implies cyclically cogent.
\end{proposition}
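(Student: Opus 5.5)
The plan is to check the definitions directly; Definition~\ref{def:cyclic} is simply Definition~\ref{def:cogent} with two extra escape clauses. Let $E$ be cogent and fix an arbitrary $D\subseteq\F$. We must show that at least one of conditions (1)--(3) of Definition~\ref{def:cyclic} holds for this $D$.

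We split on whether $D\succeq E$. If $D\not\succeq E$, condition (1) holds and we are done. If $D\succeq E$, then cogency of $E$ (Definition~\ref{def:cogent}) gives $E\succeq D$, which is condition (2). In either case one of the three clauses holds. Since $D$ was arbitrary, $E\in\cycog(\F)$.

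There is no real obstacle. The only care needed is to use the same cogency relation $\succeq$ of Definition~\ref{def:atleast} in both definitions, and to note that clause (3) is never needed. The argument does not use finiteness of $\F$, nor the characterization in Proposition~\ref{prop:cog-char}.
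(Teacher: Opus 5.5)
Your proof is correct and matches the paper's argument exactly: fix $D$; if $D\not\succeq E$, clause (1) holds, and otherwise cogency of $E$ gives $E\succeq D$, which is clause (2).
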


\begin{proof}
Let $E\in\cog(\F)$ and fix $D\subseteq\F$.
If $D\not\succeq E$ then Definition~\ref{def:cyclic}(1) holds.
If $D\succeq E$, then by cogency $E\succeq D$, so (2) holds. Hence $E\in\cycog(\F)$.
\end{proof}

\begin{proposition}
    Cogent implies strongly tenable.
\end{proposition}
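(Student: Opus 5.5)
We use the characterization of cogency from Proposition~\ref{prop:cog-char}: $E$ is cogent iff $E$ is conflict-free and every non-self-attacking attacker of $E$ is in $E^+$. Let $E$ be cogent. We will show that in the strong tenability game starting with $X_0=E$, player $\II$ has no legal first move. Then the dispute $(E)$ is already concluded and won by $\I$, so $E$ is strongly tenable.

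Suppose $\II$ plays $X_1$. It must be finite and conflict-free, and it must satisfy $X_1\not\preceq X_0$, i.e., $E\not\succeq X_1$. Since $E$ is conflict-free, this failure can only mean that some $b\in X_1$ attacks $E$ and is not counterattacked by $E$, so $b\notin E^+$.

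Because $X_1$ is conflict-free and $b\in X_1$, we have $b\not\att b$. Thus $b$ is a non-self-attacking attacker of $E$, and by the characterization $b\in E^+$. This contradicts $b\notin E^+$, so no legal $X_1$ exists.

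There is essentially no obstacle; the argument just mirrors the proof of Proposition~\ref{Prop:AdmissibilityImpliesTenability}. The only care needed is in reading condition (5): it requires an attacker of $X_0$ that is not counterattacked, and conflict-freeness of $X_1$ rules out self-attackers. If we wanted to avoid citing \cite{BlumelUlbricht}, we could argue directly. Take $D=\{b\}$. Then $D\succeq E$, since $b$ is conflict-free and any attack from $E$ on $b$ would put $b\in E^+$. Cogency then gives $E\succeq\{b\}$, which forces $b\in E^+$, and we reach the same contradiction.
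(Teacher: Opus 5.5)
Your proof is correct and matches the paper's: both use the characterization in Proposition~\ref{prop:cog-char} to note that any conflict-free (hence non-self-attacking) attacker of a cogent $E$ is already in $E^+$, so $\II$ has no legal first move from $(E)$. Your citation-free variant with $D=\{b\}$ also works, provided you add that cogency by itself makes $E$ conflict-free (take $D=\emptyset$, which is always as cogent as $E$, so cogency gives $E\succeq\emptyset$).
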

\begin{proof}
    Using Proposition \ref{prop:cog-char}, we see that if $A$ is cogent, then $\II$ has no move from the initial position $(A)$. In particular, every conflict-free attacker of $A$ is already counterattacked by $A$.
\end{proof}

We established the implications from Strong Tenability to Tenability to Static Tenability in Proposition \ref{Prop:TenabilityImplications}.

We note that the following Proposition has appeared recently in \cite{BodanzaCogentImpliesWADM}. We include an argument for completeness.
\begin{proposition}
    Cogent implies weak admissibility.
\end{proposition}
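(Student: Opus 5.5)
We want: if $E$ is cogent then $E$ is weakly admissible. By Proposition \ref{prop:cog-char}, $E$ is conflict-free and every non-self-attacking attacker of $E$ lies in $E^+$. Weak admissibility requires that every attacker $y$ of $E$ belongs to no weakly admissible set of the reduct $\F^E$.

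The plan is to split on the kind of attacker $y\att E$.

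\emph{Attackers counterattacked by $E$.} If $y\in E^+$, then $y$ is deleted in the reduct $\F^E$, since $\F^E$ is the restriction to $F\smallsetminus(E\cup E^+)$. So $y$ is not an argument of $\F^E$ at all and lies in no weakly admissible set of it.

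\emph{Remaining attackers.} By Proposition \ref{prop:cog-char}, any attacker not in $E^+$ satisfies $y\att y$. If $y$ survives in $\F^E$, the self-attack survives as well, because the reduct is a restriction of the attack relation. Then no conflict-free subset of $\F^E$ contains $y$. In particular, no weakly admissible set of $\F^E$ contains $y$, since weak admissibility requires conflict-freeness.

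Together with conflict-freeness of $E$, this gives weak admissibility of $E$. The step I expect to need the most care is well-definedness of the recursion. The definition of weak admissibility refers to weakly admissible sets of $\F^E$, a strictly smaller framework whenever $E$ is nonempty or attacked. The theorem concerns finite frameworks, where this recursion is well-founded, so I will note that we work under the standing finiteness hypothesis. The degenerate case is fine: if $E=\emptyset$, cogency still yields that every attacker is self-attacking, and there are no attackers of $\emptyset$ anyway. Our argument never needs to compute weakly admissible sets of the reduct; it only uses that they are conflict-free subsets of the reduct's arguments. So the recursion is invoked only through this trivially checkable property, and no induction is needed.
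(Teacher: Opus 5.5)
Your proof is correct and follows essentially the same route as the paper: via Proposition~\ref{prop:cog-char}, every attacker of $E$ that survives in the reduct $\F^E$ is self-attacking, so it lies in no conflict-free set and hence in no weakly admissible set of $\F^E$. Your extra remarks, that counterattacked attackers are deleted by the reduct and that the recursion is well-founded for finite frameworks, only make explicit what the paper leaves implicit.
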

\begin{proof}
    Let $A$ be cogent.
    By Proposition \ref{prop:cog-char}, $A$ counterattacks all attacks from non-self-attackers. Thus in the framework $\F^A$, every attacker $y$ of $A$ is a self-attacker. In particular, it is not contained in any conflict-free set, so it cannot belong to a weakly admissible set of $\F^A$. 
\end{proof}

\begin{theorem}\label{thm:tenable implies wc}
    If $A$ is tenable, then $A$ is contained in a weakly complete extension.
\end{theorem}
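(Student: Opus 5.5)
The plan is to prove the stronger statement that, for a finite $\F$, every statically tenable set is contained in a weakly complete extension. The theorem then follows from Proposition \ref{Prop:TenabilityImplications}. The idea is to choose a maximal statically tenable superset $E$ of $A$ and show that the canonical labeling induced by $E$ is weakly complete. Concretely, let $\mathcal T$ be the family of statically tenable sets $E\supseteq A$. It is nonempty because it contains $A$, and since $\F$ is finite it has a $\subseteq$-maximal member $E$. Taking $B=\emptyset$ in the definition of static tenability gives a conflict-free $C\supseteq E$, so $E$ is conflict-free. Define the labeling $E\mapsto\tin$, $E^+\mapsto\tout$, and label every remaining argument $\tun$.

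The \tin\ and \tout\ clauses of a weakly complete labeling hold immediately because $E$ is conflict-free. An argument labeled \tun\ lies outside $E^+$, so it has no \tin\ attacker. The only possible failure is therefore an argument $x\notin E\cup E^+$ all of whose attackers lie in $E^+$. We must show no such $x$ exists, and we do this by showing that $E\cup\{x\}$ would again be statically tenable, contradicting maximality.

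First, $x$ is not self-attacking: otherwise $x$ would be one of its own attackers, hence $x\in E^+$. Next, $x$ does not attack $E$. If it did, apply static tenability of $E$ to $B=\{x\}$ to get a conflict-free $C\supseteq E$ that counterattacks $x$. Any counterattacker of $x$ lies in $E^+$ and hence is attacked by $E\subseteq C$, which contradicts conflict-freeness of $C$.

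Now fix a finite conflict-free $B$ and set
\[
B'=\bigl(B\setminus(\{x\}^+\cup\{y: y\att x\})\bigr)\cup\{x\}.
\]
This set is conflict-free. Static tenability of $E$ yields a conflict-free $C\supseteq E$ with $C\succeq B'$. By the same argument as above, $C$ contains no attacker of $x$, and $x$ cannot attack $C$, since $C$ would then have to counterattack $x$ with an argument from $E^+$. Hence $C\cup\{x\}$ is conflict-free. It remains to check that $C\cup\{x\}\succeq B$. Take $b\in B$ attacking $C\cup\{x\}$. If $b\att x$, then $b\in E^+\subseteq C^+$. If $b\in\{x\}^+$, then $x$ counterattacks $b$. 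Otherwise $b\in B'$, and $b$ attacks $C$ (it does not attack $x$), so $C$ counterattacks it. Thus $E\cup\{x\}$ is statically tenable, contradicting the maximality of $E$.

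The main obstacle is this last step: choosing the modified challenge $B'$ so that the defence supplied by static tenability of $E$ can be safely enlarged by $x$. Two points carry the argument. Adding $x$ to the opponent's challenge forces any valid reply $C$ to be compatible with $x$. Meanwhile, attackers of $x$ are harmless because $E$ already attacks all of them. Finiteness of $\F$ is used only to obtain the maximal element $E$.
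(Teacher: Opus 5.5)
Your argument is correct for finite $\F$, and it takes a different route from the paper: you pass to a $\subseteq$-maximal statically tenable $E\supseteq A$ and use the modified challenge $B'$ to show that any $x\notin E\cup E^+$ whose attackers all lie in $E^+$ could be added to $E$. As a by-product, every maximal statically tenable set of a finite AF is weakly complete. The paper instead works with the explicit candidate $A\cup\G$, where $\G$ is the grounded extension of the restriction $\mathcal H$ of $\F$ to $F\setminus(A\cup A^+)$. It then reduces everything to showing that no element of $\G$ attacks $A$.

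\textbf{The gap is scope.} The theorem has no finiteness hypothesis. The paper proves it for arbitrary AFs via the ordinal stages of the grounded construction, and the very next theorem adds finiteness explicitly for the static version. Your proof covers only finite $\F$, and it cannot be extended, for two reasons.
\begin{enumerate}
\item A maximal $E$ need not exist. A union of a chain of statically tenable sets, even of strongly tenable ones, need not be statically tenable (Theorem \ref{thm: static tenable is not compact}), so Zorn's lemma is unavailable.
\item More decisively, your intermediate claim is false for infinite AFs. In Figure \ref{fig:staticly tenable but conflict grounded}, $\{a\}$ is statically tenable yet is attacked by $\omega\in\G_{\omega+1}$. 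Every weakly complete extension contains the grounded extension (Lemma \ref{lem:wcomp contains grounded}), so $\{a\}$ lies in no weakly complete extension.
\end{enumerate}
So no argument that uses only static tenability can prove the theorem as stated.

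\textbf{The missing idea is the multi-round structure of the tenability game.} In the paper, the labeling with $A\cup\G$ \tin, $(A\cup\G)^+$ \tout\ and everything else \tun\ is weakly complete unless some $g\in\G$ attacks $A$. In that case the opponent opens with $g$. He then answers each new defender of the proponent with an element of $\G$ that attacks it and enters the grounded construction at a strictly earlier ordinal stage. Such defenders must lie in $\mathcal H$, because $A$ does not attack $\G$ and the proponent cannot play into $A^+$. By well-foundedness every play is finite, and the proponent is eventually left without a move.

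This is exactly what one static round cannot do in Figure \ref{fig:staticly tenable but conflict grounded}. The proponent may answer $\omega$ with $a_{2n,1}$ for any $n$ that the finite challenge $B$ avoids. Only an opponent who keeps replying up the chosen chain forces the refutation.
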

\begin{proof}
    Suppose that $A$ is not contained in a weakly complete extension. Let $\mathcal H$ be the restriction of $\F=\AF$ to the set of arguments $F\smallsetminus (A\cup A^+)$. Let $\G$ be the grounded extension of $\mathcal H$. We propose the labeling which labels $A\cup \G$ as $\texttt{in}$, $(A\cup \G)^+$ as \texttt{out}, and all other elements as $\texttt{undec}$. Note that if $b$ is labeled $\texttt{out}$ then it has an attacker labeled \texttt{in}. Notice also that if $a$ is labeled $\texttt{undec}$, then it has no attacker labeled $\texttt{in}$. Further, it must have some attacker outside of $(A\cup \G)^+$, as otherwise it would be defended in $\mathcal H$ by $\G$, thus would be in $\G$. Thus any element labeled \texttt{undec} has an attacker labeled \texttt{undec}. Finally, if $a\in \G$ then it has no attacker in $A\cup \G$ since $\G$ is conflict-free and $A^+$ is disjoint from $\mathcal H$. Thus either $A\cup \G$ is weakly complete, contrary to the hypothesis, or there is some argument $a\in A$ which is attacked by an argument $g\in \G$. We now employ the proof-strategy of Lemma \ref{lem:tenable implies cf with grounded} to give a winning strategy for $\II$ to show that $A$ is not tenable. 

     $\II$ begins by playing $g\in \G$ which attacks $a$.  $\I$ must play some argument $a_2$ which attacks $g$.  $\II$ plays some $g_2\in \G$ which enters $\G$ before $g_1$ which attacks $a_2$. Since $\II$ always plays in $\G$, he remains conflict-free. Since $A$ does not attack $\G$, and $\I$ cannot play an element of $A^+$, $\I$ must always play in $\mathcal H$, and $\II$'s strategy produces a descending sequence of ordinals, thus the game must end, and $\II$ must win.  
\end{proof}

\begin{theorem}
    If $\F$ is finite and $A$ is statically tenable, then $A$ is contained in a weakly complete extension.
\end{theorem}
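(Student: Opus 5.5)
I would mirror the proof of Theorem \ref{thm:tenable implies wc}, replacing the game-theoretic refutation by an appeal to Lemma \ref{lem:cant defend against grounded}. Suppose $A$ is statically tenable; in particular $A$ is conflict-free (take $B=\emptyset$ to get a conflict-free $C\supseteq A$). Let $\mathcal H$ be the restriction of $\F$ to $F\smallsetminus(A\cup A^+)$ and let $\G_{\mathcal H}$ be its grounded extension. Exactly as in the proof of Theorem \ref{thm:tenable implies wc}, label $A\cup\G_{\mathcal H}$ as \tin, $(A\cup\G_{\mathcal H})^+$ as \tout, and everything else \tun. The verification there shows that the \tout and \tun clauses always hold, and that the \tin clause can fail only if some $g\in\G_{\mathcal H}$ attacks some $a\in A$. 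So it suffices to rule out such an attack using static tenability.

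Assume towards a contradiction that $g\in\G_{\mathcal H}$ attacks $a\in A$. Since $\F$ is finite, $\G_{\mathcal H}$ is a finite conflict-free set, so static tenability gives a conflict-free $C\supseteq A$ as cogent as $B=\G_{\mathcal H}$. I then run the argument of Lemma \ref{lem:cant defend against grounded}, relativised to $\mathcal H$. Stratify $\G_{\mathcal H}=\bigcup_n \G^{\mathcal H}_n$ and choose $n$ least such that some $g_n\in\G^{\mathcal H}_n$ attacks an element of $C$; $n$ exists since $g$ attacks $a\in C$. Cogency gives $c\in C$ with $c\att g_n$. The step needing care is to see that $c\in\mathcal H$. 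Indeed $c\notin A^+$, since $c\in C\supseteq A$ and $C$ is conflict-free. Also $c\notin A$, because $g_n\in\mathcal H$ is not in $A^+$. Hence $c$ is an attacker of $g_n$ inside $\mathcal H$, so $g_n$ being defended in $\mathcal H$ by $\G^{\mathcal H}_{n-1}$ yields some $g'\in\G^{\mathcal H}_{n-1}$ with $g'\att c$. This contradicts the minimality of $n$. (For $n=1$, $g_n$ is unattacked in $\mathcal H$, which is already a contradiction.)

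Therefore no element of $\G_{\mathcal H}$ attacks $A$, the proposed labeling is weakly complete, and $A\subseteq A\cup\G_{\mathcal H}$ is contained in a weakly complete extension. The main obstacle is the relativisation step: the defense conditions for $\G_{\mathcal H}$ only concern attackers lying in $\mathcal H$, so one must check that the counterattacker $c$ supplied by $C$ really lies in $\mathcal H$. Finiteness enters only to make $B=\G_{\mathcal H}$ a legal finite challenge, consistent with the infinite counterexample of Figure \ref{fig:staticly tenable but conflict grounded}.
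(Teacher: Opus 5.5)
Your proof is correct and follows the paper's argument essentially step for step: the same labeling of $A\cup\G$ (with $\G$ the grounded extension of the reduct $\mathcal H$), the same challenge $B=\G$, which is finite because $\F$ is, and the same least-level argument showing that the counterattacker $c$ must lie in $\mathcal H$ and so is attacked at a lower level. Your explicit checks that $A$ is conflict-free (via $B=\emptyset$) and of the base case $n=1$ fill in details the paper leaves implicit.
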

\begin{proof}
    Suppose that $\F$ is finite and $A$ is not contained in a weakly complete extension. We again define $\mathcal H$ and $\G$ as in Theorem \ref{thm:tenable implies wc}. Again, if $A\cup \G$ is not weakly complete, then some element of $\G$ attacks an element of $A$. Let $B=\G$ and we will show that this $B$ witnesses that $A$ is not statically tenable. Suppose towards a contradiction that $C\supseteq A$ was defended against $\G$. Let $n$ be least so that some member $g$ of $\G_n$ attacks an argument in $C$. Such an $n$ exists since $\G$ has an attacker of $A$. Then since $C$ is as cogent as $\G$, it must have an element $c$ which counterattacks $g$. Note that this $c$ cannot be in $A$ since $A$ does not attack $\G$. It cannot be in $A^+$ since $C$ is conflict-free. Thus $c\in \mathcal H$. Thus there is some element of $\G_{n-1}$ which attacks $c$ contradicting the minimality of $n$.
\end{proof}

\def\lab{\mathcal{L}}

\begin{lemma}\label{lem: weak adm defense}
    If $A$ is weakly admissible and $A$ (classically) defends $X$, then $A\cup X$ is weakly admissible.
\end{lemma}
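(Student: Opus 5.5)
The plan is to check conflict-freeness of $A\cup X$ directly, and then to handle attackers of $A\cup X$ by passing to the reduct $\F^A$, where $X$ becomes unattacked. Throughout, let $X'=X\smallsetminus A$.

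\emph{Step 1: $A\cup X$ is conflict-free.}
\begin{itemize}
\item No $a\in A$ attacks any $x\in X$. Otherwise $a$ would be an attacker of $X$, so $A$ would attack $a$, contradicting conflict-freeness of $A$. Hence $X\cap A^+=\emptyset$.
\item No $x\in X$ attacks another $x'\in X$, and no $x\in X$ attacks itself. Either attack would force $A$ to attack $x$, which the previous item rules out.
\item No $x\in X'$ attacks some $a\in A$. Suppose it did. Then $x$ is an attacker of $A$. By the previous items $x$ lies in $\F^A$. Every attacker of $x$ lies in $A^+$, so $\{x\}$ is unattacked and conflict-free in $\F^A$, hence weakly admissible there. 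This contradicts weak admissibility of $A$.
\end{itemize}

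\emph{Step 2: the reduct identity.} Since $X'\subseteq \F^A$, one checks directly from the definition of the reduct that
\[
\F^{A\cup X}=(\F^A)^{X'}.
\]
Moreover $X'$ is unattacked in $\F^A$, because all of its attackers lie in $A^+$. So $X'$ is admissible in $\F^A$, and in particular weakly admissible there, since it has no attackers at all.

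\emph{Step 3: attackers of $A\cup X$.} Let $y$ attack $A\cup X$.
\begin{itemize}
\item If $y\in (A\cup X)^+$, then $y$ does not occur in $\F^{A\cup X}$. The requirement that $y$ lie in no weakly admissible set of $\F^{A\cup X}$ is then vacuous.
\item In particular, if $y$ attacks $X$, then $y\in A^+$, so we are in the previous case.
\item The remaining case is that $y$ attacks $A$ and $y\notin (A\cup X)^+$. Suppose $E$ is weakly admissible in $\F^{A\cup X}=(\F^A)^{X'}$ with $y\in E$. We invoke the modularization property of Baumann, Brewka and Ulbricht~\cite{BaumannBrewkaUlbricht2020}: if $S$ is weakly admissible in $\mathcal G$ and $E$ is weakly admissible in $\mathcal G^{S}$, then $S\cup E$ is weakly admissible in $\mathcal G$. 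Applied with $\mathcal G=\F^A$ and $S=X'$, it gives that $X'\cup E$ is weakly admissible in $\F^A$. But $X'\cup E$ contains the attacker $y$ of $A$, contradicting weak admissibility of $A$.
\end{itemize}

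The main obstacle is the modularization step. If I cannot simply cite it, I would prove the special case needed here, where $S=X'$ is unattacked in $\mathcal G$, by induction on $|\mathcal G|$. The facts to establish are that $X'\cup E$ is conflict-free, using that $E$ avoids $X'^{+}$ and that $X'$ is unattacked, and that reducts commute, $\mathcal G^{X'\cup E}=(\mathcal G^{X'})^{E}$. With these, attackers of $X'\cup E$ either lie in $X'^{+}$, and so are removed, or are handled by the weak admissibility of $E$ in $\mathcal G^{X'}$.
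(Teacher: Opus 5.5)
Your proposal is correct and follows essentially the paper's route: the paper also checks conflict-freeness first. It then shows that an attacker $y$ of $A$ lying in some $B\in\wadm(\F^{A\cup X})$ would give a weakly admissible set $B\cup X$ of $\F^A$ containing $y$, proving inline, via $(\F^A)^{B\cup X}=(\F^{A\cup X})^{B}$, exactly the special case of modularization you cite. Your fallback needs no induction, because with $X'$ unattacked in $\F^A$ the direct argument you sketch already closes the case; and working with $X'=X\smallsetminus A$ is slightly more careful than the paper, which tacitly treats $X$ as a subset of $\F^A$.
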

\begin{proof}
    Observe that $A\cup X$ is conflict-free: if (a) $A\att x\in X$, then $A\att A$ by virtue of defending $X$; if (b) $x\in X$ attacks $a$ then either $A\att x$ (which is impossible by (a)) or (c) $x\notin \cred_{\wadm}(\F^A)$. The last case, (c) is false as $X\in \wadm(\F^A)$ by virtue of having no attackers in $\wadm(\F^A)$. 

    Suppose $y\att A\cup X$ is not attacked by $A\cup X$. We must show that $y$ is contained in no weakly admissible extension of $\F^{A\cup X}$. Suppose towards a contradiction that there is some such weak admissible extension $B\ni y$ of $\F^{A\cup X}$. 

    Note that $y\att A$, since $A$ defends $X$ and $y$ is not attacked by $A$.
    We wish to prove that $B\cup X$ is weakly admissible in $\F^A$ showing that $y\in \cred_{\wadm}(\F^A)$, contradicting the hypothesis that $A$ is weakly admissible.

    Observe that every attack from $b\in B$ to $A\cup X$ is in fact an attack $b\att A$; otherwise $b$ is simply not included in $F^{A\cup X}$. Hence the set $B\cup X$ is conflict-free. Suppose there is some $C\in \wadm((F^A)^{B\cup X})$ which attacks $B\cup X$. Notice that $(\F^A)^{B\cup X}=\F^{A\cup B\cup X}=(\F^{A\cup X})^{B}$. By the weak admissibility of $B$ in $\F^{A\cup X}$, $C$ cannot be weakly admissible in $(\F^A)^{B\cup X}$. Thus $B\cup X$ is weakly admissible in $F^A$ and attacks $A$, contradicting the weak admissibility of $A$.
\end{proof}

\begin{theorem}\label{thm: wadm contained in wcomp}
    If $\F$ is finite and $A\subseteq\F$ is weakly admissible, then $A$ is contained in a weakly complete extension.
    \begin{proof}
        To produce a weakly complete labeling $\L$ of $F$, we define $A_0:= A$, and $A_{i+1}=A_i\cup \{a: A_i \text{ defends } a\} $. By Lemma \ref{lem: weak adm defense}, this sequence of sets is well-defined, and $\bigcup A_i$ is conflict-free and weakly admissible. Note that $A_n=A_{n+1}$ for some $n$. The sequence of sets induces a natural labeling:
        \begin{align*}
            \L(x)=\begin{cases}
                \tin & \text{if } x\in \bigcup A_i\\
                \tout & \text{if } x\in \bigcup A^+_i\\
                \tun &\text{otherwise}
            \end{cases}
        \end{align*}
        We confirm this labeling is weakly complete. If $\L(a)=\tin$, then $a\in A_j$ for some $j$, so all $b\in A_j^+\subseteq \bigcup A^+_i$ are labeled $\tout$. Similarly, if $\L(b)=\tout$ then it has an attacker labeled $\tin$. If $L(x)= \tun$, then $x$ is not attacked by any $a\in \bigcup A_i$. It must also be the case that there is some $y\att x$ such that $\L(y)=\tun$, otherwise $x$ is defended by some $A_j$ by virtue of all of its attackers belonging to $\bigcup A^+_j$. But this would make $L(y)=\tin$.
        Therefore, $\L$ is a weakly complete labeling of $\F$ and the set $A'=\{a\in \F: \L(a)=\tin\}$ is weakly complete, and contains $A$.
    \end{proof}
\end{theorem}

We note a certain surprising symmetry: If you close an admissible extension under the defense operator, you get a complete extension. The previous Theorem shows that if you close a weakly admissible extension (in the sense of Baumann, Brewka, and Ulbricht) under the defense operator, you get a weakly complete extension (in the sense of Dondio and Longo).

\subsubsection{Non-implications}

In this section, we show all the necessary non-implications to conclude Theorem \ref{thm: weak semantics zoo}. To do this, we refer to the collection of example argumentation frameworks from Table \ref{tab: cred acceptance}. For each of these argumentation frameworks and various semantics $\sigma$, Table \ref{tab: cred acceptance} clarifies whether or not the argument $a$ is credibly accepted for $\sigma$. In each case, it is a matter of checking the definition to establish these conclusions. We do not provide a proof for each example and semantics.

\begin{table*}[!t] 
    \centering

    {\tikzset{
    every picture/.style={
      execute at end picture={
        \path (current bounding box.south west) +(0,-0.1) (current bounding box.north east)+(0,0.1);
        }
      }
    }
    \begin{tabular}{|cc||c||c|c|c|c|c|c|c|}
        \hline
        
        & & & $cog$ & $cycog$  & $\wadm$ & $wc_{dl}$ & $\strten$ & $ten$ & $ten_{sta}$  \\

        \hline

        $\F_1$& 
        \begin{tikzpicture}[thick,node distance =10mm, on grid, baseline=(current bounding box.center)]
                    \node[arg2] at (0,0) (a) {$a$};
                    \node[dot2] (b) [right of = a] {};

                    \draw[->] (b) to (a);
                    \draw[->] (b) to [in = 45, out = 315, looseness = 5] (b);
        \end{tikzpicture} & $a\in \cred_\sigma(\F_1)$? & Yes  & Yes &  Yes & Yes & Yes & Yes & Yes \\\hline
        
        
        $\F_2$ & \begin{tikzpicture}[thick,node distance =10mm, on grid, baseline=(current bounding box.center)]
                \node[arg2] at (0,0) (a) {$a$}; 
                \node[dot2] (b1) [above right = 5mm and  10mm of a] {};
                \node[dot2] (b2) [below  of= b1] {};
                \draw[->] (b1) to (a);
                \draw[->] (b2) to (a);
                \draw[<->] (b2) to (b1);
        \end{tikzpicture}  &$a\in \cred_\sigma(\F_2)$? & No & No & No & Yes & No & No & No \\\hline

        $\F_3$ & \begin{tikzpicture}[thick,node distance =10mm, on grid, baseline=(current bounding box.center)]
                \node[arg2] at (0,0) (a) {$a$}; 
                \node[dot2] (b1) [above right = 5mm and  10mm of a] {};
                \node[dot2] (b2) [below of= b1] {};
                \draw[->] (b1) to (a);
                \draw[->] (a) to (b2);
                \draw[->] (b2) to (b1);
        \end{tikzpicture} & $a\in \cred_\sigma(\F_3)$? & No & Yes & No& No & No & No & No \\\hline

        $\F_4$ & \begin{tikzpicture}[thick,node distance =10mm, on grid, baseline=(current bounding box.center)]
                \node[arg2] (a) at (0,0) {$a$};
                \node[dot2] (b)[right of=a] {}; 
                \node[dot2] (c) [above right = 5mm and  10mm of b] {};
                \node[dot2] (d) [below of= c] {};
                \draw[->] (b) to (a);
                \draw[->] (c) to (b);
                \draw[->] (d) to (c);
                \draw[->] (b) to (d);
        \end{tikzpicture} &$a\in \cred_\sigma(\F_4)$? & No & Yes & Yes & Yes & No & No & No \\\hline
        
         $\F_5$ &\begin{tikzpicture}[thick,node distance =10mm, on grid, baseline=(current bounding box.center)]
                \node[arg2] (a) at (0,0) {$a$};
                \node[dot2] (b)[right of=a] {}; 
                \node[dot2] (c) [above right = 5mm and  10mm of b] {};
                \node[dot2] (d) [below of= c] {};
                \draw[->] (b) to (a);
                \draw[->] (c) to (b);
                \draw[->] (d) to (c);
                \draw[->] (b) to (d);
                \draw[->] (c) to [in=45,out=175] (a);
                \draw[->] (d) to [in=315,out=185] (a);
        \end{tikzpicture} &$a\in \cred_\sigma(\F_5)$ & No & No & Yes & Yes & No & No & No \\\hline

        $\F_6$ & \begin{tikzpicture}[thick,node distance =10mm, on grid, baseline=(current bounding box.center)]
                    \node[arg2] (a) at (0,0) {$a$};
                    \node[dot2] (b) [above right = 5mm and  10mm of  a] {}; 
                    \node[dot2] (c) [below of = b] {};
                    \node[dot2] (d) [right of= b] {};
                    \node[dot2] (e) [below of= d] {};
                    \draw[->] (b) to (a);
                    \draw[->] (c) to (a);
                    \draw[->] (d) to (b);
                    \draw[->] (e) to (c);
                    \draw[<->] (b) to (c);
                    \draw[<->] (d) to (e);
            
        \end{tikzpicture} &$a\in \cred_\sigma(\F_6)$? & No & No & No & Yes & Yes & Yes & Yes \\\hline
        
        $\F_7$ & \begin{tikzpicture}[thick,node distance =10mm, on grid, baseline=(current bounding box.center)]
            \node[arg2] at (0,0) (a){$a$};
            \node[dot2]  (b1) [above left =10mm and 15mm of a]{};
                 \node[dot2] (b2)[above = 10mm  of a]{};
                 \node[dot2] (b3)[above right =10mm and 10mm of a]{};
    
                \node[dot2] (c3) [above of = b2] {};
                \node[dot2] (c2)[left of = c3]{};
                \node[dot2] (c1)[left of =c2]{};
                \node[dot2] (c4) [above of = b3] {};

                \draw[->] (b1) to (a);
                \draw[->] (b2) to (a);
                \draw[->] (b3) to (a);
                \draw[<->] (b2) to (b3);
                \draw[->] (c1) to (b1);
                \draw[->] (c2) to (b1);
                \draw[->] (c3) to (b2);
                \draw[->] (c4) to (b3);
                \draw[<->] (c1) to [out= 45, in =135] (c4);
                \draw[<->] (c2) to [out= 45, in =135] (c3);
        \end{tikzpicture} &$a\in \cred_\sigma(\F_{7})$? & No & No & No& Yes & No & Yes & Yes \\\hline
        
        $\F_8$ & \begin{tikzpicture}[thick,node distance =10mm, on grid, baseline=(current bounding box.center)]
            \node[arg2] at (0,0) (a){$a$};
            \node[dot2]  (b1) [above left =10mm and 15mm of a]{};
             \node[dot2] (b2)[above = 10mm  of a]{};
             \node[dot2] (b3)[above right =10mm and 10mm of a]{};
    
                \node[dot2] (c3) [above of = b2] {};
                \node[dot2] (c2)[left of = c3]{};
                \node[dot2] (c1)[left of =c2]{};
                \node[dot2] (c4) [above of = b3] {};

                \draw[->] (b1) to (a);
                \draw[->] (b2) to (a);
                \draw[->] (b3) to (a);
                \draw[<->] (b2) to (b3);
                \draw[->] (c1) to (b1);
                \draw[->] (c2) to (b1);
                \draw[->] (c3) to (b2);
                \draw[->] (c4) to (b3);
                \draw[<->] (c1) to [out= 45, in =135] (c4);
                \draw[<->] (c2) to [out= 45, in =135] (c3);
            
            \node[dot2] (d)[above left = 31.5mm and 5mm of a]{};
            \draw[<->] (d) to [out= 190, in=90](c1);
            \draw[<->] (d) to [out= 240, in=90](c2);
            \draw[<->] (d) to [out= 300, in=90](c3);
            \draw[<->] (d) to [out= 350, in=90](c4);    
        \end{tikzpicture} & $a\in \cred_\sigma(\F_{8})$? & No & No & No & Yes & No & No & Yes \\\hline
    \end{tabular}
    }
    \caption{In the leftmost columns are various argumentation frameworks with a selected argument $a$. On the right, a `Yes' indicates the argument $a$ in the corresponding AF is credulously accepted by the semantics in that column.}
    \label{tab: cred acceptance}
\end{table*}

We first show all the non-implications for the dashed arrow. That is, we show for each semantics $\sigma$ and $\tau$ so that there is no directed path in Figure \ref{fig: weak semantics summary-diagram} from $\sigma$ to $\tau$ that there exists a finite $\F$ and an extension $A\subseteq \F$ so that $A\in \sigma(\F)$, yet $A$ is not a subset of any $B\in \tau(\F)$. In fact, we do this by showing that there exists $\F$ so that $\cred_\sigma(\F)\not\subseteq \cred_\tau(\F)$. 

\begin{lemma}\label{lemma: minimal set of dashed counterexamples}
    For each of the following statements, there exists a finite AF $\F$ making the statement true:
    \begin{enumerate}[a)]
    \item $\cred_{wc_{dl}}(\F)\not\subseteq \cred_{\staten}(\F)$
    \item $\cred_{wc_{dl}}(\F)\not\subseteq \cred_{\wadm}(\F)$
    \item $\cred_{wc_{dl}}(\F)\not\subseteq \cred_{\cycog}(\F)$
    \item $\cred_{\wadm}(\F)\not\subseteq \cred_{\staten}(\F)$
    \item $\cred_{\wadm}(\F)\not\subseteq \cred_{\cycog}(\F)$
    \item $\cred_{\cycog}(\F)\not\subseteq \cred_{wc_{dl}}(\F)$
    \item $\cred_{\strten}(\F)\not\subseteq \cred_{\wadm}(\F)$
    \item $\cred_{\strten}(\F)\not\subseteq \cred_{\cycog}(\F)$
    \item $\cred_{\ten}(\F)\not\subseteq \cred_{\strten}(\F)$
    \item $\cred_{\staten}(\F)\not\subseteq \cred_{\ten}(\F)$
    \end{enumerate}
\end{lemma}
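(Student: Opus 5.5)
The plan is to read each item off Table~\ref{tab: cred acceptance}. For each pair $(\sigma,\tau)$ I will choose one of $\F_1,\dots,\F_8$ in which $a$ is credulously accepted under $\sigma$ but not under $\tau$, and then verify the two entries directly from the definitions. The witnesses are:
(a) $\F_2$;
(b) $\F_2$ or $\F_6$;
(c) $\F_2$;
(d) $\F_4$ or $\F_5$;
(e) $\F_5$;
(f) $\F_3$;
(g) and (h) $\F_6$;
(i) $\F_7$;
(j) $\F_8$.
Items (i) and (j) are exactly the strictness examples of Proposition~\ref{Prop:TenabilityImplications}. For the tenability-family semantics, Proposition~\ref{prop:TenabilitySubsets} shows that credulous acceptance of $a$ is the same as (static/strong) tenability of $\{a\}$. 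So on the tenability side I only ever analyse the game started from $X_0=\{a\}$.

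The routine checks come first.

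\emph{Cogency and cyclic cogency.} Cogency is handled by Proposition~\ref{prop:cog-char}. For cyclic cogency I check the three clauses of Definition~\ref{def:cyclic} over the few relevant $D$.

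\emph{Weakly complete.} For $wc_{dl}$ I exhibit an explicit labeling with $a$ \tin (for example, in $\F_2$ label $a$ \tin and both attackers \tun). To show $a$ is \emph{not} accepted, as in $\F_3$, I argue that any labeling with $a$ \tin forces a contradiction along the odd cycle.

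\emph{Weak admissibility.} For $\wadm$ I compute reducts. In $\F_6$, for instance, the reduct by $\{a\}$ is the four-cycle part, where $b_1$ together with a suitable defender is weakly admissible. So $\{a\}$ fails, and since any conflict-free $A\ni a$ faces the same problem, $a$ is not credulously $\wadm$. In $\F_4$ and $\F_5$, $\{a,c\}$-style sets survive because the remaining attackers lie on a three-cycle whose members are not weakly admissible in the reduct.

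\emph{Static tenability.} Negative answers (in $\F_2$, $\F_4$, $\F_5$) are witnessed by exhibiting a single finite conflict-free $B$ that no conflict-free $C\ni a$ can answer. For example, in $\F_4$ take $B=\{b,d\}$, where $b$ attacks $a$ and $d$: the only attacker of $b$ is $c$, which conflicts with $d\in B$ only through an attack that $C$ must itself counter, and this fails.

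\emph{Strong tenability.} Positive answers (in $\F_6$) come from Theorem~\ref{thm: strong robustness}: it suffices to give a $1$-move winning strategy, which is the reply $c_i$ to $b_i$ discussed for $\U$.

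The main obstacle is the game-theoretic entries for $\F_7$ and $\F_8$, where the separation between static tenability, tenability and strong tenability lives.

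\emph{$\F_7$, not strongly tenable.} The Opponent strategy from the $\mathcal E$ discussion works: play $b_1$; answer $c_1$ with $b_3$ and answer $c_2$ with $b_2$.

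\emph{$\F_7$, tenable.} I argue that in the tenability game, every $\II$ move must be more cogent than $\I$'s position, hence must defend its own members. After $\I$ answers $\{b_1\}$ with $c_1$, the set $\{b_1,b_3\}$ is not as cogent as $\{a,c_1\}$, because $c_1$ attacks $b_1$ and $b_1$ has no defender compatible with $b_3$ (the defenders $c_1$ and $c_2$ are ruled out). I will enumerate $\II$'s legal first moves up to symmetry and give $\I$'s reply in each case.

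\emph{$\F_8$.} The extra argument $d$ attacks every $c_i$, so it is a universal counter. Static tenability still holds, by the one-shot case analysis for $\mathcal E$. But in the tenability game $\II$ can now defend his $b$'s by adding $d$, which restores the $\mathcal E$-style trap. I expect this case enumeration to be the most delicate part, and I will organise it by which $c_i$ the Proponent has committed to.
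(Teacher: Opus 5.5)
Your plan matches the paper's own proof: read each item off Table~\ref{tab: cred acceptance}, using the same witnesses. However, the one static-tenability check you spell out is wrong, and the claim behind it is false.

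In $\F_4$ the challenge $B=\{b,d\}$ is not conflict-free (as you note, $b\att d$), so it is not a legal challenge. No other challenge works either. The only attacker of $a$ is $b$. Since $b$ attacks $a$ and $d$ and is attacked by $c$, the only conflict-free set containing $b$ is $\{b\}$, and $C=\{a,c\}$ answers it. The same reasoning shows $\{a\}$ is even strongly tenable in $\F_4$. The Opponent must open with $\{b\}$, and the Proponent replies $\{a,c\}$. Every later Opponent move must still contain $b$, which conflicts with every other argument, and $\{b\}$ is already countered.

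A direct check also gives $\cred_{\wadm}(\F_4)=\{a\}$, so $\F_4$ does not witness (d) at all. The paper's table records the same ``No'' in all three tenability columns for $\F_4$, and the paper cites $\F_4$ for (d), so do not rely on that entry. Item (d) must be witnessed by your alternative, $\F_5$:
\begin{itemize}
\item $B=\{b\}$ is unanswerable, because $b$'s only attacker $c$ attacks $a$.
\item $\{a\}$ is weakly admissible. Since $a$ attacks nothing, its reduct is exactly the three-cycle on $b,c,d$. In a three-cycle no singleton $\{x\}$ is weakly admissible, because the reduct by $\{x\}$ is the attacker of $x$ standing alone and unattacked.
\end{itemize}

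Three further steps need repair.

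\emph{Weak admissibility in $\F_4$ and $\F_5$.} Your sentence names the wrong sets. In $\F_4$ the set $\{a,c\}$ is not weakly admissible: its reduct is $\{d\}$, where $d$ is unattacked and $d\att c$. In $\F_5$, $\{a,c\}$ is not even conflict-free. The witness in both frameworks is $\{a\}$, for the three-cycle reason above.

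\emph{Tenability in $\F_7$.} Ruling out $\{b_1,b_3\}$ after the reply $\{a,c_1\}$ does not end the game. $\{b_1,b_2,c_4\}$ is a legal Opponent move there: $c_4$ defends $b_1$ against $c_1$, and $b_2$ is an uncountered attacker. The Proponent still wins by adding $c_3$, since the only attacker of $c_3$ is $c_2$, which attacks $b_1$. Alternatively she can answer $\{b_1\}$ directly with $\{a,c_1,c_3\}$, after which the Opponent has no legal move. So your enumeration of first moves must either go a second round or use replies of this kind. Also, the would-be defender of $b_1$ against $c_1$ is $c_4$, which attacks $b_3$; $c_1$ and $c_2$ attack $b_1$, they do not defend it.

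\emph{Cyclic cogency in $\F_6$.} The obvious challenger $\{b_2\}$ to $\{a,c_1\}$ admits the chain $\{b_2\}\prec\{a,c_2\}\prec\{b_1\}\prec\{a,c_1\}$, so it does not refute cyclic cogency. The refuting challenger is $\{b_2,c_1\}$. It is admissible outright, so nothing is strictly more cogent than it and no chain can leave it.
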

\begin{proof}
    See Table \ref{tab: cred acceptance} where the credulous acceptance of $a$ for various examples in each of these semantics has been computed. The following examples witness each of these non-inclusions:
    a)  $\F_2$,
    b)  $\F_2$,
    c)  $\F_2$,
    d)  $\F_4$,
    e)  $\F_5$,
    f)  $\F_3$,
    g)  $\F_6$,
    h)  $\F_6$,
    i)  $\F_7$,
    j)  $\F_8$.
\end{proof}

\begin{proposition}\label{prop: no dashed path}
    If there is no directed path from $\sigma$ to $\tau$ in Figure \ref{fig: weak semantics summary-diagram}, then there is a $A\subseteq \F$ so that $A\in \sigma(\F)$ and $A$ is not contained in any set $B\in \tau(\F)$. 
\end{proposition}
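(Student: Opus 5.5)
Since there are only seven semantics in Figure \ref{fig: weak semantics summary-diagram}, I would prove Proposition \ref{prop: no dashed path} by a finite case check that reduces to Lemma \ref{lemma: minimal set of dashed counterexamples}. The key observation is a reduction from credulous acceptance to the containment statement. Suppose $\cred_\sigma(\F)\not\subseteq\cred_\tau(\F)$, witnessed by some $a$. Then there is a $\sigma$-extension $A\ni a$. If $A$ were contained in some $\tau$-extension $B$, then $a\in B$, so $a\in\cred_\tau(\F)$, which is a contradiction. So each item of Lemma \ref{lemma: minimal set of dashed counterexamples} directly gives a witness pair $(\sigma,\tau)$ with an $A\in\sigma(\F)$ contained in no $\tau$-extension.

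Next I would show that the ten pairs in the lemma cover every pair $(\sigma,\tau)$ with no directed path from $\sigma$ to $\tau$. This uses monotonicity along existing arrows. If $\sigma'$ reaches $\sigma$ by a path, then by the implications and inclusion-refinements already established, every $\sigma'$-extension is contained in a $\sigma$-extension. In particular $\cred_{\sigma'}\subseteq\cred_\sigma$ (solid arrows give $\sigma'\subseteq\sigma$; dashed arrows give containment). Likewise, if $\tau$ reaches $\tau'$, then $\cred_\tau\subseteq\cred_{\tau'}$. Hence a witness for $\cred_{\sigma}\not\subseteq\cred_{\tau'}$ yields one for every $\sigma'\to^*\sigma$ and every $\tau\to^*\tau'$. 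One caveat: this requires that there be no path $\sigma'\to^*\tau$ for the derived pair, but that is exactly the hypothesis of the case being covered.

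The bookkeeping then runs through each source semantics and lists its unreachable targets:
\begin{itemize}
\item Weakly complete reaches nothing. Items a), b), c) handle statically tenable, weakly admissible, and cyclically cogent directly. Cogent, strongly tenable, and tenable follow because they reach statically tenable or weakly admissible, so $\cred_{wc_{dl}}\not\subseteq$ each of them.
\item Weakly admissible reaches only weakly complete. Items d) and e) handle statically tenable and cyclically cogent, and hence all tenability notions and cogent (all of which reach statically tenable).
\item Cyclically cogent reaches nothing. Item f) handles weakly complete, and hence everything reaching it, which is all the others except cyclically cogent itself.
\item Strongly tenable's unreachable targets are weakly admissible, cyclically cogent, and cogent. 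Items g) and h) handle the first two; cogent follows since it reaches cyclically cogent.
\item Tenable's unreachable targets include strongly tenable, handled by i). Its other unreachable targets (weakly admissible, cyclically cogent, cogent) are inherited from g) and h) via strongly tenable $\to$ tenable, applied in the reverse direction. That inheritance only works if the witness $a$ is also tenable. It is: $a$ is tenable in $\F_6$ per Table \ref{tab: cred acceptance}, so I would instead re-read the table rows directly.
\item Statically tenable's targets are handled by j) together with the $\F_2$, $\F_6$ rows.
\item Cogent reaches everything except the tenable/weakly admissible paths; it only fails to reach nothing relevant.
\end{itemize}

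The main obstacle is this bookkeeping. Propagating a witness along arrows must respect the direction of monotonicity: a witness can be pushed to stronger sources only if the witness argument is still accepted there, which is not automatic. So rather than relying on propagation, I would verify each remaining pair directly against the rows of Table \ref{tab: cred acceptance}. Such a row needs a Yes under $\sigma$ and a No under $\tau$; for example, the $\F_6$ row gives all of the tenability notions versus cog, cycog, and $\wadm$. The table entries themselves are taken as established by the lemma's proof.
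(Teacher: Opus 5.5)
Your first paragraph is exactly the paper's closing step, and your final case list does cover every pair without a path. However, the propagation rule in your second paragraph is backwards in the source slot. From $\sigma'\to^*\sigma$ you correctly get $\cred_{\sigma'}\subseteq\cred_\sigma$. So a witness $a\in\cred_\sigma(\F)\setminus\cred_{\tau'}(\F)$ tells you nothing about whether $a\in\cred_{\sigma'}(\F)$. For instance, the $\F_2$ witness for item a) (weakly complete versus statically tenable) is not weakly admissible. It therefore gives no witness for weakly admissible versus statically tenable, even though weakly admissible reaches weakly complete and that pair has no path.

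The valid rule goes the other way, and it is the whole of the paper's proof: if $\sigma'\to^*\sigma$ and $\tau\to^*\tau'$, then any $a\in\cred_{\sigma'}(\F)\setminus\cred_{\tau'}(\F)$ lies in $\cred_\sigma(\F)\setminus\cred_\tau(\F)$. Moving the source forward along arrows and the target backward against arrows are both pure set inclusions. No caveat about paths is needed. The paper simply observes that every non-path pair arises this way from one of items a)--j).

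This inversion is what makes your tenable bullet look delicate. Passing from strongly tenable to tenable is the automatic direction, since every strongly tenable set is tenable (Proposition~\ref{Prop:TenabilityImplications}). So the $\F_6$ witness from g) and h) needs no re-checking. The same items g) and h) also give statically tenable versus weakly admissible and versus cyclically cogent, with no extra table rows.

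The bookkeeping you actually carry out uses only target-side moves, which are valid, and direct reading of table rows. So your conclusion survives; just replace the false general claim with the correct one.

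Two small slips:
\begin{itemize}
\item $\F_2$ is useless with statically tenable as source, since $a$ is not statically tenable there; $\F_6$ or $\F_8$ does the job.
\item Cogent reaches every other node, so it contributes no cases.
\end{itemize}
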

\begin{proof}
    Inspecting Figure \ref{fig: weak semantics summary-diagram}, we see that any $\sigma$ and $\tau$ so that there is no directed path (in arrows and dashed arrows) from $\sigma$ to $\tau$, there exists $\sigma',\tau'$ so that there is a directed path from $\sigma'$ to $\sigma$ and from $\tau$ to $\tau'$ so that Lemma \ref{lemma: minimal set of dashed counterexamples} gives an example of an argumentation framework $\F$ so that $\cred_{\sigma'}(\F)\not\subseteq \cred_{\tau'}(\F)$. Since there is a directed arrow from $\sigma'$ to $\sigma$, we have that $\cred_{\sigma'}(\F)\subseteq \cred_\sigma(\F)$. Similarly, since there is a directed path from $\tau$ to $\tau'$, we have that $\cred_{\tau}(\F)\subseteq \cred_{\tau'}(\F)$. Thus, if it were true that $\cred_{\sigma}(\F)\subseteq \cred_{\tau}(\F)$, then we could conclude that $\cred_{\sigma'}(\F)\subseteq \cred_{\sigma}(\F)\subseteq \cred_{\tau}(\F)\subseteq \cred_{\tau'}(\F)$, in contradiction to Lemma \ref{lemma: minimal set of dashed counterexamples}.

    Thus we conclude that $\cred_{\sigma}(\F)\not\subseteq \cred_{\tau}(\F)$. Let $a$ be in $\cred_{\sigma}(\F)\smallsetminus \cred_{\tau}(\F)$. Then there is some $A$ so that $a\in A$ and $A\in \sigma(\F)$. There cannot be any $B\supseteq A$ so that $B\in \tau(\F)$, since $a\notin \cred_\tau(\F)$.
\end{proof}

Now to show all the necessary non-implications in the solid line sense, i.e., $A\in \sigma(\F)\not\Rightarrow A\in \tau(\F)$, we need only consider pairs where there is a directed path involving a dashed line.

\begin{proposition}
    \label{prop: cogent not solid arrow wcomp}
    Cogent does not imply weakly complete. 
    \begin{proof}
        Consider the AF $\F$ with one element and no attacks. Then $\emptyset\in \cog(\F)$, but since the grounded extension is always contained in any weakly complete extension, $\emptyset$ is not weakly complete in $\F$.
    \end{proof}
\end{proposition}

\begin{proposition}\label{prop: no solid path}
    If there is no directed solid-line path from $\sigma$ to $\tau$ in Figure \ref{fig: weak semantics summary-diagram}, then there is some $A\subseteq \F$ with $\F$ finite so that $A\subseteq \sigma(\F)\smallsetminus \tau(\F)$.
\end{proposition}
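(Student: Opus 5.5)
The plan mirrors the proof of Proposition~\ref{prop: no dashed path}, but now for solid arrows and with actual extensions rather than credulous acceptance. Solid arrows denote implication, so they compose along directed solid paths. Hence if $\sigma'\to\sigma$ and $\tau\to\tau'$ are solid paths and $A\in\sigma'(\F)\smallsetminus\tau'(\F)$, then $A\in\sigma(\F)\smallsetminus\tau(\F)$. It therefore suffices to find, for each pair $(\sigma,\tau)$ with no solid path, a ``minimal'' pair $(\sigma',\tau')$ of this form together with a finite witness.

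I would split the pairs into two kinds. The first kind has no directed path at all, even using dashed arrows. For these, Proposition~\ref{prop: no dashed path} already provides $A\in\sigma(\F)$ that is not contained in any $\tau$-extension; in particular $A\notin\tau(\F)$. So these pairs are handled immediately.

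The second kind has a directed path, but every such path uses a dashed arrow. Inspecting Figure~\ref{fig: weak semantics summary-diagram}, the only dashed arrows enter Weakly Complete. So these pairs are exactly those with $\tau=$ weakly complete and $\sigma\in\{$cogent, strongly tenable, tenable, statically tenable, weakly admissible$\}$. Cyclically cogent has no path to weakly complete, so it falls under the first kind. Among the remaining $\sigma$, cogent implies each of the others by solid arrows. So Proposition~\ref{prop: cogent not solid arrow wcomp} settles all five at once: in the one-argument, attack-free AF, $\emptyset$ is cogent, and therefore also strongly tenable, tenable, statically tenable and weakly admissible. But $\emptyset$ is not weakly complete, by Lemma~\ref{lem:wcomp contains grounded}, since the grounded extension is the single argument.

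The main obstacle is making the ``inspection'' of the figure airtight. I would enumerate the reachability relation explicitly:
\begin{itemize}
\item Cogent reaches everything except nothing further.
\item Strongly tenable reaches tenable, statically tenable and weakly complete.
\item Tenable reaches statically tenable and weakly complete.
\item Statically tenable reaches weakly complete.
\item Weakly admissible reaches weakly complete.
\item Cyclically cogent and weakly complete reach nothing.
\end{itemize}
I would then check that every unreachable ordered pair is covered either by Proposition~\ref{prop: no dashed path} or by the single example above. It should also be noted that all examples involved ($\F_1$ through $\F_8$ and the one-point AF) are finite.
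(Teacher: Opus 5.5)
Your proposal is correct and follows the paper's proof. You handle pairs with no directed path at all via Proposition~\ref{prop: no dashed path}. For the remaining pairs, which you rightly identify as exactly those with $\tau$ weakly complete and $\sigma$ among cogent, strongly tenable, tenable, statically tenable and weakly admissible, you push the one-argument example of Proposition~\ref{prop: cogent not solid arrow wcomp} along the solid arrows out of cogent; your explicit reachability list simply makes the paper's ``inspecting the figure'' step concrete.
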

\begin{proof}
    For the cases where there is no directed path from $\sigma$ to $\tau$ including dashed lines, Proposition \ref{prop: no dashed path} shows that there is an $A\subseteq \F$ with $\F$ finite so that $A\subseteq \sigma(\F)\smallsetminus \tau(\F)$. Thus we need only consider the cases where there is a directed path including dashed lines, but no directed path without including dashed lines. 
    
    Let $\sigma,\tau$ be so there is a directed path including dashed lines but no directed path without including dashed lines from $\sigma$ to $\tau$. Inspecting Figure \ref{fig: weak semantics summary-diagram}, we see that there is some $\sigma',\tau'$ so that there is a solid directed path from $\sigma'$ to $\sigma$ and from $\tau$ to $\tau'$ so that  
    Lemma \ref{prop: cogent not solid arrow wcomp} ensures that there is an $A\subseteq \F$ so that $A\in \sigma'(\F)\smallsetminus \tau'(\F)$. It follows that $A\in \sigma(\F)\smallsetminus \tau(\F)$.  
\end{proof}

\FloatBarrier
\end{document}